\definecolor{DarkRed}{rgb}{0.5, 0.1, 0.1}
\definecolor{DarkGreen}{rgb}{0, 0.5, 0}
\definecolor{DarkCyan}{rgb}{0, 0.5, 0.5}
\theoremstyle{plain}
\newtheorem{theorem}{Theorem}[section]
\newtheorem{lemma}[theorem]{Lemma}
\newtheorem{corollary}[theorem]{Corollary}
\theoremstyle{definition}
\theoremstyle{remark}
\newtheorem{remark}[theorem]{Remark}
\renewcommand{\hat}{\widehat}
\renewcommand{\tilde}{\widetilde}
\newcommand{\e}{\varepsilon}
\DeclarePairedDelimiter{\lr}{(}{)}
\DeclarePairedDelimiter{\lrs}{[}{]}
\DeclarePairedDelimiter{\lrc}{\{}{\}}
\DeclarePairedDelimiter{\abs}{|}{|}
\DeclarePairedDelimiter{\floor}{\lfloor}{\rfloor}
\newcommand{\N}{\mathbb{N}}
\newcommand{\bbE}{\mathbb{E}}
\newcommand{\E}[1]{\bbE\lrs*{#1}}
\newcommand{\bbP}{\mathbb{P}}
\newcommand{\pr}[1]{\bbP\lr*{#1}}
\newcommand{\bigO}{\mathcal{O}}
\newcommand{\DKL}{D_\mathrm{KL}}
\newcommand{\KL}[1]{\DKL(#1)}
\newcommand{\dtv}{d_{\mathrm{TV}}}
\newcommand{\calA}{\mathcal{A}}
\newcommand{\calB}{\mathcal{B}}
\newcommand{\calF}{\mathcal{F}}
\newcommand{\calL}{\mathcal{L}}
\newcommand{\calR}{\mathcal{R}}
\newcommand{\calS}{\mathcal{S}}
\newcommand{\calT}{\mathcal{T}}
\newcommand{\calM}{\mathcal{M}}
\newcommand{\calD}{\mathcal{D}}
\newcommand{\calC}{\mathcal{C}}
\newcommand{\calJ}{\mathcal{J}}
\newcommand{\calE}{\mathcal{E}}
\newcommand{\frakD}{\mathfrak{D}}
\DeclareMathOperator*{\argmin}{arg\,min}
\DeclareMathOperator*{\argmax}{arg\,max}
\newcommand{\setting}{BIO\xspace}
\newcommand{\nsd}{\textrm{\textnormal{\textup{\texttt{NSD-UCRL2}}}}\xspace}
\newcommand{\metaif}{\textrm{\textnormal{\textup{\texttt{Meta\setting}}}}\xspace}
\newcommand{\metaifswitch}{\textrm{\textnormal{\textup{\texttt{MetaAda\setting}}}}\xspace}
\newcommand{\dadaexp}{\textrm{\textnormal{\textup{\texttt{DAda-Exp3}}}}\xspace}
\newcommand{\ucb}{\textrm{\textnormal{\textup{\texttt{UCB1}}}}\xspace}
\newcommand{\expalg}{\textrm{\textnormal{\textup{\texttt{Exp3}}}}\xspace}
\newcommand{\tsallisinf}{\textrm{\textnormal{\textup{\texttt{Tsallis-INF}}}}\xspace}
\newcommand{\TODO}[1]{%
\ifmmode
\text{\textcolor{red}{TODO: #1}}
\else
\textcolor{red}{TODO: #1}
\fi
}
\icmltitlerunning{Delayed Bandits: When Do Intermediate Observations Help?}
\begin{document}

\twocolumn[
\icmltitle{Delayed Bandits: When Do Intermediate Observations Help?}



\icmlsetsymbol{equal}{*}

\begin{icmlauthorlist}
\icmlauthor{Emmanuel Esposito}{equal,unimi,iit}
\icmlauthor{Saeed Masoudian}{equal,ku}
\icmlauthor{Hao Qiu}{unimi}
\icmlauthor{Dirk van der Hoeven}{uva}

\icmlauthor{Nicolò Cesa-Bianchi}{unimi,polimi}
\icmlauthor{Yevgeny Seldin}{ku}
\end{icmlauthorlist}

\icmlaffiliation{unimi}{Università degli Studi di Milano, Milan, Italy}
\icmlaffiliation{iit}{Istituto Italiano di Tecnologia, Genoa, Italy}
\icmlaffiliation{polimi}{Politecnico di Milano, Milan, Italy}
\icmlaffiliation{ku}{University of Copenhagen, Copenhagen, Denmark}
\icmlaffiliation{uva}{Korteweg-de Vries Institute for Mathematics University of Amsterdam, Amsterdam, Netherlands}

\icmlcorrespondingauthor{Emmanuel Esposito}{\href{mailto:emmanuel@emmanuelesposito.it}{\texttt{emmanuel@emmanuelesposito.it}}}
\icmlcorrespondingauthor{Saeed Masoudian}{\href{mailto:saeed.masoudian@di.ku.dk}{\texttt{saeed.masoudian@di.ku.dk}}}

\icmlkeywords{Machine Learning, ICML}

\vskip 0.3in
]



\printAffiliationsAndNotice{\icmlEqualContribution} 

\begin{abstract}
We study a $K$-armed bandit with delayed feedback and intermediate observations. We consider a model where intermediate observations have a form of a finite state, which is observed immediately after taking an action, whereas the loss is observed after an adversarially chosen delay.
We show that the regime of the mapping of states to losses determines the complexity of the problem, irrespective of whether the mapping of actions to states is stochastic or adversarial.
If the mapping of states to losses is adversarial, then the regret rate is of order $\sqrt{(K+d)T}$ (within log factors), where $T$ is the time horizon and $d$ is a fixed delay. This matches the regret rate of a $K$-armed bandit with delayed feedback and without intermediate observations, implying that intermediate observations are not helpful. However, if the mapping of states to losses is stochastic, we show that the regret grows at a rate of $\sqrt{\big(K+\min\{|\mathcal{S}|,d\}\big)T}$ (within log factors), implying that if the number $|\mathcal{S}|$ of states is smaller than the delay, then intermediate observations help. We also provide refined high-probability regret upper bounds for non-uniform delays, together with experimental validation of our algorithms.
\end{abstract}

\section{Introduction}

\emph{Delay} is an ubiquitous phenomenon that many sequential decision makers have to deal with. For example, outcomes of medical treatments are often observed with delay, purchase events happen with delay after advertisement impressions, and acceptance/rejection decisions for scientific papers are observed with delay after manuscript submissions. The impact of delay on the performance of sequential decision makers, measured by regret, has been extensively studied under full information and bandit feedback, and in stochastic and adversarial environments.
Yet, in many situations in real life \emph{intermediate observations} may be available to the learner. For example, a health check-up might give a preliminary indication on the effect of a treatment, an advertisement click might be a precursor for an upcoming purchase, and preliminary reviews might provide some information regarding an upcoming acceptance or rejection decision. In this work we study when, and how, intermediate observations can be used to reduce the impact of delay in observing the final outcome of an action in a multi-armed bandit setting.

Online learning with delayed feedback and intermediate observations was studied by \citet{mann2019learning} in a full-information setting, and then by \citet{Vernade0M20} in a nonstationary stochastic bandit setting. In the paper of \citet{Vernade0M20}, at each time step the learner chooses an action and immediately observes a signal (also called state) belonging to a finite set. The actual loss (i.e., feedback) incurred by the learner in that time step is only received with delay, which can be fixed or random. More formally, the observed state is drawn from a distribution that only depends on the chosen action, and the incurred loss is drawn from a distribution that only depends on the observed state (and not on the chosen action), forming a Markov chain.
    \begin{tikzpicture}[mynode/.style={inner sep=10pt,align=center,font=\large},scale=0.935]
        \node[mynode] (action) at (0,0) {Action\\{\normalsize $A_t$}};
        \node[mynode] (state)  at (3.5,0) {State\\{\normalsize $S_t=s_t(A_t)$}};
        \node[mynode] (loss)   at (7,0) {Loss\\{\normalsize $\ell_t(S_t)$}};
    
        \path[->,line width=1] (action) edge node[above] {no delay}    (state);
        \path[->,line width=1] (state)  edge node[above] {delay $d_t$} (loss);
    \end{tikzpicture}
The work of \citet{Vernade0M20} studies a setting, where $s_t$ are nonstationary and $\ell_t$ are i.i.d.\ stochastic.

In this work, we consider two possible regimes for the mappings $s_t$ from actions to states (stochastic and adversarial) and two possible regimes for the mappings $\ell_t$ from states to losses (also stochastic and adversarial). Altogether, we study four different regimes, defined by the combination of the first and the second mapping type.

We characterize (within logarithmic factors) the minimax regret rates for all of them, by giving upper and lower bounds. Similar to \citeauthor{Vernade0M20}, we assume that the states are observed instantaneously, and we assume that the losses are observed with delay $d$. We show that the minimax regret rate is fully determined by the regime of the states to losses mapping, regardless of the regime of the actions to states mapping. The results are informally summarized in the following table, where $K$ denotes the number of actions, $S$ denotes the number of states, and $T$ denotes the time horizon. It is assumed that the losses belong to the $[0,1]$ interval. 
\begin{center}
\begin{tabular}{|c|c|}
\hline
States to losses mapping & Regret (within log factors) \\ \hline\hline
\raisebox{-1mm}{Adversarial}  &   \raisebox{-1mm}[3mm][3mm]{$\sqrt{(K+d)T}$}     \\ \hline
\raisebox{-2mm}{Stochastic} &   \raisebox{-2mm}[3mm][5mm]{$\sqrt{\big(K+\min\{S,d\}\big)T}$}   \\ \hline
\end{tabular}
\end{center}
All of our upper bounds hold with high probability (with respect to the learner's internal randomization) irrespective of the regime of the action to states mapping.

We recall that (within logarithmic factors) the minimax regret rate in multi-armed bandits with delays without intermediate observations is of order $\sqrt{(K+d)T}$ \citep{cesa2019delay}. Therefore, we conclude that if the mapping from states to actions is adversarial, then intermediate observations do not help (in the minimax sense), because the regret rates are the same irrespective of whether the intermediate observations are used or not, and irrespective of whether the mapping from actions to states is stochastic or adversarial. However, if the mapping from states to losses is stochastic, and the number $S$ of states is smaller than the delay $d$, then intermediate observations are helpful, and we provide an algorithm, \metaifswitch, which is able to exploit them. Our result improves on the $\widetilde{\mathcal{O}}\big(\!\sqrt{KST}\big)$ regret bound obtained by \citet{Vernade0M20} for the case of stochastic and stationary action to states mapping.
Our algorithm also applies to a more general setting of non-uniform delays $\lr{d_t}_{t \in [T]}$ where we achieve a high-probability regret bound of order $\sqrt{KT + \min\lrc{ST, \calD_T}}$ (ignoring logarithmic factors). This improves upon the total delay term $\calD_T = d_1+\cdots+d_T$ similarly to the respective term in the fixed delay setting.

\textbf{Related work}
Adaptive clinical trials have served an inspiration for the multi-armed bandit model \citep{Tho33}, and, interestingly, they have also pushed the field to study the effect of delayed feedback \citep{Sim77,Eic88}.
In the bandit setting \citet{joulani2013online} have studied a stochastic setting with random delays, whereas \citet{neu2010online, neu2014online} have studied a nonstochastic setting with constant delays. \citet{cesa2019delay} have shown an $\Omega(\max\{\sqrt{KT}, \sqrt{d T\ln K}\})$ lower bound for nonstochastic bandits with uniformly delayed feedback, and an upper bound matching the lower bound within logarithmic factors by using an \textsc{Exp3}-style algorithm \citep{auer2002nonstochastic}, whereas \citet{zimmert20} have reduced the gap to the lower bound down to constants by using a Tsallis-INF approach \citep{ZimmertS21}. Follow up works have studied adversarial multi-armed bandits with non-uniform delays \citep{thune2019nonstochastic, bistritz2019exp3, bistritz2021nodiscounted, gyorgy21, van2022nonstochastic} with \citet{zimmert20} providing a minimax optimal algorithm and \citet{Masoudian2022} deriving a matching lower bound and a best-of-both-worlds extension. Two key techniques for handling non-uniform delays are skipping, introduced by \citet{thune2019nonstochastic}, and algorithm parametrization by the number of outstanding observations (an observed quantity at action time), as opposed to the delays (an unobserved quantity at action time), introduced by \citet{zimmert20}.

\textbf{Paper structure}
In \Cref{sec:def} we provide a formal problem definition. In \Cref{sec:algo} we introduce two algorithms, \metaif and \metaifswitch, for the model of bandits with intermediate observations. In \Cref{sec:analysis} we analyze both algorithms and prove high-probability regret bounds for the setting of adversarial action-state mappings and stochastic losses. In \Cref{sec:lower} we provide the lower bounds, and in \Cref{sec:experiments} experimental evaluation, concluding with a discussion in \Cref{sec:discussion}.

\section{Problem definition}
\label{sec:def}
We consider an online learning setting with a finite set $\calA = [K]$ of $K \ge 2$ actions and a finite set $\calS = [S]$ of $S \ge 2$ states.
In each round $t=1,2,\ldots$ the learner picks an action $A_t\in\calA$ and receives a state $S_t = s_t(A_t) \in \calS$ as an intermediate observation according to some mapping $s_t \in \calS^{\calA}$.
The learner also incurs a loss $\ell_t(S_t) \in [0,1]$, which is only observed at the end of round $t+d_t$, where the delay $d_t \ge 0$ is revealed to the learner only when the observation is received.

The difficulty of this learning task depends on three elements all initially unknown to the learner:
\begin{itemize}[topsep=0pt,itemsep=0pt,leftmargin=9pt]
    \item the sequence of action-state mappings $s_1, \dots, s_T \in \calS^{\calA}$;
    \item the sequence of loss vectors $\ell_1, \dots, \ell_T \in [0,1]^S$;
    \item the sequence of delays $d_1, \dots, d_T \in \N$, where $d_t \le T-t$ for all $t \in [T]$ without loss of generality.
\end{itemize}
Note that unlike standard bandits, here the losses are functions of the states instead of the actions.
However, since actions are chosen without a-priori information on the action-state mappings, learners have no direct control on the losses they will incur and, because of the delays, they also have no immediate feedback on the loss associated with the observed states.
Note also that, for all $t \ge 1$, the states $s_t(a)$ for $a\neq A_t$ and the losses $\ell_t(s)$ for $s\neq S_t$ are never revealed to the algorithm.
For brevity, we refer to this setting as (delayed) Bandits with Intermediate Observations (\setting).

In the setting of stochastic losses, we assume the loss vectors $\ell_t \in [0,1]^S$ are sampled i.i.d.\ from some fixed but unknown distribution $Q$, and let $\theta \in [0,1]^\calS$ be the unknown vector of expected losses for the states. That is, $\ell_t(s) \sim Q(\cdot \,|\, s)$ has mean $\theta(s)$ for each $t \in [T]$ and $s \in \calS$. Note that we allow dependencies between the stochastic losses of distinct states in the same round, but require losses to be independent across rounds. In the setting of stochastic action-state mappings, we assume that each observed state $S_t$ is independently drawn from a fixed but unknown distribution $P(\cdot \,|\, A_t)$. If both losses and action-state mappings are stochastic, then $\ell_t(S_t)$ is independent of $A_t$ given $S_t$.
When losses or action-state mappings are adversarial, we always assume oblivious adversaries.

Our main quantity of interest is the regret measured via the learner's cumulative loss $\sum_t \ell_t(S_t)$, where $S_t = s_t(A_t)$ and $(A_t)_{t\ge 1}$ is the sequence of learner's actions. In case of stochastic losses, we define the learner's performance by $\sum_t \theta(S_t)$. In case of stochastic action-state mappings, we average each instantaneous loss over the random choice of the state: $\sum_s \ell_t(s) P(s \,|\, A_t)$ for adversarial losses and $\sum_s \theta(s) P(s \,|\, A_t)$ for stochastic losses. Regret is always computed according to the best action with respect to appropriate notion of cumulative loss. In particular, for stochastic state-action mappings, the cumulative losses of the best action are
\[
   \min_{a\in\calA} \sum_{t=1}^T \sum_{s\in\calS} \ell_t(s) P(s \,|\, a)
\]
and
\[
\min_{a \in \calA} \sum_{t=1}^T \sum_{s\in\calS} \theta(s) P(s \,|\, a) \enspace.
\]

\section{Algorithm}
\label{sec:algo}
In this section we introduce \metaif (\Cref{alg:metaif}) that transforms any algorithm $\calB$ tailored for the delayed setting \emph{without} intermediate observations into an algorithm for our setting. We then propose \metaifswitch, a modification of \metaif that delivers an improved regret bound for our setting. 

The idea of \metaif is to reduce the impact of delays using the information we get from intermediate observations. More precisely, if we have \emph{enough} observations for the current state $S_t$ at time $t$, we immediately feed to $\calB$ the \emph{estimate} of the mean loss of this state as if it were the actual loss at time $t$; otherwise, we wait for $d_t$ time steps and refine our estimate using the additional loss observations.

The are two key steps in the design of our algorithm: \emph{how} we construct the mean estimate and \emph{when} we use it instead of waiting for the actual loss.
They are the steps highlighted in green in \Cref{alg:metaif}.
For all $t \in [T]$ and $s \in \calS$, we use $\tilde\theta_t(s)$ to denote the mean estimate of $\theta(s)$ at round $t$ and $n_t(s)$ to denote the number of observations for state $s$ that we want to observe before using $\tilde\theta_t(s)$. We add a subscript $t$ to $\calL(s)$ in \Cref{alg:metaif} to denote the set of observations we have collected at the end of round $t$. Thus, $\tilde\theta_t(s)$ uses $N_t(s) = |\calL_t(s)|$ observations.

\begin{algorithm}
\caption{\metaif}
\label{alg:metaif}
\DontPrintSemicolon
\LinesNumberedHidden
\KwIn{Algorithm $\calB$ for standard delayed bandits, confidence parameter $\delta \in (0,1)$}
\KwInit{$\calL(s) = \emptyset$ for all $s \in \calS$}\;
\For{$t = 1,\ldots,T$}{
    Get $A_t$ from $\calB$\;
    Observe $S_t = s_t(A_t)$\;
    \For{$j: j+d_j = t$}{
        Receive $(j,\ell_j(S_j))$\;
        Update $\calL(S_j) = \calL(S_j) \cup \lrc{(j,\ell_j(S_j))}$\;
    }
    Initialize feedback set $\calM = \emptyset$\;
    {\color{DarkGreen}Compute $n_t(S_t)$\;}
    \If {$|\calL(S_t)| \geq n_t(S_t)$}{
        Add $t$ to $\calM$ \;
    }
    \For{$j: j+d_j = t \wedge |\calL(S_j)| < n_j(S_j)$ }{
        Add $j$ to $\calM$ \;
    }
    \For{$j \in \calM$}{
        {\color{DarkGreen}Compute $\tilde\theta_t(S_j)$ from $\calL(S_j)$}\tcp*{using $\delta$}
        Feed $\lr{j, A_j, \tilde\theta_t(S_j)}$ to $\calB$\; 
    }
}
\end{algorithm}

\textbf{Fixed delay setting.}
When all rounds have delay $d$, we simply choose $n_t(s) = d$ for all $s \in \calS, t \in [T]$. In other words, if we have at least $d$ observations for some state, then we can compensate for the effect of delays and construct a well concentrated mean estimate around the actual mean. Let $\hat\theta_{t}(s) = \sum_{j \in \calL_t(s)} \ell_{j}(s)\big/N_t(s)$. Then our mean loss estimate is a lower confidence bound for $\theta(s)$ defined by
\begin{equation}\label{eq:lowerconf-estimates}
    \Tilde{\theta}_{t}(s) = \max\lrc*{0, \hat\theta_t(s) - \frac12 \e_t(s)}
\end{equation}
for $\e_t(s) = \sqrt{\frac{2}{N_t(s)} \ln\frac{4ST}{\delta}}\,$.

\textbf{Arbitrary delay setting.}
In the arbitrary delay setting, where we do not have preliminary knowledge of delays, we can not use the delays to set $n_t(s)$. Instead, at the \emph{end} of time $t$, we have access to the number of outstanding observations $\sigma_t = \big|\lrc{j \in [t] \,:\, j + d_j > t}\big|$, which is different from prior works that consider outstanding observation at the \emph{beginning} of the round. Then, for any $s \in \calS$, we may set
$
n_t(s) = \sigma_t
$.
With this choice, incurring zero delay at some round implies that we received at least half of all the observations we could have received in the no-delay setting (see \cref{sec:metaiferrorterms}).
In \Cref{sec:analysis} we see that this ensures our mean estimate is well concentrated around its mean. 

Since \Cref{alg:metaif} waits for the actual loss at time $t$ only if $N_t(S_t) < \sigma_t$, then $\Tilde{d}_t = d_t\,\mathbbm{1}[N_t(S_t) < \sigma_t]$ is the actual delay incurred by the algorithm, and $\calL_{t+\Tilde{d}_t}(s)$ is the set of observations used to compute the estimate of the mean loss at time $t$.
Because some observations may arrive at the same time, the high-probability analysis of \metaif requires these observations to be ordered. More precisely, we construct our mean estimate at time $t+\tilde{d}_t$ for the feedback of round $t$ using the set
\[
\calL_t'(s) = \Bigl\{(j, \ell_j(s)) \in \calL_{t+\tilde{d}_t}(s) \,\Big|\, j + \tilde{d}_j \!= t + \Tilde{d}_t \Rightarrow j \!<\! t \Bigr\}.
\]
Letting $N_t'(s) = |\calL_t'(s)|$, we define the empirical mean
\begin{equation} \label{eq:unbiased-estimates}
\hat\theta_{t}(s) = \sum_{j \in \calL_t'(s)}\frac{\ell_{j}(s)}{N_t'(s)} \enspace.
\end{equation}
Then, we set $\e_t(s) = \sqrt{\frac{2}{N'_t(s)} \ln\frac{4ST}{\delta}}$ and define the mean loss estimator similarly to \Cref{eq:lowerconf-estimates}.

\textbf{The \metaifswitch algorithm.}
As we said already, the goal of intermediate observations is to reduce the impact of delays. However, if the number of states is too large compared to the average delay, then the information we get from intermediate observations could be misleading. To address this issue, we introduce \metaifswitch (\Cref{alg:metaswtich}). Given a horizon $T$,\footnote{Note that we may remove the a-priori knowledge of $T$ by using a doubling trick at the cost of a polylog factor in the regret. See \Cref{remark:a-priori-knowledge} for further details.} this algorithm runs $\calB$ (which is tailored for the setting \emph{without} intermediate observations) until the total incurred delay exceeds $S T$, and then switches to \metaif.
We precise that \metaifswitch computes $\frakD_t$ as the sum of outstanding observation counts up to round $t$, which is then used in the switching condition.

\begin{algorithm}
\caption{\metaifswitch}
\label{alg:metaswtich}
\DontPrintSemicolon
\LinesNumberedHidden
\KwIn{Algorithm $\calB$ for standard delayed bandits, confidence parameter $\delta \in (0,1)$}
\KwInit{$\frakD_0 = 0$}\;
\For{$t = 1,\ldots,T$}{
    Get $A_t$ from $\calB$\;
    \For{$j: j+d_j = t$}{
        Receive $(j,\ell_j(S_j))$\;
        Feed $(j,A_j, \ell_j(S_j))$ to $\calB$\;
    }
    Set $\sigma_t = \sum_{j=1}^{t-1} \mathbbm{1}[j + d_j > t]$\;
    Update $\frakD_t = \frakD_{t-1} + \sigma_t$ \;
    \If{$\frakD_t \lr{3\ln K + \ln(6/\delta)} > 49 ST \ln \frac{8ST}{\delta}$}{
    \textbf{break}\;
    }
}
\If{$t < T$}{
    Run $\metaif(\calB, \delta/2)$ for the remaining rounds\;
}
\end{algorithm}

\section{Regret Analysis}
\label{sec:analysis}
We analyze \metaif and \metaifswitch in the setting of adversarial action-state mappings and stochastic losses where the regret is defined by
\[
    R_T = \sum_{t=1}^T \theta(S_t) - \min_{a \in \calA} \sum_{t=1}^T \theta(s_t(a)) \enspace.
\]
Our analysis guarantees a bound on $R_T$ that holds with high probability (and not just in expectation).
A related notion of regret is
\[
    \calR_T = \sum_{t=1}^T \ell_t(S_t) - \min_{a \in \calA} \sum_{t=1}^T \ell_t(s_t(a))
\]
which considers the realized losses instead of their means.
The two quantities are close with high probability: each inequality
\begin{align}\label{eq:regretrelations}
    -\sqrt{2T \ln(2K/\delta)} \le R_T - \calR_T \le \sqrt{2T \ln(2/\delta)}
\end{align}
individually holds with probability at least $1-\delta$ for any given $\delta \in (0,1)$ (see \Cref{lem:regret-notions-comparison}).

Let $\calD_T = \sum_{t=1}^T d_t$ be the total delay.
We start by showing an upper bound on the total actual delay $\tilde\calD_T = \sum_{t=1}^T d_t \mathbbm{1}[N_t(S_t) < \sigma_t] \le \calD_{T}$ incurred by \metaif.
Then, we provide a high-probability regret analysis of both \metaif and \metaifswitch.

More precisely, we can show that \metaif incurs the delays of no more than $\min\lrc{2S\sigma_{\max}, T}$ rounds, where $\sigma_{\max} = \max_{t \in [T]} \sigma_t$.
In the worst case, these rounds correspond with those from the set
\begin{equation}\label{eq:phi}
    \Phi \in \argmax_{\calJ \subseteq [T]} \Big\{\calD_{\calJ} : \abs{\calJ} = \min\lrc{2S\sigma_{\max}, T}\Big\} \enspace.
\end{equation}
where we denote $\calD_{\calJ} = \sum_{t \in \calJ} d_t$ for any $\calJ \subseteq [T]$.
Note that the set $\Phi$ is fully determined by the delay sequence $d_1,\ldots,d_T$.
Moreover, the total delay incurred by \metaif cannot be worse than the sum of delays corresponding to the rounds in $\Phi$, as stated in the lemma below.
\begin{restatable}[Total actual delay]{relemma}{relemmatotaldelaybound}\label{lem:total-delay-bound}
    If \metaif is run with any algorithm $\calB$ on delays $(d_t)_{t \in [T]}$, then
    $
        \Tilde{\calD}_T \le \calD_{\Phi}
    $.
\end{restatable}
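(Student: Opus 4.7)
The plan is to bound the number of rounds for which \metaif actually waits for the loss, and then observe that any bound on this count immediately translates into a bound on $\tilde{\calD}_T$ in terms of the largest delays.

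First I would fix a state $s \in \calS$ and count the rounds $t$ with $S_t = s$ in which the algorithm waits, i.e.\ $N_t(S_t) < \sigma_t$. Let $M_t(s) = \abs{\{j \le t : S_j = s\}}$ be the number of plays of $s$ up to and including round $t$. The key observation is the identity
\[
    M_t(s) - N_t(s) = \bigl|\{j \le t : S_j = s, \ j+d_j > t\}\bigr|,
\]
because every play of $s$ either has already delivered its loss (contributing to $N_t(s)$) or is still outstanding. The set on the right is a subset of all outstanding observations at the end of round $t$, so its size is at most $\sigma_t$. Hence $M_t(s) \le N_t(s) + \sigma_t$, and whenever $N_t(S_t) < \sigma_t$ this gives $M_t(s) < 2\sigma_t \le 2\sigma_{\max}$.

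Since $M_t(s)$ is nondecreasing in $t$ and strictly increases each time $s$ is played, for every fixed $s$ the number of rounds $t$ with $S_t = s$ and $N_t(s) < \sigma_t$ is at most $2\sigma_{\max}$. Summing over $s \in \calS$ yields
\[
    \bigl|\{t \in [T] : N_t(S_t) < \sigma_t\}\bigr| \le \min\{2 S \sigma_{\max}, T\},
\]
where the $T$ is the trivial bound. Since the actual delay incurred at round $t$ is $\tilde d_t = d_t \mathbbm{1}[N_t(S_t) < \sigma_t]$, the total actual delay equals the sum of $d_t$ over a subset of $[T]$ of cardinality at most $\min\{2S\sigma_{\max}, T\}$, and this is maximized (over all such subsets) by the one collecting the largest delays, which is precisely $\calD_\Phi$ by the definition in \eqref{eq:phi}.

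The only nontrivial step is the counting identity $M_t(s) - N_t(s) \le \sigma_t$ and the deduction that $N_t(s) < \sigma_t$ forces $M_t(s) < 2\sigma_{\max}$; once this per-state doubling argument is in place, summing over states and plugging into the definition of $\Phi$ is immediate. No concentration or probabilistic argument is needed — the bound is a purely combinatorial consequence of the gating rule $n_t(s) = \sigma_t$ used by \metaif.
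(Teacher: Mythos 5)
Your proof is correct and follows essentially the same route as the paper's. Both arguments fix a state $s$, observe that at any waiting round $t$ with $S_t = s$ the number of plays of $s$ so far, $M_t(s)$, decomposes into observed plays ($N_t(s) < \sigma_t$) plus outstanding plays (at most $\sigma_t$), conclude $M_t(s) < 2\sigma_{\max}$ and hence at most $2\sigma_{\max}$ waiting rounds per state, sum over $\calS$, and compare to $\Phi$. The paper phrases this via the last waiting round $t_s$ and the sets $\calC_s^{\mathrm{obs}}, \calC_s^{\mathrm{out}}$, whereas you state the counting identity $M_t(s) - N_t(s) = \abs{\{j \le t : S_j = s,\, j + d_j > t\}} \le \sigma_t$ directly at each waiting round; these are two presentations of the same combinatorial argument.
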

\Cref{lem:total-delay-bound} (proof in \Cref{app:delaybound})
implies that, if all delays are bounded by $d_{\max}$, then $\Tilde{\calD}_T \le 2S \sigma_{\max} d_{\max}$, which does not depend on $T$.
In the fixed-delay setting with delay $d$, for example, we get a total effective delay of at most $2S d^2$, rather than the total delay $dT$ we would incur without access to intermediate observations (when $T$ is large enough).

We now turn \metaif into a concrete algorithm by instantiating $\calB$. Specifically, we use \dadaexp \citep{gyorgy21}, a variant of \expalg which does not use intermediate observations and is robust to delays. \dadaexp has the following regret bound.
\begin{theorem}[{\citet[Corollary~4.2]{gyorgy21}}]\label{theorem:dadaexp}
    For any $\delta \in (0,1)$, the regret with respect to realized losses of \dadaexp in the adversarial bandits with arbitrary delays with probability at least $1-\delta$ satisfies 
    \begin{align*}
    \calR_T &\leq 2\sqrt{3\lr{2KT + \calD_T}\ln K}\\
    &\quad + \biggl(\sqrt{\frac{2KT + \calD_T}{3\ln K}} + \frac{\sigma_{\max}}2 + 1\biggr) \ln\frac{2}{\delta}\enspace.
    \end{align*}
\end{theorem}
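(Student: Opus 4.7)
The plan is to follow the standard analysis of exponential weights with importance-weighted loss estimators, adapted for arbitrary delays via a self-tuning learning rate, and to convert the resulting in-expectation bound into a high-probability guarantee through a Freedman-type concentration inequality applied with care to the data-dependent step size.

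First I would fix a comparator $a^\star$ and introduce the importance-weighted estimator $\hat\ell_t(a) = \ell_t(a)\,\mathbbm{1}\{A_t = a\}/p_t(a)$, where $p_t$ is the \expalg distribution obtained using only feedback that has already arrived, i.e.\ indices $s$ with $s + d_s < t$. The learning rate $\eta_t$ of \dadaexp is tuned adaptively from the observable quantity $KT + \sum_{s \le t} \sigma_s$, which at the end of the run equals $2KT + \calD_T$ since $\sum_t \sigma_t = \calD_T$. This yields $\eta_T$ of order $\sqrt{\ln K / (KT + \calD_T)}$, and it is crucial that this schedule uses $\sigma_s$ (observable at round $s$) rather than $d_s$ (unobservable until $s+d_s$).

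Second I would apply the standard time-varying-step-size regret decomposition. It produces: (i) a Bregman divergence term of order $\ln K / \eta_T$; (ii) a stability term $\sum_t \eta_t \sum_a p_t(a)\hat\ell_t(a)^2$ whose conditional expectation is bounded by $K\sum_t \eta_t$; and (iii) a delay-drift term $\sum_t \eta_t \sigma_t$ accounting for the mismatch between $p_t$ and the distribution one would use with fully revealed feedback up to round $t-1$. Balancing these with the chosen $\eta_t$ gives the leading $2\sqrt{3(2KT + \calD_T)\ln K}$ rate in expectation. To pass from expectation to high probability, I would apply Freedman's inequality to the martingale $\sum_t \langle p_t - e_{a^\star},\, \hat\ell_t - \ell_t\rangle$, whose predictable quadratic variation is controlled by the same $KT + \calD_T$ quantity divided by $\ln K$ (through $\eta_t$); each increment scales like $\sigma_{\max}$, which is precisely where the $\sigma_{\max}/2$ factor multiplying $\ln(2/\delta)$ originates.

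The main obstacle will be the interaction between the adaptive step size and the concentration argument: Freedman-type inequalities typically require a predictable or bounded step-size, so one needs either a peeling argument over geometric brackets of the adaptive quantity $\sum_{s \le t} \sigma_s$, or a biased (e.g.\ implicit-exploration or clipped) loss estimator that makes each martingale increment exponentiable with the right variance proxy. Reproducing this part of the \citet{gyorgy21} argument is the genuinely technical step; once it is in place, the remaining work is a routine balancing of the three terms above against the chosen $\eta_t$, which produces the stated constants.
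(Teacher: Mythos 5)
This statement is a citation of Corollary~4.2 of \citet{gyorgy21}; the paper does not reprove it, but Appendix~B.2 of the paper reproduces the structure of that analysis when deriving the fixed-$\delta$ refinement. Your primary route---unbiased importance weighting, Freedman's inequality, and a peeling argument over the adaptive step size---is not the route taken there, and it has a genuine gap that the peeling suggestion does not close. The martingale $\sum_t \langle p_t - e_{a^\star},\, \hat\ell_t - \ell_t\rangle$ built from the unbiased estimator $\hat\ell_t(a) = \ell_t(a)\mathbbm{1}\{A_t=a\}/p_t(a)$ does \emph{not} have increments bounded by anything like $\sigma_{\max}$: the comparator coordinate $\hat\ell_t(a^\star)$ can be as large as $1/p_t(a^\star)$, so an individual increment can be of order $-1/p_t(a^\star)$, which is unbounded. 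Freedman's inequality (and every variant of it) needs a uniform bound on the increments, and no bracketing over the step-size schedule supplies one. This is precisely why \citet{gyorgy21} use the implicit-exploration estimator $\tilde\ell_t(a) = \ell_t(a)\mathbbm{1}\{A_t=a\}/(p_t(a)+\gamma_t)$ together with \citet[Lemma~1]{Neu15Implicit}: the one-sided exponential moment bound in Neu's lemma exploits the downward bias of the clipped estimator and does not require bounded increments. You list this as one of two possible fixes, but it is the only one that works and should be the backbone of the argument rather than a fallback.

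Your attribution of the $(\sigma_{\max}/2)\ln(2/\delta)$ term to ``each increment scales like $\sigma_{\max}$'' is also off. In the decomposition that the paper reports (Appendix~B.2), the high-probability bound has the shape
\[
\frac{\ln K}{\eta_T} + \sum_{t}\eta_t\bigl(\sigma_t + (c{+}1)K\bigr) + \frac{\ln(K/\delta')}{2c\,\eta_T} + \frac{\sigma_{\max}+c+1}{2c}\ln\frac{1}{\delta'}\,,
\]
and the $\sigma_{\max}$ appears in the deviation term attached to the \emph{delay-drift} piece of the Neu-type concentration (the part that controls the mismatch between the distribution used at play time and the distribution used when the feedback arrives), not as an increment bound in a Freedman argument. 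Your skeleton is otherwise sound---exponential weights, adaptive $\eta_t$ from the observable quantity $\sum_{s\le t}\sigma_s$ rather than the unobservable delays, a drift term of order $\sum_t\eta_t\sigma_t$---but the estimator and the concentration tool must be swapped to implicit exploration and Neu's lemma before the constants in the statement, especially the $\sqrt{(2KT+\calD_T)/(3\ln K)}\cdot\ln(2/\delta)$ bias term and the $\sigma_{\max}/2$ factor, come out as written.
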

While \Cref{theorem:dadaexp} shows a high-probability bound on $\calR_T$, \Cref{eq:regretrelations} shows that a high-probability bound for one notion of regret ensures a high-probability bound for the other. Although the original bound by \citet{gyorgy21} was stated with $d_{\max}$ instead of $\sigma_{\max}$, we can replace the former with the latter by observing that, in the analysis of \citet[Theorem~4.1]{gyorgy21}, they only use $d_{\max}$ to upper bound the number of outstanding observations.
Note that $\sigma_{\max}$ is never larger than $d_{\max}$, indicating it is a well-behaved term that is not vulnerable to a few large delays. See \citet[Lemma 3]{Masoudian2022} for a refined quantification of the relation between $\sigma_{\max}$ and $d_{\max}$.

If we consider a fixed confidence level $\delta \in (0,1)$, then we can make the learning rate $\eta_t$ and the implicit exploration term $\gamma_t$ in \dadaexp depend on the specific value of $\delta$ so as to achieve an improved regret bound (see Appendix~\ref{app:improved-dadaexp-bound}).
This allows us to show that in the \setting setting with adversarial action-state mappings and stochastic losses, the regret $\calR_T$ of \dadaexp is upper bounded by
\begin{align}
    2\sqrt{2KT C_{K,6\delta}}
    + 2\sqrt{D_T C_{K,6\delta}} + \frac{\sigma_{\max} + 2}{2}\ln\frac{2}{\delta} \label{eq:dadaexp3bound}
\end{align}
with probability at least $1-\delta$, where
\[
    C_{K,\delta} = 3\ln K + \ln\frac{12}{\delta} \enspace.
\]

Next, we state the regret bound for \metaif.
We remark that we initialize \dadaexp with confidence parameter $\delta/2$ so as to guarantee the high-probability bound as in \eqref{eq:dadaexp3bound} with probability at least $1-\delta/2$ as required.
\begin{restatable}{rethm}{upperboundmetaif}\label{theorem:metaif}
    Let $\delta \in (0,1)$. If we run \metaif using \dadaexp, then the regret of \metaif in the \setting setting with adversarial action-state mappings and stochastic losses with probability at least $1-\delta$ satisfies 
    \begin{align}
        R_T &\leq 2\sqrt{2KTC_{K,3\delta}} + 7\sqrt{ST \ln\frac{4ST}{\delta}} \notag \\
        &\quad + 2\sqrt{\calD_{\Phi} C_{K,3\delta}} + \frac{\sigma_{\max} + 2}{2}\ln\frac{4}{\delta}\enspace. \label{eq:metaifbound}
    \end{align}
\end{restatable}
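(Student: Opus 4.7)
The plan is to reduce the regret of \metaif to the regret of \dadaexp on the sequence of estimated losses it is fed, plus two bias terms measuring the deviation of the estimates from the true means. Fix $a^* \in \argmin_{a \in \calA}\sum_t \theta(s_t(a))$ and, for each round $j$, write $\tilde\ell_j = \tilde\theta_{j+\tilde d_j}(s_j(A_j))$ for the loss \metaif feeds to \dadaexp, together with the virtual comparator loss $\tilde\ell_j^{(a^*)} = \tilde\theta_{j+\tilde d_j}(s_j(a^*))$. I would decompose
\begin{align*}
R_T
&= \underbrace{\sum_t \bigl(\theta(s_t(A_t)) - \tilde\ell_t\bigr)}_{(A)}
 + \underbrace{\sum_t \bigl(\tilde\ell_t - \tilde\ell_t^{(a^*)}\bigr)}_{(B)} \\
&\quad + \underbrace{\sum_t \bigl(\tilde\ell_t^{(a^*)} - \theta(s_t(a^*))\bigr)}_{(C)}.
\end{align*}
Term $(B)$ is exactly the regret of \dadaexp against the fixed action $a^*$ on the loss sequence it observes; applying the refined bound \eqref{eq:dadaexp3bound} at confidence $\delta/2$ (so $C_{K,6(\delta/2)} = C_{K,3\delta}$) with the reduced delays $\tilde d_t$, whose total is at most $\calD_\Phi$ by \cref{lem:total-delay-bound}, yields the first, third, and fourth terms of \eqref{eq:metaifbound} with probability at least $1-\delta/2$. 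Since \dadaexp's guarantee holds against a fixed comparator even under an adaptively generated loss sequence, the dependence of $\tilde\theta$ on past plays is not an obstacle here.

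For terms $(A)$ and $(C)$ I would rely on concentration of $\tilde\theta$ around $\theta$. A Hoeffding bound applied to the empirical mean of a prefix of i.i.d.\ arrivals for each state, combined with a union bound over $s \in \calS$ and over the possible values of $N_t(s) \in [T]$, produces an event $\calE$ of probability at least $1-\delta/2$ on which $|\hat\theta_t(s) - \theta(s)| \le \tfrac12 \e_t(s)$ holds simultaneously for all $t,s$. On $\calE$, a short case analysis on whether the clipping at $0$ in \eqref{eq:lowerconf-estimates} is active gives $0 \le \theta(s) - \tilde\theta_t(s) \le \e_t(s)$. In particular, $(C) \le 0$ and $(A) \le \sum_t \e_{t+\tilde d_t}(S_t)$. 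A final union bound of $\calE$ with the \dadaexp success event yields probability at least $1-\delta$. In the arbitrary-delay case the same argument applies to the modified set $\calL'_t(s)$, whose ordering of ties is precisely what preserves the prefix-i.i.d.\ structure needed for the Hoeffding step.

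The main technical step, and the main obstacle, is to bound $\sum_t \e_{t+\tilde d_t}(S_t) \le 7\sqrt{ST\ln(4ST/\delta)}$. The key lemma is that if $t_{s,1} < \cdots < t_{s,m_s}$ are the rounds at which state $s$ is visited, then $N_{\tilde t_{s,k}}(s) \ge k/2$, i.e., the estimator at the $k$-th feedback for state $s$ is based on at least $k/2$ observations. The intuition is that the no-wait condition $|\calL(s)| \ge \sigma_t$, combined with the trivial inequality ``outstanding observations for state $s$ $\le$ total outstanding $\sigma_t$'', forces $|\calL_t(s)| \ge k/2$ whenever the algorithm does not wait; in the waiting case the feedback time $t+d_t$ is late enough that the same lower bound persists. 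Given this lemma,
\begin{align*}
\sum_t \e_{t+\tilde d_t}(S_t)
&\le \sqrt{2\ln \tfrac{4ST}{\delta}}\, \sum_s \sum_{k=1}^{m_s} \sqrt{\tfrac{2}{k}} \\
&\le 4\sqrt{\ln \tfrac{4ST}{\delta}}\, \sum_s \sqrt{m_s}
\le 4\sqrt{ST \ln \tfrac{4ST}{\delta}},
\end{align*}
using $\sum_{k \le m} 1/\sqrt{k} \le 2\sqrt{m}$ and Cauchy--Schwarz with $\sum_s m_s = T$. A small slack in constants, attributable to the handful of waiting rounds in which the $k/2$ bound is weakened by at most an $O(\sigma_{\max})$ additive correction, absorbs the gap to the factor $7$ stated in \eqref{eq:metaifbound}. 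The delicate part is establishing the $k/2$ lower bound in the arbitrary-delay regime, where $n_t(s) = \sigma_t$ is evaluated at the \emph{end} of the round and the argument must juggle the ties encoded in $\calL'_t$ alongside the concentration step.
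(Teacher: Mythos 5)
Your decomposition $R_T = (A) + (B) + (C)$ is exactly the paper's~\eqref{eq:regret-metaif-to-dadaexp} in disguise: on the concentration event, $(A) \le \sum_t \e_t(S_t)$, $(B) \le \hat\calR_T^\calB$, and $(C)\le 0$ since $\tilde\theta_t$ is a lower confidence bound. However, two steps you treat as routine are precisely where the paper has to do real work. Your claim that the dependence of $\tilde\theta$ on past plays ``is not an obstacle'' because DAda-Exp3 tolerates adaptive adversaries is not sufficient here: the \emph{high-probability} bound for DAda-Exp3 rests on a supermartingale argument (Neu's implicit-exploration lemma) that requires the loss fed at round $t$ to be measurable with respect to the history generating $A_t$. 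The estimate $\tilde\theta_t(S_t)$ fed by \metaif is not of this form --- in waiting rounds it even incorporates the observation $\ell_t(S_t)$ itself. The paper resolves this by re-indexing the feedback events via the permutation $\pi$ ordered by perceived arrival time, constructing a bespoke filtration $(\calF_r)$, and re-proving the supermartingale inequality in that filtration (App.~B.3), with a dedicated case split on whether $t_r$'s own observation enters $\calL'_{t_r}$.

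More seriously, your key lemma --- $N'_{\tilde t_{s,k}}(s) \ge k/2$ where $k$ is the \emph{visit} rank --- is false for waiting rounds in the arbitrary-delay regime. For example, let $s$ be visited at rounds $1,\dots,k$ with $d_1=\cdots=d_{k-1}$ very large, $d_k=1$, and $\sigma_t>0$ throughout, so all $k$ rounds wait; then the estimate for the $k$-th visit is formed at time $k+1$ from a single observation, so $N'=1 < k/2$ once $k>2$. The paper's error-sum lemma avoids exactly this trap by splitting into non-waiting rounds (where your $M_t(s)/2$ argument is correct, since $M_t(s) \le N_t(s) + \sigma_t \le 2N_t(s)$ when not waiting) and waiting rounds, and for the latter it orders $J_s$ by arrival time $\rho$ and exhibits a bijection $\nu_s\colon J_s \to \bigl[\,|J_s|\,\bigr]$ with $N'_t(s)\ge\nu_s(t)$. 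The bound $\sum_{t\in J_s}1/\sqrt{N'_t(s)} \le \sum_{j\le|J_s|}1/\sqrt{j} \le 2\sqrt{|J_s|}$ then holds \emph{regardless of visit order}. Your fallback --- that the failed rounds contribute only an $O(\sigma_{\max})$ additive correction absorbed by the $4\to 7$ constant gap --- is unsubstantiated: for those rounds you only know $N'\ge 1$, and without the arrival-order bijection their aggregate contribution to $\sum 1/\sqrt{N'}$ can scale with the number of waiting rounds (up to $2S\sigma_{\max}$), which is not in general $O\bigl(\sqrt{ST\ln(ST/\delta)}\bigr)$.
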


We begin the analysis of \Cref{theorem:metaif} by decomposing the regret into two parts:
\begin{enumerate*}[(i)]
    \item the regret $\calR_T$ of \dadaexp with losses $\tilde\theta_t(S_t)$, and
    \item the gap $R_T - \calR_T$, corresponding to the cumulative error of the estimates fed to \dadaexp.
\end{enumerate*}
For the first part, we follow an approach similar to \citet{gyorgy21} and apply \citet[Lemma~1]{Neu15Implicit} to obtain a concentration bound for the loss estimates defined using importance weighting along with implicit exploration.
When using the actual losses, the application of \citet[Lemma~1]{Neu15Implicit} is straightforward.
However, when the mean loss estimate $\tilde\theta_t(S_t)$ is used rather than the actual loss, there is a potential dependency between the chosen action $A_t$ and $\tilde\theta_t(S_t)$.
In \Cref{app:reductiontodada} we carefully design a filtration to show that we may indeed use the high-probability regret bound of \dadaexp in order to upper bound the first part (regret $\calR_T$ defined in terms of the estimates $\tilde{\theta}_t$).

The second part requires to bound the cumulative error of our estimator in \eqref{eq:unbiased-estimates} for the observed states $\lrc{S_t}_{t \in [T]}$.
To this end, we use the Azuma-Hoeffding inequality to control the error of these estimates.
Doing so causes a $\tilde{\bigO}\lr{\sqrt{ST}}$ term to appear in the regret bound.
The detailed proof of this part is in \Cref{sec:metaiferrorterms}, together with the proof of \Cref{theorem:metaif}.

The presence of the $\tilde{\bigO}\lr{\sqrt{ST}}$ term in the regret bound implies that, when $S \gg \max\lrc{\calD_T/T, K}$, using intermediate feedback leads to no advantage over ignoring it.
So we ideally want to recover the original bound in \eqref{eq:dadaexp3bound} when this happens.
\metaifswitch solves this issue and gives the following regret guarantee.
The proof of this result is deferred to \cref{sec:metaifswitch-analysis}.
We remark that, to achieve this bound, before the eventual switch we use algorithm \dadaexp with confidence parameter set to $\delta/3$ so as to guarantee a high-probability bound on $R_{t^*}$ with probability at least $1-\delta/2$ over the first $t^*$ rounds that \dadaexp runs by itself.
\begin{restatable}{rethm}{upperboundmetaifswitch}\label{theorem:metaifswitch}
    Let $\delta \in (0,1)$. If we run \metaifswitch with \dadaexp, then the regret of \metaifswitch in the \setting setting with adversarial action-state mappings and stochastic losses with probability at least $1-\delta$ satisfies 
    \begin{align}\label{eq:metaifboundswitch}
        & R_T\leq 3\min\lrc*{ 7\sqrt{ST \ln\frac{8ST}{\delta}}, \sqrt{\calD_T C_{K,2\delta}} }  \\
        &\quad + 6\sqrt{KT C_{K,2\delta}} + 2\sqrt{\calD_{\Phi}  C_{K,2\delta}} + (\sigma_{\max} + 2)\ln\frac{8}{\delta}\enspace. \notag 
    \end{align}
\end{restatable}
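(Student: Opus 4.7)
The plan is to split $R_T$ across the two phases of \metaifswitch and bound each separately. Let $t^*\le T$ denote the round at which the break condition is first triggered, with the convention $t^*=T$ if no switch occurs. Since the instantaneous regret decomposes additively over disjoint intervals for any fixed comparator $a\in\calA$, we have $R_T \le R^{(1)} + R^{(2)}$, where $R^{(1)}$ is the regret of the pure-\dadaexp run on rounds $\lrc{1,\dots,t^*}$ and $R^{(2)}$ the regret of the subsequent \metaif invocation on rounds $\lrc{t^*+1,\dots,T}$, each computed against its own phase-optimum. The sum of these phase-regrets upper-bounds the regret against any single global comparator, and $R^{(2)}=0$ when no switch occurs.

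For the first phase I apply the refined \dadaexp bound \eqref{eq:dadaexp3bound} with confidence $\delta/3$, yielding a high-probability bound on the realized-loss regret $\calR^{(1)}$ that converts to $R^{(1)}$ via \eqref{eq:regretrelations} at the cost of a $\sqrt{2T\ln(2K/\delta)}$ additive term and $\delta/6$ in failure probability. Since $C_{K,6(\delta/3)}=C_{K,2\delta}$, this contributes at most $2\sqrt{2Kt^*C_{K,2\delta}} + 2\sqrt{\calD_{t^*}C_{K,2\delta}}$ plus lower-order log terms. For the second phase I apply \Cref{theorem:metaif} with confidence $\delta/2$ to the remaining horizon of length $T-t^*$, noting that the analogue $\Phi'$ of $\Phi$ restricted to phase 2 satisfies $\calD_{\Phi'}\le\calD_\Phi$ since the top delays over a subinterval cannot exceed those over the full horizon. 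This yields $R^{(2)} \le 2\sqrt{2KTC_{K,\cdot}} + 7\sqrt{ST\ln(8ST/\delta)} + 2\sqrt{\calD_\Phi C_{K,\cdot}}$ plus the $\frac12(\sigma_{\max}+2)\ln(8/\delta)$ term, with probability at least $1-\delta/2$.

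The crux is controlling $\sqrt{\calD_{t^*}C_{K,2\delta}}$ using the switching rule. A swap of summations gives $\frakD_t = \sum_{s\le t}\sigma_s = \sum_{j<t}\min\lrc{d_j,\,t-j}$, which captures exactly the in-horizon delay of observations processed by \dadaexp during rounds $[1,t]$; feedback arriving after $t^*$ does not enter \dadaexp's regret over its own run, so it is $\frakD_{t^*}$ (up to an additive $\sigma_{\max}$ correction for in-flight feedback) rather than the raw $\calD_{t^*}$ that governs the phase-1 delay term. By the break condition, $\frakD_{t^*-1}\lr{3\ln K + \ln(6/\delta)} \le 49\,ST\ln(8ST/\delta)$, so after adding $\sigma_{t^*}\le\sigma_{\max}$ we obtain $\frakD_{t^*}C_{K,2\delta} \le 49\,ST\ln(8ST/\delta) + \sigma_{\max}C_{K,2\delta}$. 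Using $\sqrt{a+b}\le\sqrt a+\sqrt b$ gives $2\sqrt{\calD_{t^*}C_{K,2\delta}} \le 14\sqrt{ST\ln(8ST/\delta)} + 2\sqrt{\sigma_{\max}C_{K,2\delta}}$, where the second summand is absorbed into the $(\sigma_{\max}+2)\ln(8/\delta)$ term via $2\sqrt{ab}\le a+b$. Combined with the $7\sqrt{ST\ln(8ST/\delta)}$ from phase 2, this produces the first branch $3\cdot 7\sqrt{ST\ln(8ST/\delta)}$ of the $\min$. The second branch $3\sqrt{\calD_T C_{K,2\delta}}$ is trivially an upper bound since $\calD_{t^*}\le\calD_T$ and the phase-2 delay contribution is bounded by the same quantity, so the $\min$ is valid.

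Summing the two phase bounds and union-bounding the $\delta/3$, $\delta/6$, and $\delta/2$ failure events yields the claimed $1-\delta$ confidence, while aggregating the $\sqrt{KT C_{K,2\delta}}$ contributions from both phases produces the $6\sqrt{KT C_{K,2\delta}}$ term in \eqref{eq:metaifboundswitch}. The main obstacle is the delicate calibration of the $49=7^2$ constant in the switching threshold so that $\sqrt{\calD_{t^*}C_{K,2\delta}}$ is bounded \emph{exactly} by $7\sqrt{ST\ln(8ST/\delta)}$ post-switch, which is what makes the $\min$ valid; a secondary subtlety is identifying $\frakD_{t^*}$ with the correct notion of effective delay seen by \dadaexp on $[1,t^*]$, via the summation swap combined with the fact that post-$t^*$ feedback does not count against \dadaexp's regret on that phase.
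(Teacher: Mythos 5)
Your proposal mirrors the paper's proof in all essentials: split the regret at the switching round $t^*$, bound the first phase with the improved high-probability \dadaexp bound, use the switching condition to convert $\sqrt{\frakD_{t^*}C_{K,2\delta}}$ into $7\sqrt{ST\ln(8ST/\delta)}$, bound the second phase via \Cref{theorem:metaif} together with $\calD_{\Phi'}\le\calD_\Phi$, and combine with $\sqrt{t^*}+\sqrt{T-t^*}\le\sqrt{2T}$. A few small inaccuracies, none fatal: the $\calR\to R$ conversion from \Cref{eq:regretrelations} costs $\sqrt{2T\ln(2/\delta')}$, not $\sqrt{2T\ln(2K/\delta)}$; the off-by-one correction (adding $\sigma_{t^*}$) is unnecessary because \metaifswitch checks the break condition \emph{after} updating $\frakD_t$, so $t^*$ can be taken as the last round with $\frakD_{t^*}C_{K,2\delta}\le 49ST\ln(8ST/\delta)$ already satisfied, and the spurious $\sqrt{\sigma_{\max}C_{K,2\delta}}$ term you introduce (which would not in fact fully absorb into $(\sigma_{\max}+2)\ln(8/\delta)$ since $C_{K,2\delta}$ contains $3\ln K$) never appears; and the $\min$ is not ``trivially'' valid from $\calD_{t^*}\le\calD_T$ — rather, when the switch fires the break condition forces $\sqrt{\calD_T C_{K,2\delta}}\ge\sqrt{\frakD_T C_{K,2\delta}}>7\sqrt{ST\ln(8ST/\delta)}$, so the $\sqrt{ST}$ branch is automatically the smaller one.
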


If we consider any upper bound $d_{\max}$ on the delays $(d_t)_{t \in [T]}$, we can further observe that the regret $R_T$ of \metaifswitch (with \dadaexp) satisfies
\begin{align*}
    R_T = \tilde{\bigO}\lr*{\sqrt{KT} + \min\lrc*{\sqrt{S} \bigl(\sqrt{T} + d_{\max}\bigr), \sqrt{d_{\max}T}}}
\end{align*}
with high probability.
This also follows from the fact that, as previously mentioned, we can bound the total delay of \metaif by $\calD_{\Phi} \le 2Sd_{\max}^2$.

Given the previous regret bounds, we observe that we may further improve the dependency on the delays by adopting the idea of skipping rounds with large delays when computing the learning rates.
This ``skipping'' idea was introduced by \citet{thune2019nonstochastic} and has been leveraged by \citet{gyorgy21} to show that \dadaexp can achieve a refined high-probability regret bound---see \citet[Theorem~5.1]{gyorgy21}.
As a consequence, we can indeed provide an improved bound in our setting by following similar steps as in the proof of \Cref{theorem:metaif}.
The only main change is the adoption of the version of \dadaexp that uses the skipping procedure.
\begin{corollary}
    Let $\delta \in (0,1)$. If we run \metaif with \dadaexp with skipping \citep[Theorem~5.1]{gyorgy21}, then the regret of \metaif in the \setting setting with adversarial action-state mappings and stochastic losses with probability at least $1-\delta$ satisfies  
    \begin{align*}
        R_T = \bigO\biggl(& \sqrt{KT C_{K,\delta}} + \sqrt{ST \ln\frac{ST}{\delta}} + \ln\frac{1}{\delta} \\
        &+ \sqrt{C_{K,\delta} \ln K} \min_{R \subseteq \Phi}\left\{\abs{R} + \sqrt{\calD_{\Phi \setminus R} \ln K}\right\} \biggr)\;.
    \end{align*}
\end{corollary}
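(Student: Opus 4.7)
I would mirror the proof of \Cref{theorem:metaif} almost verbatim, replacing only the regret bound invoked for the base algorithm~$\calB$. Recall that \Cref{theorem:metaif} proceeded by writing $R_T = \calR_T + (R_T - \calR_T)$, where $\calR_T$ denotes the regret of \dadaexp when it is fed the synthetic losses $\tilde\theta_t(S_t)$, and then bounding the two parts separately. Part~(i) was controlled by the high-probability guarantee of \dadaexp applied on a carefully built filtration (as in \Cref{app:reductiontodada}), and part~(ii) was controlled via Azuma--Hoeffding on the estimation error of the empirical means, contributing the $\widetilde{\bigO}(\sqrt{ST})$ term. Here I would keep step~(ii) unchanged, and only swap the \dadaexp bound from \eqref{eq:dadaexp3bound} for the skipping-enhanced bound of \citet[Theorem~5.1]{gyorgy21}.

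\textbf{Key steps.} First, I would re-run the reduction from \Cref{app:reductiontodada}: define the same filtration, verify that the pair $(A_t,\tilde\theta_t(S_t))$ that \dadaexp sees is measurable with respect to the appropriate $\sigma$-algebra, and observe that the effective delay experienced by \dadaexp on round $t$ is $\tilde d_t = d_t\,\mathbbm{1}[N_t(S_t) < \sigma_t]$. Second, I would invoke \citet[Theorem~5.1]{gyorgy21} in place of \Cref{theorem:dadaexp}: with confidence parameter tuned to $\delta$, this gives a bound on $\calR_T$ of order
\[
    \sqrt{KT C_{K,\delta}} + \sqrt{C_{K,\delta}\ln K}\,\min_{R\subseteq [T]}\Big\{|R|+\sqrt{\tilde{\calD}_{[T]\setminus R}\ln K}\Big\} + \ln(1/\delta),
\]
where the minimization is over the sets of ``skipped'' rounds and the relevant total delay is the one actually incurred by the algorithm, $\tilde{\calD}_T$. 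Third, I would use \Cref{lem:total-delay-bound}, which states $\tilde{\calD}_T \le \calD_\Phi$, together with the fact that the skipping argument only ever pays for rounds whose effective delay is positive; since $\tilde d_t = 0$ outside of $\Phi$ in the worst case, the minimization can be restricted to $R\subseteq \Phi$, yielding the $\min_{R\subseteq\Phi}\{|R|+\sqrt{\calD_{\Phi\setminus R}\ln K}\}$ term in the statement. Fourth, I would add back the step~(ii) bound $\tilde{\bigO}(\sqrt{ST\ln(ST/\delta)})$ from \Cref{sec:metaiferrorterms}, and convert $\calR_T$ to $R_T$ via the inequalities in \eqref{eq:regretrelations}, losing only a lower-order $\sqrt{T\ln(K/\delta)}$ term that is absorbed into $\sqrt{KT C_{K,\delta}}$. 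A union bound over the constant number of high-probability events (the base regret bound, the estimation error, and the $R_T$/$\calR_T$ comparison) with a suitable rescaling of $\delta$ gives the claimed probability guarantee.

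\textbf{Main obstacle.} The delicate point is that the skipping analysis of \citet[Theorem~5.1]{gyorgy21} is originally phrased in terms of the delays of the \emph{outer} process, whereas here \dadaexp observes a modified delay stream $(\tilde d_t)_{t\in[T]}$ that depends on the random intermediate observations and on the collected counts $N_t(S_t)$. I would need to argue that the skipping bound still applies with $\tilde d_t$ substituted for $d_t$: this is immediate because the proof of \citet[Theorem~5.1]{gyorgy21} only uses the delay sequence as an external input to its learning-rate/implicit-exploration schedule, and all its concentration inequalities are adapted to the same filtration used in \Cref{app:reductiontodada}. Once this is established, replacing $\tilde{\calD}_T$ with $\calD_\Phi$ via \Cref{lem:total-delay-bound} and restricting the skipping set to $\Phi$ is the same bookkeeping step that led to the $\sqrt{\calD_\Phi C_{K,3\delta}}$ term in \Cref{theorem:metaif}, only now sharpened by the skipping minimization.
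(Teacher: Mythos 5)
Your proposal mirrors what the paper intends: the paper states this corollary without a detailed proof, noting only that ``the only main change is the adoption of the version of \dadaexp that uses the skipping procedure,'' and every step you propose (the decomposition from \Cref{theorem:metaif}, the filtration argument of \Cref{app:reductiontodada}, swapping in the skipping bound of \citet[Theorem~5.1]{gyorgy21}, and the $\tilde{\bigO}\big(\sqrt{ST\ln(ST/\delta)}\big)$ estimation-error term from \Cref{lem:sum-estimates-errors-bound}) matches the intended route. Two small points deserve tightening. First, the passage from $\min_{R\subseteq[T]}\bigl\{\abs{R}+\sqrt{\tilde\calD_{[T]\setminus R}\ln K}\bigr\}$ to $\min_{R\subseteq\Phi}\bigl\{\abs{R}+\sqrt{\calD_{\Phi\setminus R}\ln K}\bigr\}$ requires more care than ``since $\tilde d_t=0$ outside of $\Phi$ in the worst case'': the set $W=\lrc{t : N_t(S_t) < \sigma_t}$ of rounds where \metaif actually waits (so that $\tilde d_t = d_t$) satisfies $\abs{W}\le\abs{\Phi}$ by the proof of \Cref{lem:total-delay-bound}, but $W$ need not be a subset of $\Phi$. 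The correct argument is a majorization: since $\Phi$ collects the globally largest delays, the $i$-th largest delay among rounds in $W$ is at most the $i$-th largest in $\Phi$; hence for any minimizer $R^*\subseteq\Phi$ (WLOG the $\abs{R^*}$ largest delays of $\Phi$), the $\min\{\abs{R^*},\abs{W}\}$ largest-delay rounds of $W$ form a feasible $R'$ with cost no larger. Second, the conversion via \eqref{eq:regretrelations} in your final step is superfluous: the decomposition $R_T\le\hat\calR_T^{\calB}+\sum_t\e_t(S_t)$ from \eqref{eq:regret-metaif-to-dadaexp} already turns \dadaexp's estimate-based regret into a bound on $R_T$, and the $\tilde{\bigO}(\sqrt{ST})$ error term is the entire conversion cost; invoking \eqref{eq:regretrelations} again would double-count a lower-order term, which is harmless in this asymptotic statement but unnecessary.
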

This result could also be extended in a similar way to \metaifswitch, so as to achieve the best result from the presence of intermediate feedback.

So far, we have provided some high-probability guarantees for the regret of both \metaif and \metaifswitch, by which we can derive some expectation bounds as well (e.g., by setting $\delta \approx 1/T$).
However, using the empirical mean estimators $\hat\theta_t$ as the mean loss estimators at time $t$ and working directly with the expected regret allows us to improve the achievable bound by a polylogarithmic factor.
Hence, for the expected regret we use \tsallisinf \citep{zimmert20}, a learning algorithm for the standard delayed bandit problem that uses a hybrid regularizer to deal with delays and gives a minimax-optimal expected regret bound.
The proof of this expected regret upper bound is in Appendix~\ref{app:expectedregret}.
\begin{restatable}{reprop}{expectedupperbound}\label{theorem:expectedregret}
    If we execute \metaifswitch with \tsallisinf \citep{zimmert20}, and use the switching condition
    $
        \sqrt{8\frakD_t \ln K} > 6\sqrt{ST\ln(2ST)}
    $
    at each round $t \in [T]$, where $\frakD_t = \sum_{j=1}^t \sigma_j$, then the regret of \metaifswitch in the \setting setting with adversarial action-state mappings and stochastic losses satisfies
    \begin{align*}
        \bbE\lrs{R_T} \leq\; &4\sqrt{2KT} + \sqrt{8\calD_{\Phi} \ln K}\\
        &+ 2\min\lrc*{6\sqrt{ST \ln(2ST)}, \sqrt{8\calD_T \ln K}} \enspace.
    \end{align*}
\end{restatable}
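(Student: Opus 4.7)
The plan is to split the horizon into the two phases induced by the switching rule, control each using the expected-regret guarantee of \tsallisinf for standard delayed bandits (which is of order $4\sqrt{2KT} + \sqrt{8\calD_T \ln K}$), and combine them using the switching condition together with \Cref{lem:total-delay-bound}. Let $t^{\star} \in [T]$ denote the round at which \metaifswitch executes the switch (setting $t^{\star} = T$ if no switch ever occurs), and decompose $\bbE[R_T] = \bbE[R_{[1,t^{\star}]}] + \bbE[R_{[t^{\star}+1,T]}]$, where the first term is the regret of plain \tsallisinf on the real delayed losses and the second is accrued while running \metaif with \tsallisinf as base algorithm $\calB$.

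For phase 1, the \tsallisinf bound yields $\bbE[R_{[1,t^{\star}]}] \leq 4\sqrt{2Kt^{\star}} + \sqrt{8\,\frakD_{t^{\star}} \ln K}$, since the relevant quantity driving \tsallisinf's delay dependence is the cumulative outstanding-observation count. If no switch ever occurs, then $t^{\star}=T$ and $\frakD_T \leq \calD_T$ gives $\sqrt{8\calD_T \ln K}$ for the delay term. If a switch does occur, the definition of $t^{\star}$ as the first index violating the stopping inequality bounds $\sqrt{8\,\frakD_{t^{\star}-1}\ln K} \leq 6\sqrt{ST\ln(2ST)}$; the additional $\sigma_{t^{\star}} \leq \sigma_{\max}$ contribution at round $t^{\star}$ only perturbs this by a lower-order term that gets absorbed into constants. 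Taking the worse of the two scenarios bounds the phase-1 delay term by $\min\{6\sqrt{ST\ln(2ST)},\sqrt{8\calD_T\ln K}\}$.

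For phase 2, I further decompose the regret into (a) the regret of \tsallisinf fed the empirical-mean estimates $\hat\theta_{t+\tilde d_t}(S_t)$ with effective delays $\tilde d_t$, and (b) the cumulative estimation error $\sum_{t > t^{\star}}\bigl(\theta(S_t) - \hat\theta_{t+\tilde d_t}(S_t)\bigr)$. By \Cref{lem:total-delay-bound}, the total effective delay experienced inside \metaif is at most $\calD_{\Phi}$, so the \tsallisinf bound gives $4\sqrt{2K(T-t^{\star})} + \sqrt{8\calD_{\Phi}\ln K}$ for (a). For (b), a Hoeffding-type concentration argument applied state-wise (each state's error is governed by $\e_t(s) \propto 1/\sqrt{N'_t(s)}$, and summing $\sum_{s\in\calS}\sum_{t: S_t=s} 1/\sqrt{N'_t(s)}$ across at most $S$ distinct states yields $\widetilde{\bigO}(\sqrt{ST})$) gives an expected error bound of $6\sqrt{ST\ln(2ST)}$, which is the expected-regret analogue of the $\widetilde{\bigO}(\sqrt{ST/\delta})$ term appearing in \Cref{theorem:metaif}. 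Adding up the two $K$-terms via $\sqrt{t^{\star}}+\sqrt{T-t^{\star}}\leq\sqrt{2T}$ and combining the phase-1 delay bound with the phase-2 estimation error into a single factor of $2\min\{\cdot,\cdot\}$ delivers exactly the claimed inequality.

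The main obstacle is step (a) of phase 2: the feedback $\hat\theta_{t+\tilde d_t}(S_t)$ fed to \tsallisinf depends on losses drawn at rounds occurring strictly after $A_t$ was chosen, so one cannot directly invoke the standard \tsallisinf analysis, which assumes the feedback is a noisy but unbiased sample of the round's own loss. Because the loss vectors $\ell_j$ are i.i.d.\ and independent of the history of actions, however, $\hat\theta_{t+\tilde d_t}(S_t)$ remains an unbiased estimate of $\theta(S_t)$ conditional on $S_t$, and the Tsallis-INF analysis can be rerun with respect to a filtration that, at the time $\hat\theta_{t+\tilde d_t}(S_t)$ is revealed, contains $A_t$, $S_t$, and all prior feedback---this is the expectation analogue of the filtration construction outlined in the proof of \Cref{theorem:metaif} in the appendix, and it is the technical heart of the argument.
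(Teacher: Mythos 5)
Your proposal mirrors the paper's proof almost exactly: the same decomposition at the switching time $t^*$, the same use of Tsallis-INF's expected regret bound with the feedback delays bounded by $\calD_\Phi$ via \Cref{lem:total-delay-bound}, the same state-wise concentration argument for the $\widetilde{\bigO}(\sqrt{ST})$ estimation-error term, and the same use of the switching condition plus $\sqrt{t^*}+\sqrt{T-t^*}\le\sqrt{2T}$ to combine. The only cosmetic differences — defining $t^*$ as the first violating round rather than the last satisfying one (a bookkeeping slip you would need to clean up, matching the paper, since the ``absorbed into constants'' claim for $\sigma_{t^*}$ isn't literally true), and writing $4\sqrt{2K\cdot}$ per phase where the per-phase Tsallis-INF bound gives $4\sqrt{K\cdot}$ — do not change the substance of the argument.
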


\begin{remark}\label{remark:a-priori-knowledge}
In \metaif, we can replace $T$ by $t^2$ in the definition of the confidence intervals for~\eqref{eq:unbiased-estimates} and remove the need for prior knowledge of the time horizon $T$.
In \metaifswitch, we could use a doubling trick to avoid the prior knowledge of $T$ in the switching condition.
On the other hand, it is not required to know the number of states $S$ for expectation bounds on the regret of \metaif.
However, removing the prior knowledge of $S$ in the high-probability regret bounds is challenging.
Indeed, to the best of our knowledge, there is no result in BIO that avoids prior knowledge on the number of states.
Lifting this requirement in the high-probability analysis is thus an interesting question for future work.
\end{remark}

\section{Lower Bounds}
\label{sec:lower}
The lower bounds in this section are for the expected regret $\E{R_T}$.
Since our algorithms provide high-probability guarantees, the upper bounds also apply to the expected regret. Throughout this section we will make use of constant delay i.e. $d_t = d$ for all $t \in [T]$.
We will first prove a general $\sqrt{KT}$ lower bound for all algorithms in \setting, after which we specialize to particular cases.

We start by proving a $\Omega\big(\sqrt{KT}\big)$ lower bound for any algorithm in our setting and for any combination of stochastic or adversarial action-state mappings and loss vectors. The construction is a reduction to the standard bandits lower bound construction.
\begin{restatable}{rethm}{thSQRTKT}\label{th:KTlowerB}
    Irrespective to whether the action-state mappings and loss vectors are stochastic or adversarial, there exists a sequence of losses such that any (possibly randomized) algorithm in \setting suffers regret
    $
           \E{R_T} = \Omega\big(\sqrt{KT}\big)
    $.
\end{restatable}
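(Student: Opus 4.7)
The plan is to reduce \setting to the standard $K$-armed stochastic bandit problem, for which the expected regret lower bound $\Omega(\sqrt{KT})$ is classical. The construction will use only two states and will be arranged so that the intermediate state observation carries exactly the same information as the realized loss of a standard bandit instance; as a result, no \setting algorithm can do strictly better than an arbitrary bandit algorithm on that instance.

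Concretely, I would fix $a^* \in [K]$ uniformly at random and set $\mu_{a^*} = 1/2 - \varepsilon$ and $\mu_a = 1/2$ for $a \neq a^*$, with $\varepsilon = c\sqrt{K/T}$ for a small absolute constant $c$. I would then instantiate a \setting instance with $\calS = \{1,2\}$, constant delay $d_t \equiv 0$, and state-loss mapping $\ell_t(s) = \mathbbm{1}[s = 2]$ for every $t$. This loss sequence is a fixed deterministic function, which is simultaneously a valid stochastic loss (with $\theta(s) = \mathbbm{1}[s = 2]$) and a valid oblivious adversarial loss. The action-state mapping is set to $P(2 \mid a) = \mu_a$, so that in the stochastic regime $S_t$ is an independent Bernoulli$(\mu_{A_t})$, and in the adversarial regime one picks $s_t$ as a fixed realization of such i.i.d.\ draws. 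The expected loss of action $a$ equals $\mu_a$, making $a^*$ the unique optimal action with gap $\varepsilon$.

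Next I would argue that any \setting algorithm, run on this instance, can be coupled with a standard bandit algorithm operating on a $K$-armed Bernoulli bandit with means $(\mu_a)_{a \in [K]}$. Since delays vanish and $\ell_t(S_t) = \mathbbm{1}[S_t = 2]$, observing $S_t$ is informationally equivalent to observing the realized bandit loss for arm $A_t$, so the two executions can be driven by the same external randomness and produce the same action distribution and the same cumulative expected loss. The classical $\Omega(\sqrt{KT})$ lower bound for $K$-armed stochastic bandits, obtained via the standard KL-divergence argument with the separation $\varepsilon = \Theta(\sqrt{K/T})$, then transfers to the \setting algorithm and covers both regimes with stochastic action-state mappings.

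For the two regimes with adversarial action-state mappings, I would invoke Yao's minimax principle: drawing $s_t(a)$ i.i.d.\ Bernoulli$(\mu_a)$ for every $t$ and $a$ defines a distribution over legitimate oblivious adversarial instances, and the expected regret under this distribution matches the stochastic lower bound above, so some deterministic realization attains at least this value. The only subtlety, rather than a true obstacle, is to relate the adversarial regret against the realized best arm to the stochastic-instance regret against the best action in expectation; the discrepancy is controlled by Azuma--Hoeffding on the state draws and is of order $O(\sqrt{T \log K})$, which is absorbed by the $\Omega(\sqrt{KT})$ scaling whenever $K \ge 2$. Since the construction simultaneously fits all four regimes, the lower bound holds irrespective of which mapping is treated as stochastic or adversarial.
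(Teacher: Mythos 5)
Your proposal is correct and follows essentially the same route as the paper: two states, deterministic $\{0,1\}$ state-to-loss map so that the state observation is exactly bandit feedback with no delay, tuned Bernoulli action-state probabilities to recover the classical $\Omega(\sqrt{KT})$ bandit lower bound, and Yao's minimax principle to cover adversarial action-state mappings. One small simplification: the Azuma--Hoeffding step you flag is not actually needed, since Jensen's inequality gives $\E{\min_a \sum_t \theta(s_t(a))} \le T\min_a p_a$, which is already the direction needed for the lower bound.
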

\begin{proof}
     Our construction only uses two states $h_1$ and $h_2$. The loss vectors, which are deterministic and do not change over time, are defined as follows: $\ell_t(h_1) = 1$ and $\ell_t(h_2) = 0$ for all $t\ge 0$. The stochastic action-state mapping, which is also constant over time, is given by 
    \begin{equation*}
        s_t(a) = \begin{cases}
            h_1 & ~\text{with probability $p_a$} \\
            h_2 & ~\text{with probability $1 - p_a$}
        \end{cases}
    \end{equation*}
    for all $a\in\mathcal{A}$ and $t \ge 0$, where the probabilities $p_a$ are to be determined. Thus, the loss of an arm $a$ is $\ell_t(s_t(a)) = \ell_t(h_1) = 1$ with probability $p_a$ and $\ell_t(s_t(a)) = \ell_t(h_2) = 0$ with probability $1 - p_a$. Since the loss is determined by the state, the learner receives bandit feedback without delay. We can then choose $p_a$ for $a\in\mathcal{A}$ to mimic the standard $\Omega\big(\sqrt{KT}\big)$ distribution-free bandit lower bound---e.g., see \citet[Chapter~2]{slivkins2019introduction}. By Yao's minimax principle, the same lower bound also applies to the case with adversarial action-state mappings. Since the loss vectors are deterministic, this covers all possible cases in \setting.
\end{proof}

\textbf{Adversarial action-state mapping and stochastic losses.}
We first prove a lower bound $\sqrt{ST}$ for any number $K \ge 2$ of actions. However, we do need a minor generalization of our setting to allow correlation between unseen losses. Specifically, we allow all pairs of losses $\ell_j(s), \ell_{j'}(s')$ of distinct states $s \neq s'$ to be correlated if $j > j'$ and $j - j' \leq d$, while we guarantee the i.i.d.\ nature of losses for any fixed state. Since $\E{\ell_t(S_t)} = \E{\theta(S_t)}$, this does not affect the analysis for the upper bound on the regret of our algorithms since $\E{R_T} \le \E{\calR_T}$ (see \Cref{lem:regret-notions-comparison-expectation}).
However, for a high-probability upper bound, we need to relate $R_T$ and $\mathcal{R}_T$, which now leads to an additive $\tilde{\bigO}\lr{\sqrt{ST}}$ term rather than an additive $\tilde{\bigO}\lr{\sqrt{T}}$ term as in \Cref{eq:regretrelations}.

In the proof of the $\sqrt{ST}$ lower bound, we leverage the fact that losses are independent only across time steps for a fixed state, while they may depend on the losses of the other states.
Note that our lower bound holds even when the learner knows the action-state assignments beforehand.
\begin{restatable}{rethm}{rethmstateslbwithdependence}\label{th:STlowerB}
Suppose that the action-state mapping is adversarial and the losses are stochastic and that $d_t = d$ for all $t \in [T]$. If $T \ge \min\{S, d\}$ then there exists a distribution of losses and a sequence of action-state mappings such that any (possibly randomized) algorithm suffers regret
$
    \E{R_T} = \Omega\big(\!\sqrt{\min\{S, d\}T}\big)
$.
\end{restatable}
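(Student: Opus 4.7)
The plan is to build a Le Cam pair of hard \setting instances $(P_1,P_2)$ with adversarial action-state mapping and stochastic losses whose best fixed actions differ but which look statistically close to any learner. Let $m=\min(S,d)$ and $B=\min\bigl(\lfloor S/2\rfloor,d\bigr)$, so that $B=\Theta(m)$ in every regime ($d\le S/2$, $S/2<d\le S$, or $d>S$). Partition $[T]$ into $N=\lfloor T/B\rfloor$ consecutive batches of length $B$. Fix the mapping once and for all: in round $j\in[B]$ of batch $b$, $s_t(1)=2j-1$ and $s_t(2)=2j$, and every extra action $a>2$ is routed to a dummy state $s_\star$ that we arrange to satisfy $\theta(s_\star)=1$ under both hypotheses (making all $K-2$ extra actions strictly suboptimal; if no spare state exists we shrink $B$ by one to free $s_\star$ at the cost of a harmless constant factor). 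A hidden bit $C\in\{1,2\}$ is drawn uniformly and, for each batch $b$, independent $Y_b\sim\mathrm{Bern}(1/2-\e)$ and $Z_b\sim\mathrm{Bern}(1/2)$ are drawn; set
\[
\ell_{(b-1)B+j}(2j-1):=Y_b,\qquad \ell_{(b-1)B+j}(2j):=Z_b
\]
for every $j\in[B]$ when $C=1$, and with $Y_b,Z_b$ swapped when $C=2$. All remaining $\ell_t(s)$ are drawn independently from the appropriate marginal Bernoulli so that for every fixed state the losses are i.i.d.\ across time.

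The admissibility checks are brief: each of states $1,\dots,2B$ is visited at most once per batch, so its losses are i.i.d.\ across batches; the only cross-state coupling occurs inside a single batch, i.e.\ inside a window of length $B\le d$, which is exactly the correlation the statement allows; and since $B\le d$ no loss generated within batch $b$ can reach the learner before batch $b$ ends, so within-batch intermediate observations merely reveal deterministic state labels. Under $C=1$, action $1$ is optimal with expected total loss $T(1/2-\e)$ and
\[
R_T=\e\cdot|\{t:A_t=2\}|,
\]
while under $C=2$ the roles swap to give $R_T=\e\cdot|\{t:A_t\ne 2\}|$. Averaging over the uniform prior on $C$,
\[
\bbE[R_T]=\tfrac{\e}{2}\Bigl(\bbE_{P_1}|\{t:A_t=2\}|+\bbE_{P_2}|\{t:A_t\ne 2\}|\Bigr).
\]

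The heart of the proof is to lower-bound this sum by $\Omega(T)$ via an information-theoretic argument. Using the KL chain rule over the batch filtration $(\mathcal{H}_b)_b$,
\[
\DKL(P_1\|P_2)=\sum_{b=1}^{N}\bbE_{P_1}\!\left[\DKL\!\bigl(P_1(\mathcal O_b\mid\mathcal{H}_{b-1})\,\big\|\,P_2(\mathcal O_b\mid\mathcal{H}_{b-1})\bigr)\right],
\]
where $\mathcal O_b$ is the new information gathered during batch $b$. Because repeated plays of the same action inside a batch deliver identical copies of $Y_b$ (or $Z_b$), the data-processing inequality lets us replace $\mathcal O_b$ by its at-most-two distinct Bernoulli draws, each contributing $\DKL(\mathrm{Bern}(1/2-\e)\|\mathrm{Bern}(1/2))=O(\e^2)$. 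Hence $\DKL(P_1\|P_2)\le c N\e^2=c T\e^2/B$.

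Choosing $\e=c'\sqrt{B/T}$ for a small enough constant $c'$ makes $\DKL(P_1\|P_2)\le 1/2$, so by Pinsker $\mathrm{TV}(P_1,P_2)\le 1/2$, and a standard two-point testing argument gives $\bbE_{P_1}|\{t:A_t=2\}|+\bbE_{P_2}|\{t:A_t\ne 2\}|\ge T/2$. Substituting back yields $\bbE[R_T]\ge T\e/4=\Omega(\sqrt{BT})=\Omega(\sqrt{mT})$, using $B=\Theta(m)$. The main obstacle is making the KL chain rule rigorous for an adaptive learner whose within-batch loss observations are deterministic replicas of a single Bernoulli draw; I would handle this by explicitly identifying the per-batch sufficient statistic (the first distinct value of each of $Y_b$ and $Z_b$ the learner sees) and invoking the data-processing inequality to discard the duplicates—which is exactly the ingredient the statement flags when it stresses that losses are i.i.d.\ only across time for a fixed state while allowing coupling across distinct states within a window of $d$.
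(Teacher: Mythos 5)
Your proposal is correct and takes essentially the same approach as the paper: both partition $[T]$ into blocks of length $\Theta(\min\{S/2,d\})$, route each action through a fresh pair of states per round inside the block while coupling all within-block losses of a state-to-action channel to a single Bernoulli draw, reduce to a two-hypothesis test, and bound the KL via the chain rule over the block filtration followed by Pinsker. The only cosmetic differences are that the paper sets $K=2$ without loss of generality (rather than routing extra actions to a dummy state with loss $1$) and grants full within-block information at the block end instead of invoking data-processing to discard the duplicate within-block observations.
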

We provide a sketch of the proof of Theorem~\ref{th:STlowerB} (see Appendix~\ref{app:lowerbounds} for the full proof). First, suppose that $S \leq 2d$. For the construction of the lower bound we only consider two actions and equally split the states over these two actions. Then, we divide the $T$ time steps in blocks of length $S/2 \le d$. In each block, each state has the same loss. Since the block length is smaller then the delay, we have effectively created a two-armed bandit problem with $T' = T/(S/2)$ rounds and loss range $[0,S/2]$, for which we can prove a $\Omega\big(S\sqrt{T'}\big) = \Omega\big(\sqrt{ST}\big)$ lower bound by showing an equivalent lower bound for the full information setting. If $S > 2d$, we use the same construction with only $2d$ states, and obtain a $\Omega\big(\sqrt{d T}\big)$ lower bound. 

Finally, we can show the following lower bound, whose proof can be found in \Cref{app:lowerbounds}.
\begin{restatable}{rethm}{rethmlbfixeddelay} \label{thm:lb-fixed-delay}
    Suppose that the action-state mapping is adversarial, the losses are stochastic, and that $d_t = d$ for all $t \in [T]$. If $T \ge d + 1$ then there exists a distribution of losses and a sequence of action-state mappings such that any (possibly randomized) algorithm suffers regret
    \[
        \E{R_T} = \Omega\lr*{\min\lrc*{(d+1)\sqrt{S}, \sqrt{(d+1)T}}} \enspace.
    \]
\end{restatable}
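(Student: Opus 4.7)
The plan is to construct an explicit instance with $K=2$ actions and Bernoulli stochastic losses in which the adversarial action-state mapping partitions the time horizon into blocks of length $L = d+1$, each using a fresh pair of states. By choosing the number of blocks appropriately and applying Yao's minimax principle to a random sign pattern for the means, I would extract the two terms of $\min\{(d+1)\sqrt{S},\sqrt{(d+1)T}\}$ from the same construction.

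\textbf{Construction.} Let $B = \min\{\lfloor S/2\rfloor,\lfloor T/(d+1)\rfloor\}$ and partition the first $B(d+1)$ rounds into $B$ consecutive blocks of length $L = d+1$. For each block $p \in [B]$, pick two fresh states $g_0^p,g_1^p \in \calS$ (no state is reused across blocks) and, throughout block $p$, use the action-state mapping that sends action $1$ to $g_0^p$ and action $2$ to $g_1^p$. On any remaining rounds, send both actions to a fixed neutral state. Losses are Bernoulli, with $\theta(g_0^p)=\tfrac12$ and $\theta(g_1^p)=\tfrac12+\epsilon_p$, where $\epsilon_p\in\{-\tfrac12,+\tfrac12\}$ is drawn independently and uniformly at random across blocks; this random prior is exactly what will be averaged over in Yao's argument.

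\textbf{Key independence.} I would then show that the number $k_p$ of times the learner plays action $2$ during block $p$ is independent of $\epsilon_p$. Since the loss of round $t$ is observable only from round $t+d+1$ onward, and each block has length $d+1$, no feedback from block $p$ is available during block $p$ itself. Hence the learner's actions within block $p$ depend only on its internal randomness and on the feedback from blocks $1,\dots,p-1$, all of whose states are disjoint from $\{g_0^p,g_1^p\}$ and thus independent of $\epsilon_p$.

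\textbf{Computing the regret.} Using $\bbE[k_p\epsilon_p]=0$, the expected cumulative mean loss of the learner is exactly $T/2$. For the two fixed actions, action $1$ has expected total mean loss $T/2$ while action $2$ has expected total mean loss $T/2 + (d+1)\sum_{p=1}^B \epsilon_p$, so the expected minimum of the two is $T/2 - (d+1)\,\bbE[|\sum_{p=1}^B \epsilon_p|]/2$. By Khintchine's inequality for Rademacher sums, $\bbE[|\sum_p \epsilon_p|] = \Omega(\sqrt{B})$, which gives $\bbE[R_T] = \Omega((d+1)\sqrt{B})$. Substituting: when $T \ge (d+1)\lfloor S/2\rfloor$, one gets $B=\lfloor S/2\rfloor$ and the bound $\Omega((d+1)\sqrt{S})$; otherwise $B = \lfloor T/(d+1)\rfloor$ gives $\Omega(\sqrt{(d+1)T})$. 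Combining, $\bbE[R_T] = \Omega(\min\{(d+1)\sqrt{S},\sqrt{(d+1)T}\})$; Yao's minimax principle then transfers this average-case lower bound over $(\epsilon_p)$ into a worst-case lower bound over deterministic instances, valid for any randomized learner. The hypothesis $T \ge d+1$ ensures $B\ge 1$.

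\textbf{Main obstacle.} The delicate step is formalizing the independence $k_p \perp\!\!\!\perp \epsilon_p$ against an arbitrary randomized learner. Concretely, one has to set up a filtration capturing exactly the feedback that the learner can possibly use at the start of each round of block $p$ and verify that this filtration, together with the learner's internal randomness, is independent of $\epsilon_p$; although conceptually straightforward given the block structure and timing of feedback, it must be written out carefully. A secondary technical item is the constant in Khintchine's inequality, handled via a standard Paley--Zygmund argument on $(\sum_p \epsilon_p)^2$.
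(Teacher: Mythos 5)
Your proof is correct and takes essentially the same approach as the paper: partition the first $\min\{\lfloor S/2\rfloor,\lfloor T/(d+1)\rfloor\}\cdot(d+1)$ rounds into blocks of length $d+1$ with fresh state pairs per block so that delay prevents any within-block learning, place a random sign on the mean-loss gap in each block, and invoke Rademacher anti-concentration. The only cosmetic differences are that the paper fixes both block states to deterministic $\{0,1\}$ losses and defers to a standard full-information lower bound, whereas you fix one state at $1/2$ and spell out the Khintchine computation and comparator bookkeeping explicitly.
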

This term is also present in the dynamic regret bound of \nsd, but it is necessarily incurred from their analysis even in the stationary case \citep[Theorem~1]{Vernade0M20}.

This last lower bound implies that the regret of our algorithm is near-optimal. Since the lower bound of Theorem~\ref{th:KTlowerB} applies to the case where the action-state mapping is adversarial and the losses are stochastic, we find the following result as a corollary of Theorem~\ref{th:KTlowerB},  Theorem~\ref{th:STlowerB}, and Theorem~\ref{thm:lb-fixed-delay}. 
\begin{corollary}
    Suppose that the action-state mapping is adversarial, the losses are stochastic, and that $d_t = d$ for all $t \in [T]$. If $T \ge 1 + \min\{S, d\}$, then there exists a distribution of losses and a sequence of action-state mappings such that any (possibly randomized) algorithm suffers regret 
\[
    \E{R_T} = \Omega\big(\!\max\big\{\sqrt{KT}, \sqrt{\min\{S, d\}T}, (d+1)\sqrt{S}\big\}\big) \;.
\]
\end{corollary}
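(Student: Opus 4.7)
The corollary is a direct consequence of the three preceding lower bounds. Each of \Cref{th:KTlowerB,th:STlowerB,thm:lb-fixed-delay} exhibits a hard instance lying within the setting of the corollary (adversarial action-state mapping, stochastic losses, constant delay $d_t = d$), so each provides a separate lower bound on the worst-case expected regret of any (possibly randomized) algorithm. Since the worst case is at least as large as each one of these constructions individually, the plan is simply to take the maximum of the three bounds.

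Concretely, I would verify the hypotheses and invoke each theorem in turn. First, \Cref{th:KTlowerB} yields the $\Omega(\sqrt{KT})$ term with no additional hypothesis, as the construction there already fits the adversarial action-state mapping with stochastic (in fact deterministic) losses. Second, \Cref{th:STlowerB} yields the $\Omega(\sqrt{\min\{S,d\}T})$ term, since its hypothesis $T \ge \min\{S,d\}$ is immediate from the assumption $T \ge 1 + \min\{S,d\}$. Third, \Cref{thm:lb-fixed-delay} yields $\Omega(\min\{(d+1)\sqrt{S}, \sqrt{(d+1)T}\})$, and the min evaluates to $(d+1)\sqrt{S}$ precisely when $T \ge (d+1)S$, which is the regime where $(d+1)\sqrt{S}$ is the dominant of the two quantities inside the min.

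There is no substantive obstacle to overcome; the argument is pure bookkeeping. The only small point worth recording is the interplay between $(d+1)\sqrt{S}$ and $\sqrt{(d+1)T}$ in the third term. In the complementary regime $T < (d+1)S$, the third bound degrades to $\sqrt{(d+1)T}$; but since $d+1 \ge \min\{S,d\}$, this is at least $\sqrt{\min\{S,d\}T}$, so the second term of the maximum already covers that regime, and no information is lost by writing the cleaner $(d+1)\sqrt{S}$ in the stated maximum. Combining the three $\Omega$-bounds then gives the claim.
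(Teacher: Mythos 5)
Your overall approach—assemble the three preceding lower bounds and take the maximum—is precisely what the paper does; the text before the corollary explicitly states that it follows as a corollary of \Cref{th:KTlowerB}, \Cref{th:STlowerB}, and \Cref{thm:lb-fixed-delay}, with no additional argument given.

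However, your justification for promoting $\min\{(d+1)\sqrt{S},\sqrt{(d+1)T}\}$ to $(d+1)\sqrt{S}$ runs in the wrong direction. In the regime $T < (d+1)S$, \Cref{thm:lb-fixed-delay} only delivers $\Omega(\sqrt{(d+1)T})$, which can be \emph{strictly smaller} than $(d+1)\sqrt{S}$. You claim ``no information is lost,'' but actually you would be \emph{adding} information: the corollary then asserts a larger lower bound than the three theorems prove. Your observation that $\sqrt{(d+1)T} \ge \sqrt{\min\{S,d\}T}$ does not help; it shows the third term dominates the second in that regime, not that $(d+1)\sqrt{S}$ is recovered. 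In fact, the corollary as literally written over-claims here: with $K=2$, $d=100$, $S=1000$, $T=200$, the hypothesis $T \ge 1+\min\{S,d\}=101$ holds, yet $(d+1)\sqrt{S} \approx 3195 > T = 200$, whereas regret in $T$ rounds is trivially at most $T$. This is really a looseness in the corollary's statement rather than a flaw unique to your write-up, but a careful proof should keep the third term as $\min\{(d+1)\sqrt{S}, \sqrt{(d+1)T}\}$ (exactly what \Cref{thm:lb-fixed-delay} provides), or restrict to the regime $T \ge (d+1)S$ where the simplification to $(d+1)\sqrt{S}$ is valid, rather than assert the cleaner form holds unconditionally.
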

\textbf{Stochastic action-state mappings and adversarial losses.}
In this case we recover the standard lower bound for adversarial bandits with bounded delay.
\begin{restatable}{rethm}{thm:stoadvlowerbound}\label{th:advstochlowerbound}
    Suppose that the action-state mapping is stochastic, the losses are adversarial, and that $d_t = d$ for all $t \in [T]$. Then there exists a stochastic action-state mapping and a sequence of losses such that any (possibly randomized) algorithm suffers regret 
    $
        \E{R_T} = \Omega\big(\!\max\big\{\sqrt{KT}, \sqrt{d T}\big\}\big)
    $.
\end{restatable}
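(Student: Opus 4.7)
The bound has two pieces, $\Omega(\sqrt{KT})$ and $\Omega(\sqrt{dT})$, and I would handle them separately, then combine by the standard argument that if any single construction forces regret $\Omega(a)$ and another forces $\Omega(b)$, then the worst case forces $\Omega(\max\{a,b\})$.

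The $\Omega(\sqrt{KT})$ half is free: the construction in the proof of Theorem~\ref{th:KTlowerB} uses deterministic (hence, as a special case, stochastic) action-state mappings and deterministic (hence adversarial) losses, so it already falls within the regime considered here, and the conclusion $\E{R_T} = \Omega(\sqrt{KT})$ carries over verbatim.

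For the $\Omega(\sqrt{dT})$ half, the plan is to reduce \setting with stochastic action-state mappings and adversarial losses to the classical problem of $K$-armed adversarial bandits with fixed delay $d$. Take $S = K$ and set $P(s \mid a) = \mathbbm{1}[s=a]$, so the ``stochastic'' mapping is actually the identity. Any sequence of loss vectors $\ell_1,\ldots,\ell_T \in [0,1]^\calA$ for the standard delayed bandit problem can then be used as the loss vectors in \setting. Under this coupling, the intermediate observation $S_t = A_t$ is a deterministic function of the learner's own action and hence carries no information beyond what the learner already knows; equivalently, any \setting algorithm can be simulated by a standard delayed-bandit algorithm that internally computes $S_t = A_t$ and ignores it. Consequently, the two problems are information-theoretically equivalent, and the expected regret in \setting equals the expected regret in standard $K$-armed adversarial bandits with fixed delay $d$. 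Invoking the lower bound of \citet[Theorem~4]{cesa2019delay}, which gives $\Omega(\max\{\sqrt{KT}, \sqrt{dT \ln K}\})$ (and in particular $\Omega(\sqrt{dT})$) for this problem, yields the claim.

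Combining the two constructions via a worst-case argument over the family of instances produced above gives $\E{R_T} = \Omega(\max\{\sqrt{KT}, \sqrt{dT}\})$. I do not expect any real obstacle in this proof: the only subtle point is verifying that the deterministic specialization $P(s\mid a) = \mathbbm{1}[s=a]$ is admissible under the definition of ``stochastic action-state mapping'' adopted in Section~\ref{sec:def}, and that feeding an adversarial loss sequence into \setting with this mapping produces exactly the transcript of a standard delayed bandit interaction; both are immediate from the definitions.
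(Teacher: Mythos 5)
Your strategy is essentially the same as the paper's: the $\sqrt{KT}$ half is inherited from Theorem~\ref{th:KTlowerB}, and the $\sqrt{dT}$ half is obtained by a deterministic action-state mapping that makes the intermediate observation uninformative and reduces the problem to standard delayed adversarial bandits, for which the lower bound of \citet{cesa2019delay} applies. The only substantive difference is the choice of mapping: you propose $S=K$ states with the identity mapping $P(s\mid a)=\mathbbm{1}[s=a]$, recovering a $K$-armed delayed bandit, whereas the paper uses just two states $h_1, h_2$ and maps one distinguished action to $h_2$ and all others to $h_1$, recovering a two-armed delayed bandit. Both deliver $\Omega(\sqrt{dT})$. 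However, your construction implicitly requires $S\ge K$ in order to make the mapping injective, and this is not guaranteed by the problem definition ($S\ge 2$ is the only assumption). The paper's two-state construction sidesteps this entirely. This is a small gap, easily patched (e.g., use only two of the $K$ actions and map them to two distinct states, letting all remaining actions collide on one of them), but as written your construction does not cover the regime $S<K$.
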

\begin{proof}
    Since by Theorem~\ref{th:KTlowerB} we already know that any algorithm must suffer $\Omega\big(\sqrt{KT}\big)$ regret, we only need to show a $\Omega(\sqrt{d T})$ lower bound. We use two states, $h_1$ and $h_2$. Our  action-state mapping is deterministic and, for all $t \ge 0$, assigns $s_t(a) = h_1$ to all but one action $a^\star$, to which the mapping assigns $s_t(a^\star) = h_2$. We now have constructed a two-armed bandit problem with delayed feedback and $T$ rounds, for which a $\Omega(\sqrt{d T})$ lower bound is known \citep{cesa2019delay}.
\end{proof}

\textbf{Adversarial action-state mappings, adversarial losses.} Since we can recover the construction of the lower bound in Theorem~\ref{th:advstochlowerbound}, we have the following result. 
\begin{corollary}
    Suppose that the action-state mapping is adversarial, the losses are adversarial, and that $d_t = d$ for all $t \in [T]$. Then there exists an action-state mapping and a sequence of losses such that any (possibly randomized) algorithm suffers regret 
    $
        \E{R_T} = \Omega\big(\!\max\big\{\sqrt{KT}, \sqrt{d T}\big\}\big)
    $.
\end{corollary}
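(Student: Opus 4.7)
The plan is to observe that this is essentially a free corollary obtained by reusing existing constructions. First I would note that the action-state mapping used in the proof of Theorem~\ref{th:advstochlowerbound} is in fact \emph{deterministic}: it assigns $s_t(a)=h_1$ to every action $a \neq a^\star$ and $s_t(a^\star)=h_2$, independently of $t$. A deterministic mapping is trivially both a valid stochastic mapping (degenerate distribution) and a valid oblivious adversarial mapping, so the same construction automatically fits the adversarial action-state regime considered here.

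Next, keeping the losses exactly as in Theorem~\ref{th:advstochlowerbound}, the instance reduces to a standard two-armed adversarial bandit with uniform delay $d$, for which \citet{cesa2019delay} established the lower bound $\E{R_T}=\Omega(\sqrt{dT})$. Combining this with Theorem~\ref{th:KTlowerB}, which provides the $\Omega(\sqrt{KT})$ lower bound irrespective of which of the four regimes one is in, yields the claimed $\Omega\bigl(\max\{\sqrt{KT},\sqrt{dT}\}\bigr)$ bound by taking the worse of the two instances.

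The only subtlety, and the one step I would spell out carefully in a full write-up, is verifying that the lower bound construction used in Theorem~\ref{th:advstochlowerbound} indeed survives verbatim in the fully-adversarial regime. Concretely, I would check that the reduction treats the (deterministic) mapping as part of the adversary's choice rather than as a stochastic parameter of the environment, and that Yao's minimax principle (implicitly invoked already in Theorem~\ref{th:KTlowerB}) can be applied consistently to both components simultaneously. There is no genuine technical obstacle here; the main ``work'' is bookkeeping to make sure the hard instance for the stochastic-mapping/adversarial-loss case is realized by a legitimate oblivious adversary choosing both $(s_t)_{t\in[T]}$ and $(\ell_t)_{t\in[T]}$ in advance.
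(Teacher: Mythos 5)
Your proposal is correct and matches the paper's own reasoning: the corollary follows because the hard instance in Theorem~\ref{th:advstochlowerbound} uses a deterministic (hence also adversarial) action-state mapping, and the $\Omega(\sqrt{KT})$ term comes from Theorem~\ref{th:KTlowerB}, which already covers all four regimes.
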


\section{Experiments}
\label{sec:experiments}

We empirically compare our algorithm \metaif with the following baselines: \dadaexp \citep{gyorgy21} for adversarial delayed bandits without intermediate observations (which we used to instantiate the algorithm $\calB$), the standard \ucb algorithm \citep{AuerCF02} for stochastic bandits without delays and intermediate observations, and \nsd \citep{Vernade0M20} for nonstationary stochastic action-state mappings and stochastic losses.
We run all experiments with a time horizon of $T=10^4$.
All our plots show the cumulative regret of the algorithms considered as a function of time.
The performance of each algorithm is averaged over $20$ independent runs in every experiment, and the shaded areas consider a range centered around the mean with half-width corresponding to the empirical standard deviation of these $20$ repetitions.
In the first two experiments, we consider both fixed delays $d \in \{50, 100, 200\}$ and random delays $d_t \sim \mathrm{Laplace}(50,25)$ sampled i.i.d.\ from the Laplace distribution with $\E{d_t} = 50$.

\begin{figure}[h]
    \centering
    \begin{subfigure}[t]{0.48\linewidth}
         \centering
         \caption{$d = 50$}
         \label{fig:sto_delay_50}
         \includegraphics[width=\textwidth]{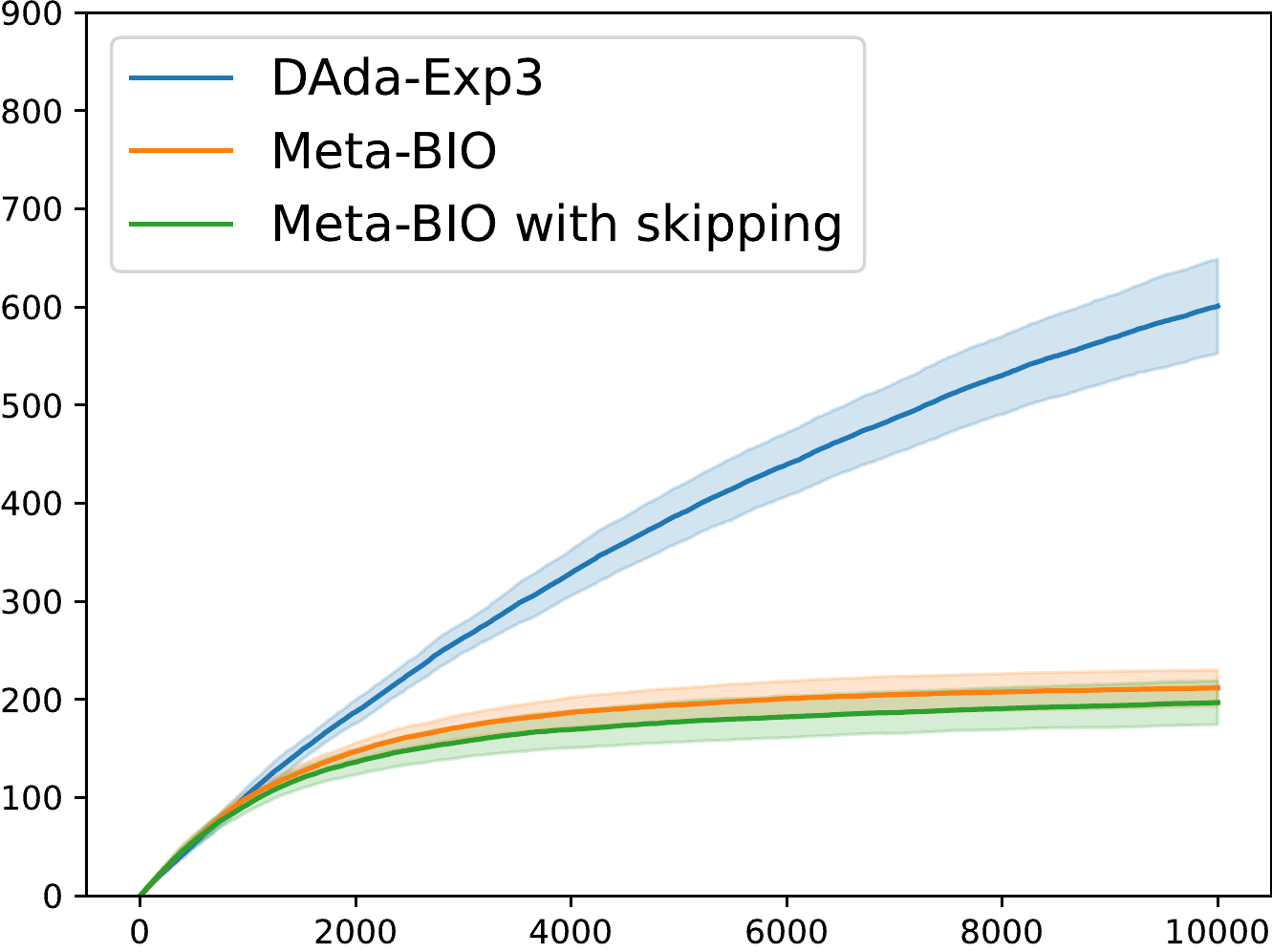}
    \end{subfigure}
    \begin{subfigure}[t]{0.48\linewidth}
         \centering
         \caption{$d = 100$}
         \label{fig:sto_delay_100}
         \includegraphics[width=\textwidth]{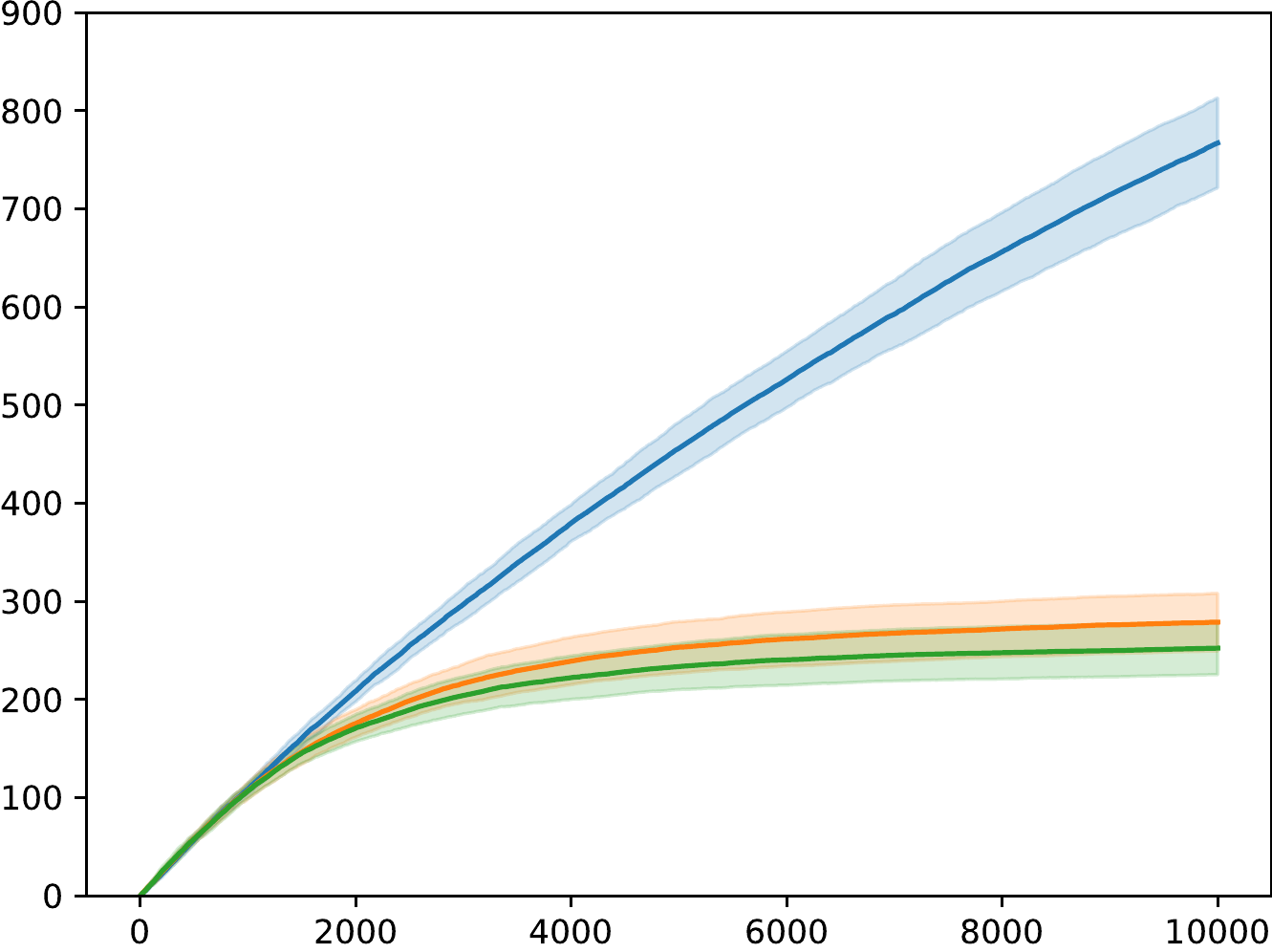}
    \end{subfigure}
    \hfill
    \begin{subfigure}[t]{0.48\linewidth}
         \centering
         \caption{$d = 200$}
         \label{fig:sto_delay_200}
         \includegraphics[width=\textwidth]{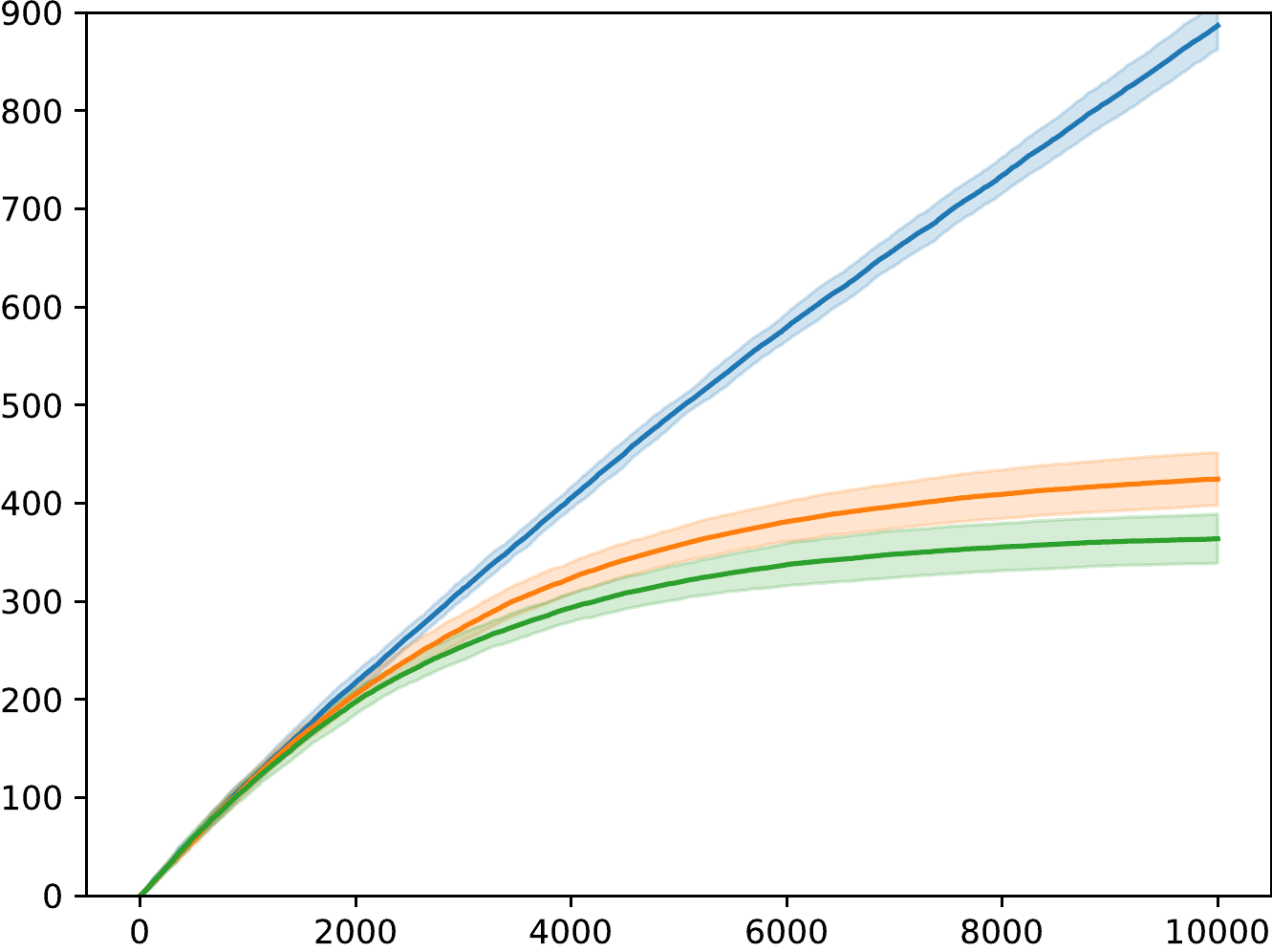}
    \end{subfigure}
    \begin{subfigure}[t]{0.48\linewidth}
         \centering
         \caption{$d_t \sim \mathrm{Laplace}(50,25)$}
         \label{fig:sto_delay_random}
         \includegraphics[width=\textwidth]{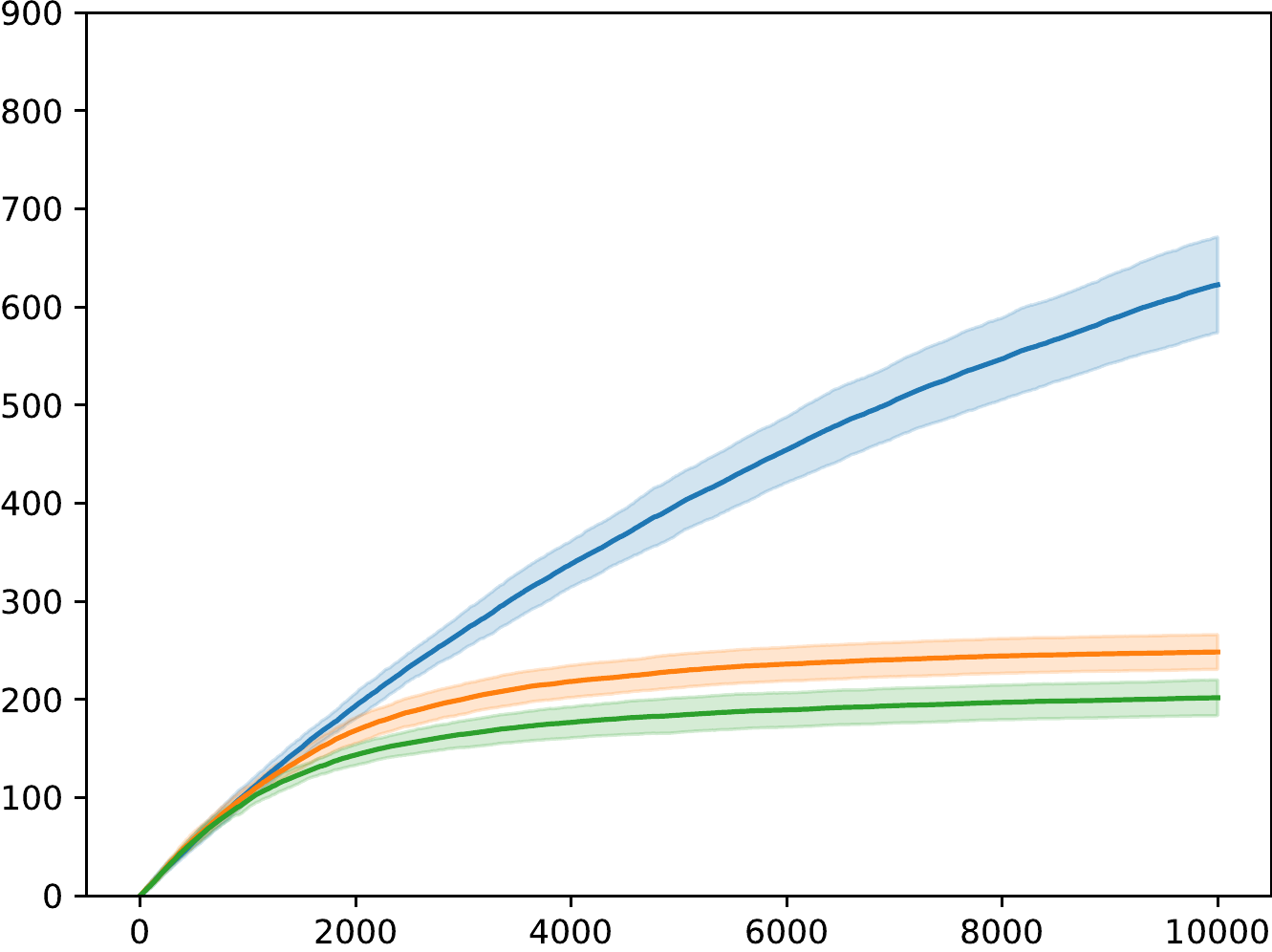}
    \end{subfigure}
    \caption{Cumulative regret over time for the stochastic action-state mapping when delays are fixed or random.}
    \label{fig:stoc_transition}
\end{figure}

\textbf{Experiment 1: stochastic action-state mappings.}
Here we use a stationary version of the experiments in \citep{Vernade0M20}---see \Cref{table:stochastic_action_state_map} in \Cref{app:mappings} for details.
We set $K=4$ and $S=3$, while we repeat this experiment for the previously mentioned values of delays.
\Cref{fig:stoc_transition} shows that, across all delay regimes, \metaif largely improves on the performance of \dadaexp by exploiting intermediate observations.

\textbf{Experiment 2: adversarial action-state mappings.}
In this construction, we simulate the adversarial mapping using a construction adapted from \citep{ZimmertS21}: we alternate between two stochastic mappings while keeping the loss means fixed.
We set $K=4$, $S=3$, and we consider multiple instances for the different values of delays as in the previous experiment.
The interval between two consecutive changes in the distribution of action-state mappings grows exponentially.
See \Cref{table:adversarial_action_state_map} in \Cref{app:mappings} for details.
\Cref{figure:adversarial_map} shows that \metaif and \metaif with ``skipping'' outperform both \ucb and \nsd.

\begin{figure}[h]
    \centering
    \begin{subfigure}[t]{0.48\linewidth}
         \centering
         \caption{$d = 50$}
         \label{fig:adv_delay_50}
         \includegraphics[width=\textwidth]{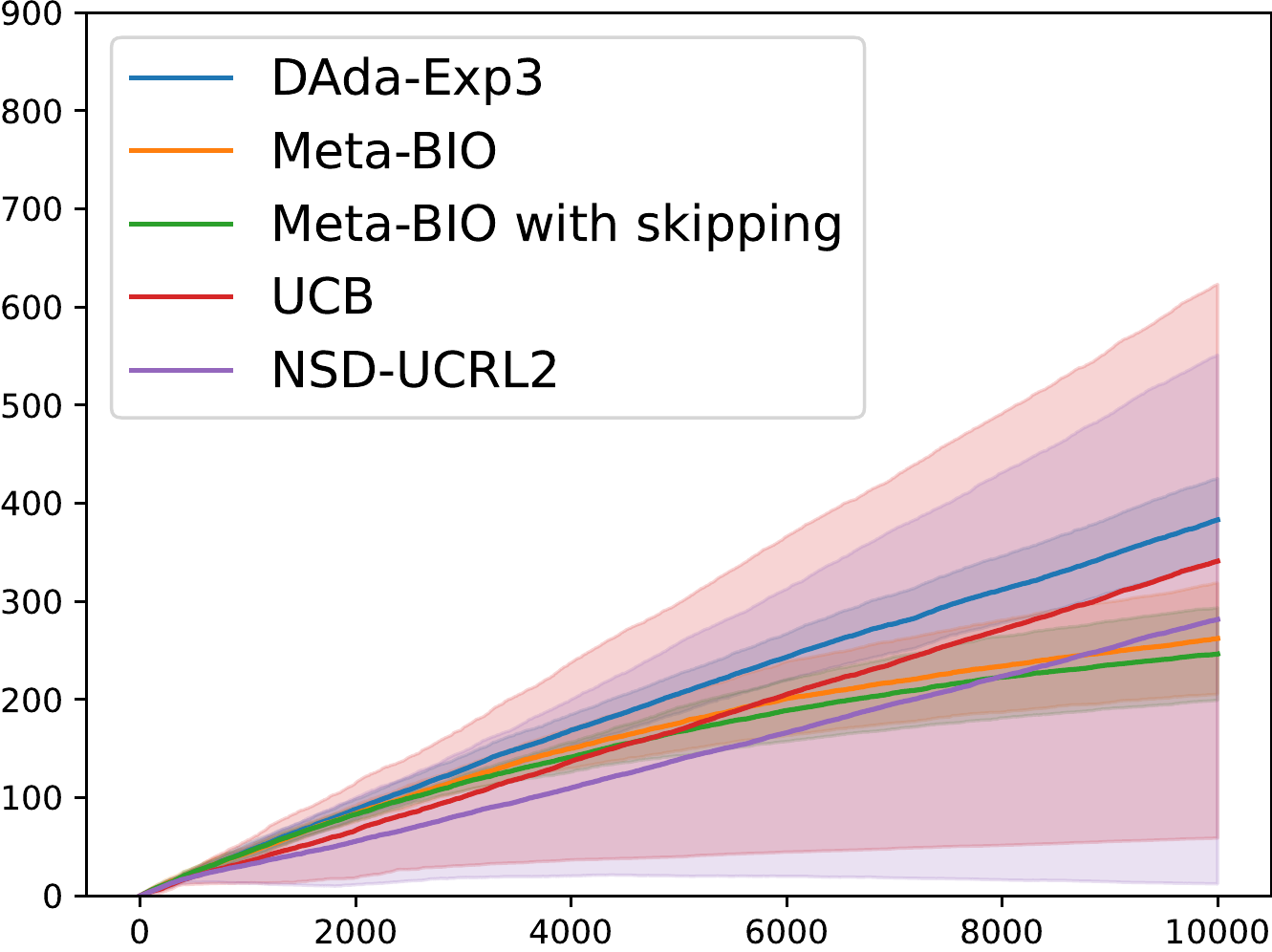}
     \end{subfigure}
    \begin{subfigure}[t]{0.48\linewidth}
         \centering
         \caption{$d = 100$}
         \label{fig:adv_delay_100}
         \includegraphics[width=\textwidth]{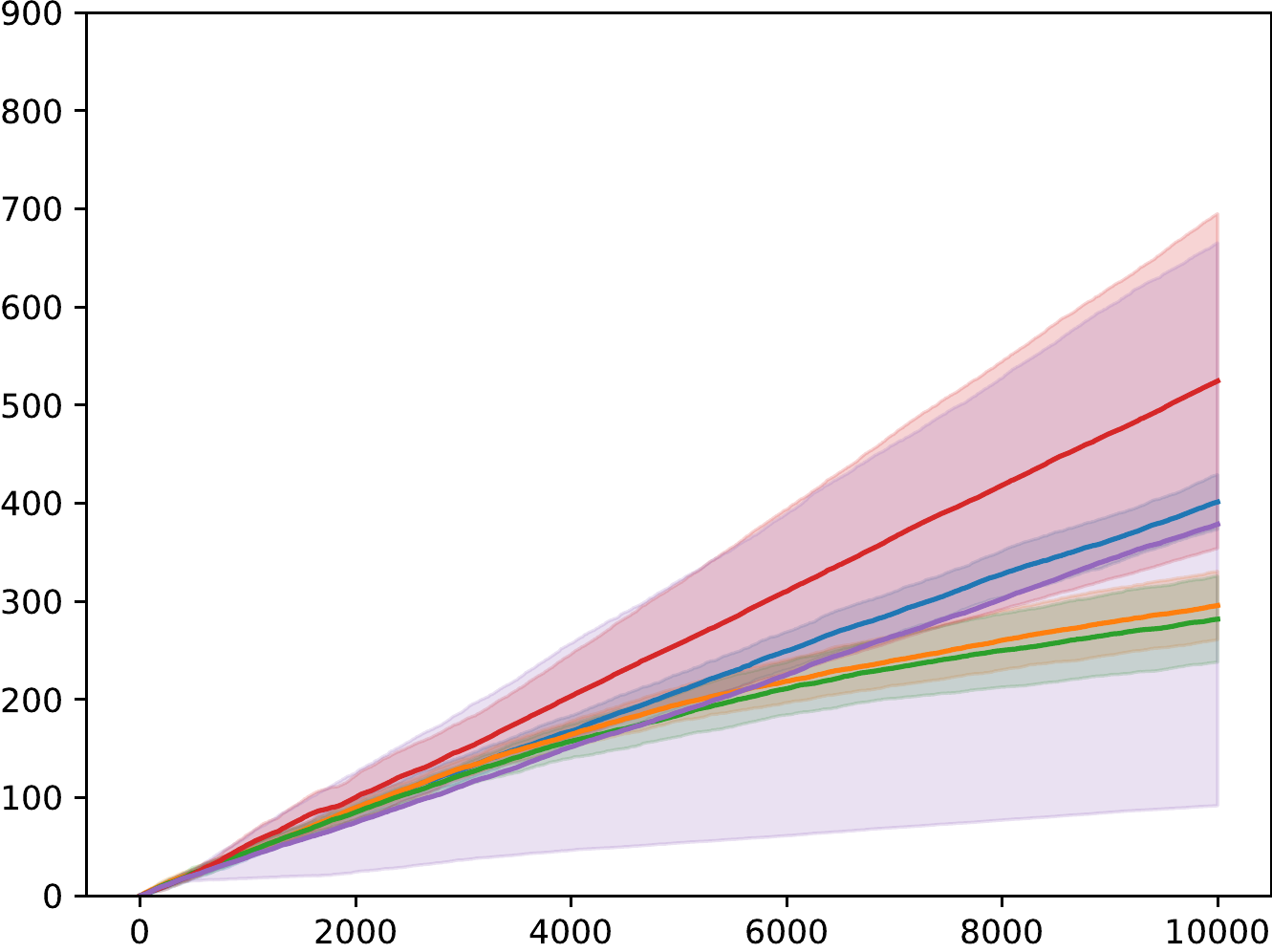}
     \end{subfigure}
    \hfill
    \begin{subfigure}[t]{0.48\linewidth}
         \centering
         \caption{$d = 200$}
         \label{fig:adv_delay_200}
         \includegraphics[width=\textwidth]{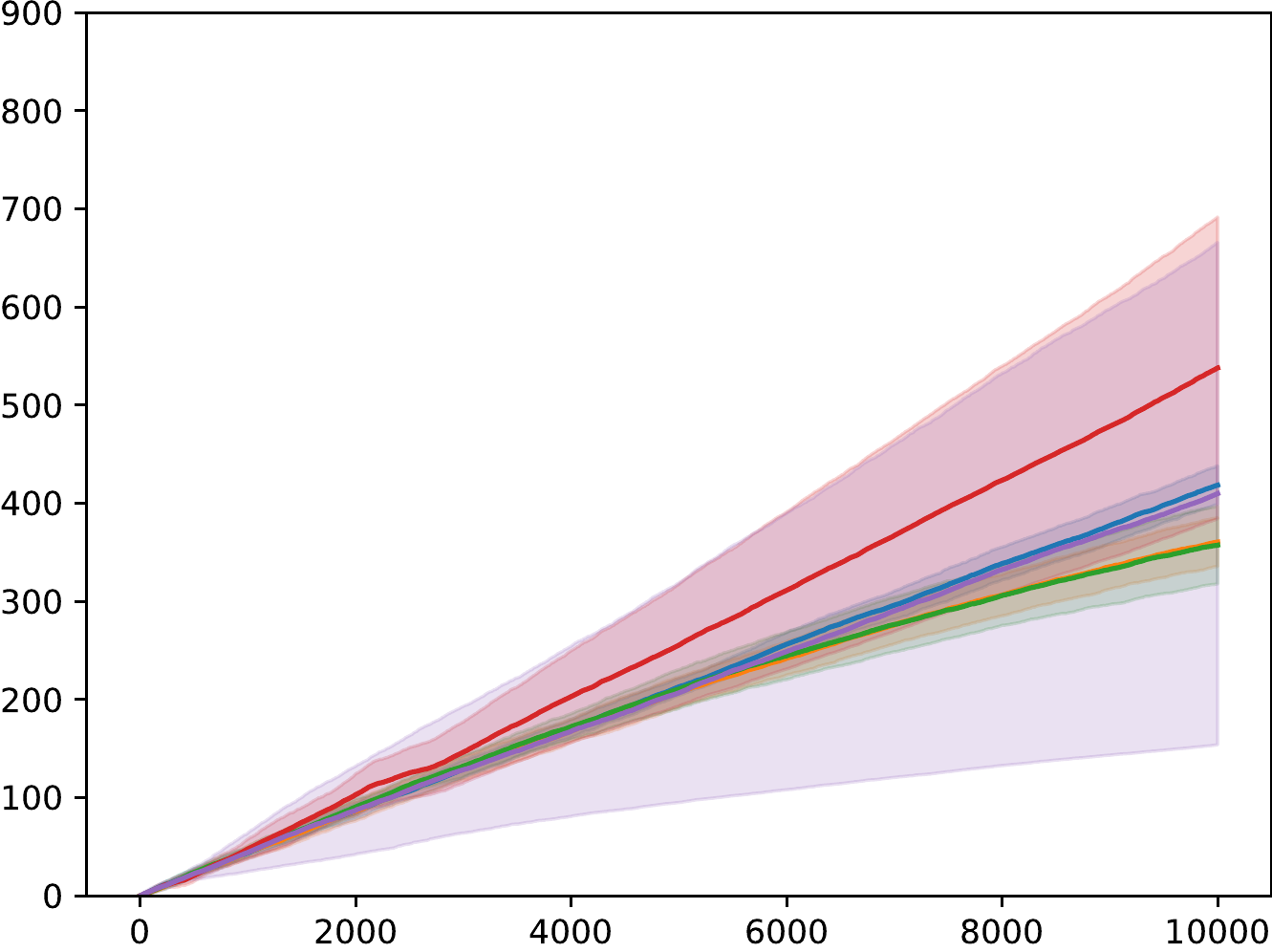}
    \end{subfigure}
    \begin{subfigure}[t]{0.48\linewidth}
         \centering
         \caption{$d_t \sim \mathrm{Laplace(50,25)}$}
         \label{fig:adv_delay_random}
         \includegraphics[width=\textwidth]{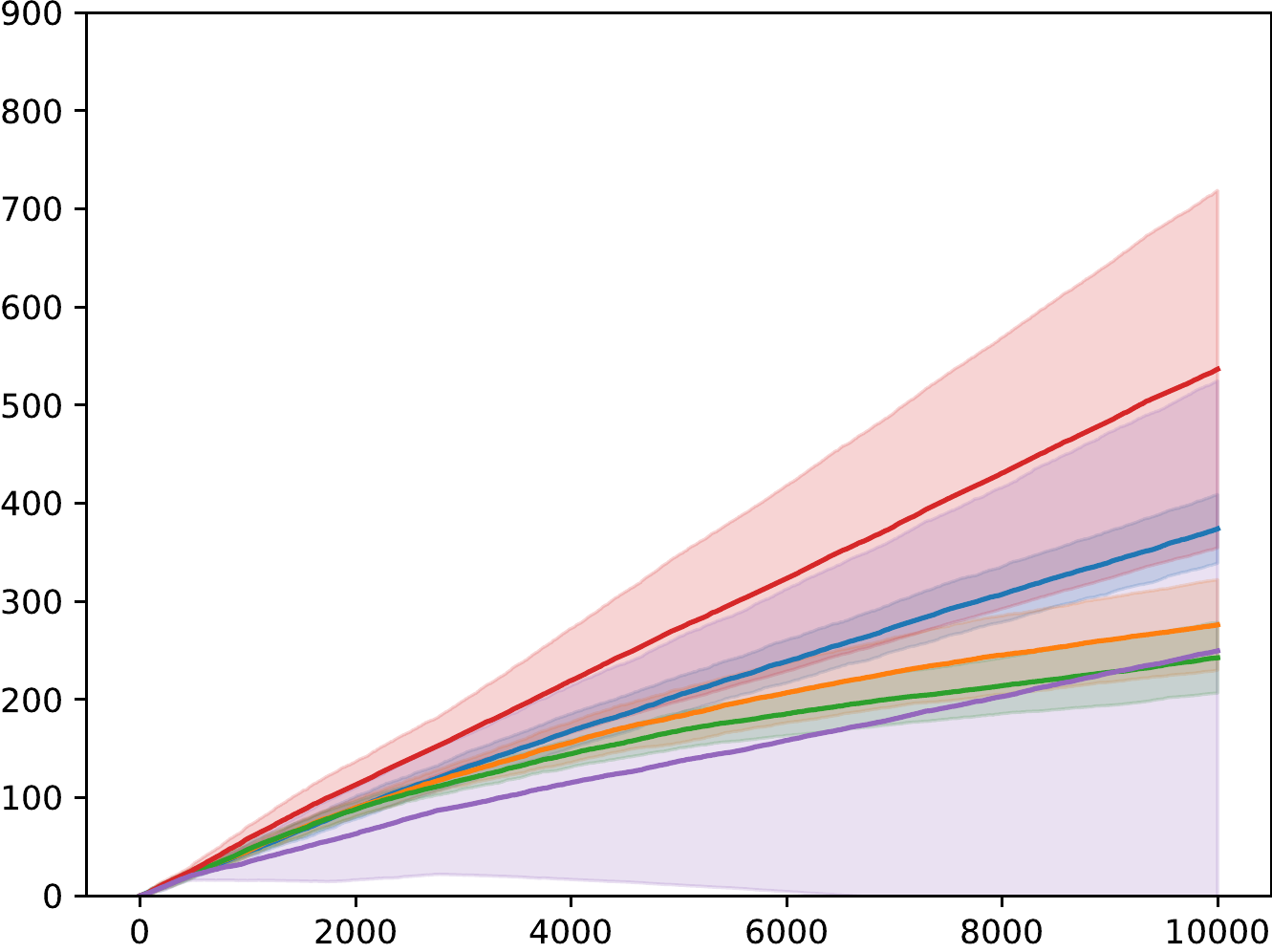}
     \end{subfigure}
    \caption{Cumulative regret over time for the adversarial action-state mapping when delays are fixed or random. All algorithms have small variance except for \ucb and \nsd.}
    \label{figure:adversarial_map}
\end{figure}

\textbf{Experiment 3: utility of intermediate observations.}
Here we set $K = 8$, $d = 100$, and investigate how the performance of \metaif changes when the number $S$ of states varies in $\{4,6,8,10,12\}$.
The mean loss is always $0.2$ for the optimal state and $1$ for the others.
The optimal action always maps to the optimal state.
The suboptimal actions map to the optimal state with probability $0.6$ and map to a random suboptimal state with probability $0.4$.
This implies that the expected loss of each arm remains constant when the number of states changes.
\Cref{fig:state} shows that the regret gap between \metaif and \dadaexp shrinks as the number of states increases.
This observation confirms our theoretical findings about the dependency of the regret on the number of states, which lead to a larger improvement the fewer they are.

\begin{figure}[ht]
    \centering
    \includegraphics[width=0.38\textwidth]{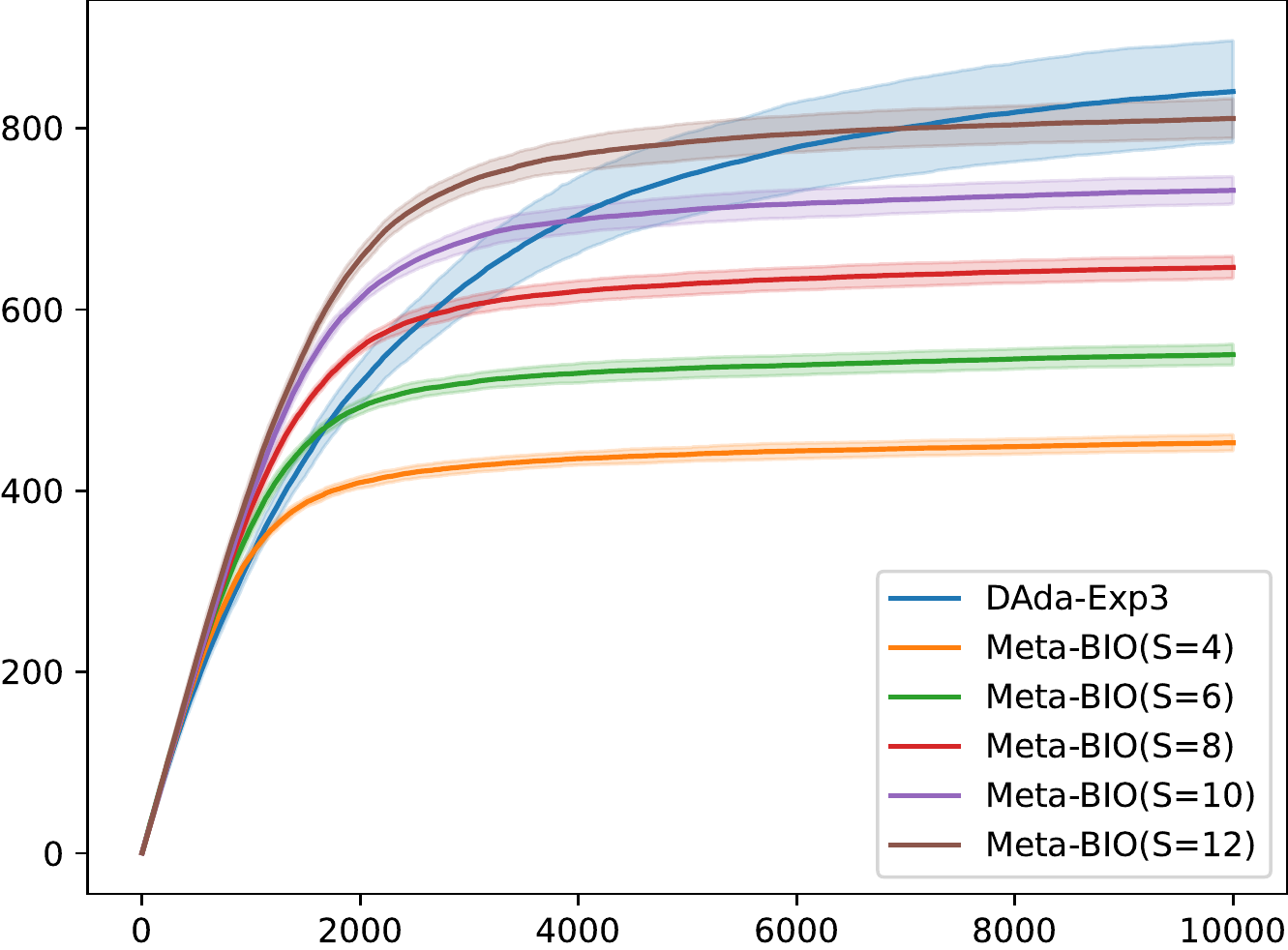}
    \caption{Cumulative regret over time of both \dadaexp and \metaif with different numbers of states $S \in \{4,6,8,10,12\}$.}
    \label{fig:state}
\end{figure}

\textbf{Experiment 4: performance of \metaifswitch when $S<d$.}
We use the same setting as in Experiment~1 with delay $d=20$.\footnote{Compared to the switching condition used for the analysis of \metaifswitch, we replace $49ST\ln\frac{8ST}{\delta}$ with $ST$. This change allows the switching condition to be triggered more easily to provide a better visualization of the behaviour of \metaifswitch, while it only introduces a polylog factor in its regret bound.}
\Cref{fig:switch} shows the performance of \metaifswitch compared with both \dadaexp and \metaif.
Before the switching point, \metaifswitch runs \dadaexp (up to independent internal randomization). Afterwards, \metaifswitch switches to \metaif (which in turn runs \dadaexp as a subroutine) and quickly aligns with its performance.
Note that, at the switching time, \metaifswitch uses (via \metaif) the same instance of \dadaexp that was already running, rather than starting a new instance. It can be shown that our analysis of \metaifswitch applies to this variant as well without changes in the order of the bound.

\begin{figure}[h]
    \centering
    \includegraphics[width=0.38\textwidth]{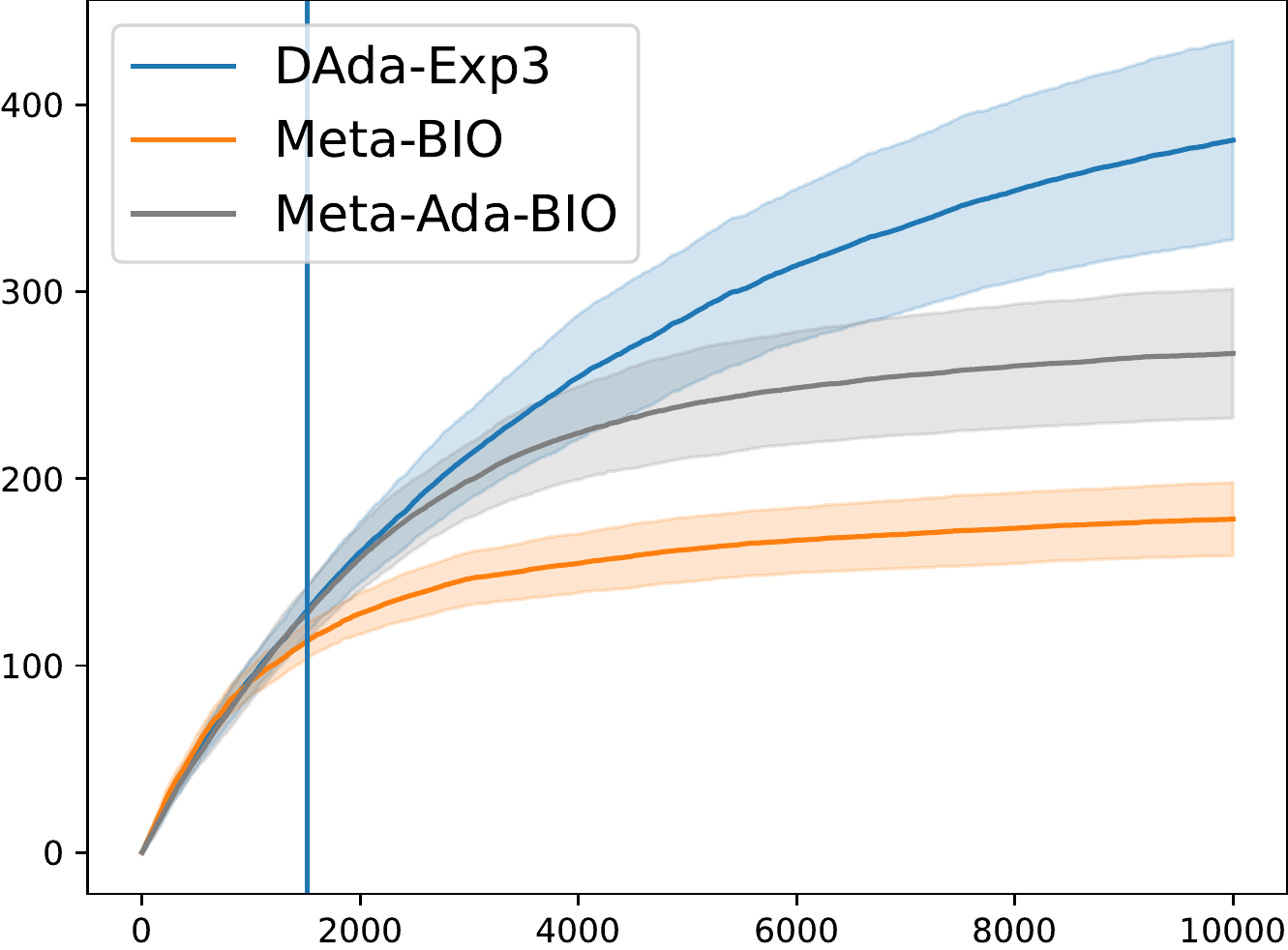}
    \caption{Cumulative regret over time of \dadaexp, \metaif and \metaifswitch. The vertical blue line marks the switching point of \metaifswitch.}
    \label{fig:switch}
\end{figure}

\paragraph{Experiment 5: performance of \metaifswitch when $S>d$.}
We use a setting that is almost identical to that of Experiment~3 (\Cref{sec:experiments}), except we set $d = 4$ and $S = 14$. The performance of the three algorithms is shown in \Cref{fig:switch_add}. We can observe that \metaifswitch does not switch to \metaif and its performance is thus the same as that of \dadaexp, whereas \metaif incurs a larger regret.

\begin{figure}[h]
    \centering
    \includegraphics[width=0.38\textwidth]{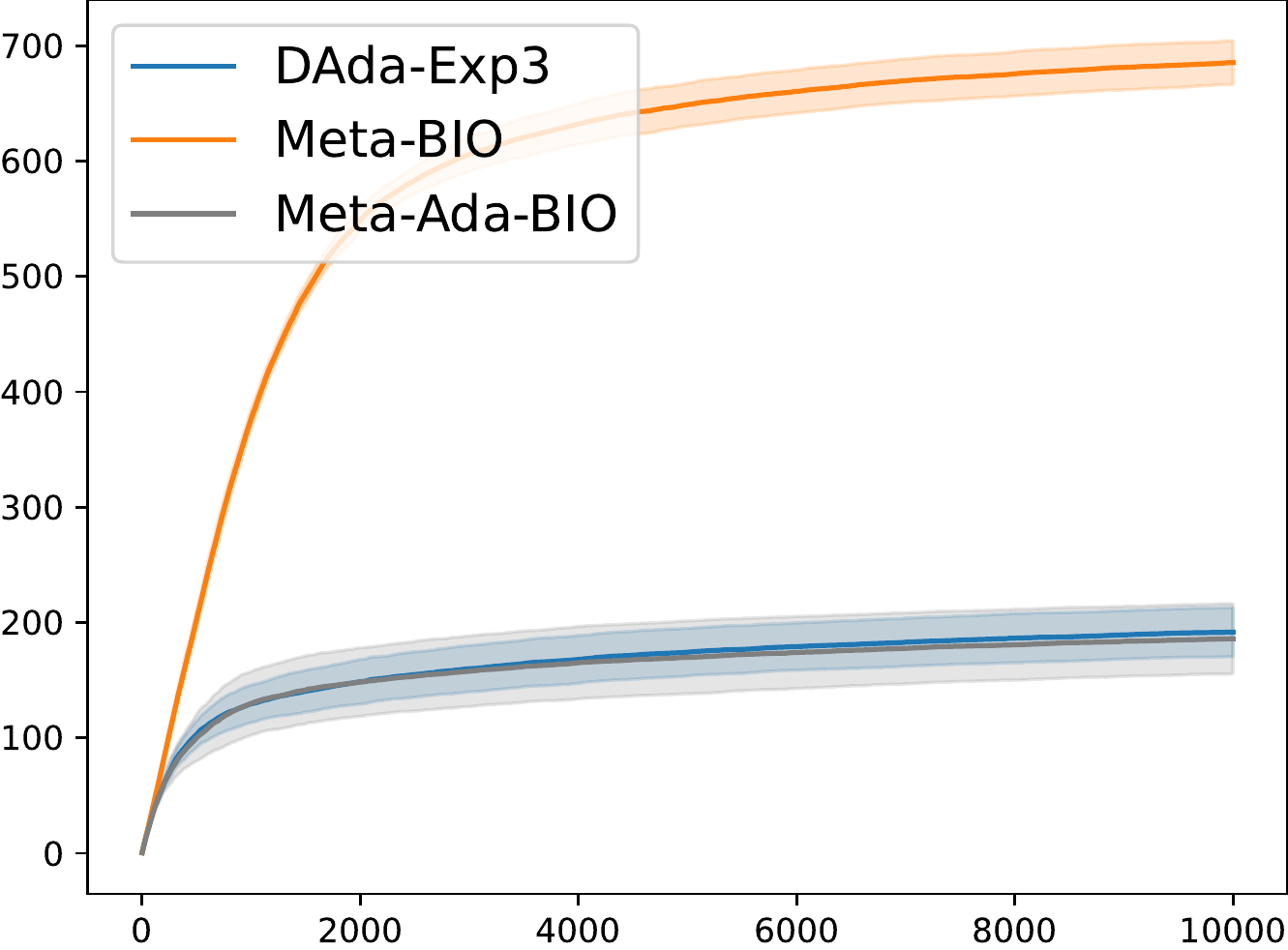}
    \caption{Cumulative regret over time of \dadaexp, \metaif and \metaifswitch when $S>d$.}
    \label{fig:switch_add}
\end{figure}

\section{Future Work}
\label{sec:discussion}

The work of \citet{Vernade0M20} also considers a non-stationary action-state mapping and derive regret bounds for the switching regret. Preliminary results suggest that, as long as there is an algorithm that can provide bounds on the switching regret with delayed feedback, our ideas also transfer to this setting. Unfortunately, there is currently no algorithm that can provide bounds on the switching regret with delayed feedback and we leave this as a promising direction for future work. 

\section*{Acknowledgements}
EE, NCB, and HQ are partially supported by the MIUR PRIN grant Algorithms, Games, and Digital Markets (ALGADIMAR), by the EU Horizon 2020 ICT-48 research and innovation action under grant agreement 951847, project ELISE (European Learning and Intelligent Systems Excellence), and by the FAIR (Future Artificial Intelligence Research) project, funded by the NextGenerationEU program within the PNRR-PE-AI scheme (M4C2, investment 1.3, line on Artificial Intelligence). SM acknowledges funding from the European Union's Horizon 2020 research and innovation programme under the Marie Skłodowska-Curie grant agreement No 801199. YS acknowledges partial support by the Independent Research Fund Denmark, grant number 9040-00361B. This work was mostly done while DvdH was at the University of Milan partially supported by the MIUR PRIN grant Algorithms, Games, and Digital Markets (ALGADIMAR) and partially supported by Netherlands Organization for Scientific Research (NWO), grant number VI.Vidi.192.095.

\bibliography{refs}
\bibliographystyle{icml2023}

\newpage
\appendix
\onecolumn

\section{Auxiliary Results}

\begin{lemma} \label{lem:regret-notions-comparison}
    Consider any algorithm that picks actions $\lr{A_t}_{t\in[T]}$ in the adversarial delayed bandits problem with intermediate feedback with arbitrary action-state mappings $\lr{s_t}_{t\in[T]}$ and i.i.d.\ loss vectors $\lr{\ell_t}_{t\in[T]}$.
    Then, for any given $\delta \in (0,1)$,
    \[
        R_T - \calR_T \le \sqrt{2T\ln(2/\delta)}
        \qquad \text{and} \qquad
        \calR_T - R_T \le \sqrt{2T\ln(2K/\delta)}
    \]
    individually hold with probability at least $1-\delta$.
\end{lemma}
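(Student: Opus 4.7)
The plan is to reduce each inequality to a single Azuma–Hoeffding application by folding the $\theta(S_t)-\ell_t(S_t)$ discrepancy and the comparator's concentration term into one martingale difference sequence with conditionally bounded range.

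First I would set up notation. Let $L_a = \sum_{t=1}^T \ell_t(s_t(a))$ and $\bar L_a = \sum_{t=1}^T \theta(s_t(a))$. Because the mappings $s_1,\dots,s_T$ are oblivious and the distribution of losses is fixed, $a^* \in \argmin_a \bar L_a$ is \emph{deterministic}, whereas $\hat a \in \argmin_a L_a$ is random. Using $\min_a L_a \le L_{a^*}$ and $\min_a \bar L_a \le \bar L_{\hat a}$, the two target quantities rearrange as
\begin{align*}
R_T - \calR_T &\le \sum_{t=1}^T \bigl(\theta(S_t) - \ell_t(S_t) + \ell_t(s_t(a^*)) - \theta(s_t(a^*))\bigr), \\
\calR_T - R_T &\le \max_{a\in\calA}\sum_{t=1}^T \bigl(\ell_t(S_t) - \theta(S_t) + \theta(s_t(a)) - \ell_t(s_t(a))\bigr).
\end{align*}

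For the first direction I would show that the summand $Y_t = \theta(S_t) - \ell_t(S_t) + \ell_t(s_t(a^*)) - \theta(s_t(a^*))$ is a martingale difference with respect to the natural filtration $\calF_t$ generated by the learner's internal randomness and $(A_j,S_j,\ell_j)_{j\le t}$, augmented with $(A_t,S_t)$ before $\ell_t$ is drawn. Conditioning on this augmented history, both $\theta(S_t)$ and $\theta(s_t(a^*))$ are fixed while $\ell_t$ is independent of the history (this is the key use of the i.i.d.\ assumption on losses, together with $a^*$ being nonrandom), so $\E[Y_t \mid \cdot] = 0$ and $Y_t$ lies in an interval of width at most $2$ (the only random part is $\ell_t(s_t(a^*)) - \ell_t(S_t) \in [-1,1]$). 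Azuma–Hoeffding then gives $\pr{\sum_t Y_t \ge \sqrt{2T\ln(1/\delta)}} \le \delta$, which implies the claimed $\sqrt{2T\ln(2/\delta)}$ bound.

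The second direction is analogous, but because $\hat a$ is random we cannot plug it directly into the MDS. Instead I would fix each $a\in\calA$, define $Z_t^a = \ell_t(S_t) - \theta(S_t) + \theta(s_t(a)) - \ell_t(s_t(a))$, and apply the same Azuma–Hoeffding argument (range $\le 2$, zero conditional mean) with confidence $\delta/K$, then take a union bound over the $K$ arms. This yields $\pr{\max_a \sum_t Z_t^a \ge \sqrt{2T\ln(K/\delta)}} \le \delta$, and $\sqrt{2T\ln(K/\delta)} \le \sqrt{2T\ln(2K/\delta)}$ concludes.

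The only subtle point—and the one I would spell out carefully—is the filtration argument showing that $\ell_t$ remains independent of $(A_t,S_t)$ conditional on the past, despite the fact that $A_t$ can depend on previously revealed (delayed) losses. This is where the i.i.d.\ assumption on the loss vectors is used, and it is what justifies the MDS property underlying both Azuma–Hoeffding applications; everything else is routine.
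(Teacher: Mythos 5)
Your proposal is correct, and it takes a genuinely different route from the paper's proof. The paper splits $R_T - \calR_T$ into two separate sums---$\sum_t\bigl(\theta(S_t)-\ell_t(S_t)\bigr)$ and the comparator discrepancy $(\triangle)$---and applies Azuma--Hoeffding to each with confidence $\delta'=\delta/2$, paying a union bound over these two events. You instead fold both discrepancies into a single martingale difference sequence $Y_t$ (respectively $Z_t^a$) whose conditional range is $2$ rather than $1$, and apply Azuma--Hoeffding once. This trades a doubled range for one fewer union bound, and the bookkeeping comes out slightly in your favor: you get $\sqrt{2T\ln(1/\delta)}$ directly (versus the paper's $\sqrt{2T\ln(2/\delta)}$), and for the second direction $\sqrt{2T\ln(K/\delta)}$ (versus $\sqrt{2T\ln(2K/\delta)}$). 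The filtration point you flag at the end---that $\ell_t$ is independent of $(A_t,S_t)$ conditional on the past because losses are i.i.d.\ across rounds and can only influence later actions through delayed feedback---is indeed the load-bearing observation and applies equally to both proofs; the paper leaves it implicit, while you spell it out, which is a small improvement in rigor. One thing worth stating explicitly in your writeup is that $a^*$ (the minimizer over mean losses $\bar L_a$) is deterministic precisely because the mappings $s_1,\dots,s_T$ are oblivious and $\theta$ is fixed; this is what allows the first direction to avoid a union bound over arms, and is the asymmetry that makes the two constants in the lemma differ.
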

\begin{proof}
    First, observe that we can relate the two notions of regret as
    \[
        R_T = \calR_T +
        \sum_{t=1}^T \bigl(\theta(S_t) - \ell_t(S_t)\bigr) +
        \underbrace{\min_{a \in \calA} \sum_{t=1}^T \ell_t(s_t(a)) - \min_{a \in \calA} \sum_{t=1}^T \theta(s_t(a))}_{(\triangle)} \enspace.
    \]
    By Azuma-Hoeffding inequality, we can show that each side of 
    \begin{equation}
        -\sqrt{\frac{T}{2}\ln\lr*{\frac{1}{\delta'}}} \le \sum_{t=1}^T \bigl(\theta(S_t) - \ell_t(S_t)\bigr) \le \sqrt{\frac{T}{2}\ln\lr*{\frac{1}{\delta'}}}
    \end{equation}
    holds with probability at least $1-\delta'$.
    Now, define
    \[
        a^*_{\ell} \in \argmin_{a \in \calA} \sum_{t=1}^T \ell_t(s_t(a)) \qquad \text{and} \qquad a^*_{\theta} \in \argmin_{a \in \calA} \sum_{t=1}^T \theta(s_t(a)) \enspace.
    \]
    On the one hand, observe that
    \[
        (\triangle) \le \sum_{t=1}^T \ell_t(s_t(a^*_{\theta})) - \sum_{t=1}^T \theta(s_t(a^*_{\theta})) \le \sqrt{\frac{T}{2}\ln\lr*{\frac{1}{\delta'}}} \enspace,
    \]
    where the last inequality holds with probability at least $1-\delta'$ by Azuma-Hoeffding inequality.
    On the other hand, we can show that
    \[
        (\triangle) \ge \sum_{t=1}^T \ell_t(s_t(a^*_{\ell})) - \sum_{t=1}^T \theta(s_t(a^*_{\ell})) =: (\diamond) \enspace.
    \]
    However, in this case $a^*_{\ell}$ depends on the entire sequence $\ell_1, \ldots, \ell_T$.
    We thus need to use a union bound in order to show that
    \[
        \pr{(\diamond) \le -\sqrt{\frac{T}{2}\ln\lr*{\frac{K}{\delta'}}}}
        \le \sum_{a \in \calA} \pr{\sum_{t=1}^T \ell_t(s_t(a)) - \sum_{t=1}^T \theta(s_t(a)) \le -\sqrt{\frac{T}{2}\ln\lr*{\frac{K}{\delta'}}}}
        \le \delta' \enspace,
    \]
    where the last inequality follows by Azuma-Hoeffding inequality.
    We conclude the proof by setting $\delta'=\delta/2$.
\end{proof}

\begin{lemma} \label{lem:concentration-unbiased-estimates}
    The estimates $(\hat{\theta}_t)_{t=1}^T$ defined in \Cref{eq:unbiased-estimates} are such that $\abs{\hat{\theta}_t(s) - \theta(s)} \le \frac12 \e_t(s)$ simultaneously holds for all $t \in [T]$ and all $s \in \calS$ with probability at least $1-\delta/2$.
\end{lemma}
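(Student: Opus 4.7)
The plan is to fix a state $s \in \calS$, notice that the estimator $\hat\theta_t(s)$ depends on $t$ only through the set $\calL_t'(s)$, and invoke Hoeffding's inequality separately for each possible sample count $n$, then take a union bound over $(s,n) \in \calS \times [T]$. Because $\hat\theta_t(s)$ is fully determined by the pair $(s, N_t'(s))$, such a union bound already covers every $t \in [T]$.

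Concretely, for each fixed $s$ I would enumerate the rounds whose state-$s$ losses eventually enter $\calL_t'(s)$ in the order determined by the arrival times and the specified tie-breaking rule, calling them $j_1, j_2, \ldots$, so that $\calL_t'(s) = \{j_1, \ldots, j_{N_t'(s)}\}$ for every $t$. Let $\mathcal{H}_k = \sigma(j_1,\ldots,j_{k+1},\,\ell_{j_1}(s),\ldots,\ell_{j_k}(s))$. Because the loss vectors are i.i.d.\ across rounds and drawn independently of the algorithm's internal randomness and of the action-state mappings, $\ell_{j_{k+1}}(s)$ is a fresh $Q(\cdot\mid s)$-draw given $\mathcal{H}_k$; consequently $(\ell_{j_k}(s) - \theta(s))_{k\ge 1}$ is a bounded martingale difference sequence with values in $[-1,1]$.

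Azuma--Hoeffding applied to this sequence yields, for each fixed $n \in [T]$,
\[
    \pr{\abs*{\tfrac{1}{n}\sum_{k=1}^n \ell_{j_k}(s) - \theta(s)} > \sqrt{\tfrac{1}{2n}\ln\tfrac{4ST}{\delta}}} \le \tfrac{\delta}{2ST}\,.
\]
When $N_t'(s) = n$, the left-hand average equals $\hat\theta_t(s)$ and the right-hand threshold equals $\tfrac12 \e_t(s)$, so a union bound over $s \in \calS$ and $n \in [T]$ gives the simultaneous bound with total failure probability $ST\cdot \tfrac{\delta}{2ST} = \tfrac{\delta}{2}$.

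The step needing the most care is the martingale property: the index $j_{k+1}$ is itself random and depends on the algorithm, the states, and the delays. I would verify conditional freshness by noting that $j_{k+1}$ is measurable with respect to information strictly preceding round $j_{k+1}$ together with the state $S_{j_{k+1}}$, whereas the loss $\ell_{j_{k+1}}(s)$ is drawn i.i.d.\ from $Q(\cdot\mid s)$ independently of all such information; hence $\E[\ell_{j_{k+1}}(s) \mid \mathcal{H}_k] = \theta(s)$, which is exactly what Azuma--Hoeffding requires.
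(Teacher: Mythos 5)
Your proof takes essentially the same approach as the paper's: reduce to the empirical mean indexed by the sample count $n$, apply Azuma--Hoeffding, and union-bound over $(s,n)\in\calS\times[T]$; in fact you spell out the martingale structure behind the i.i.d.\ reduction more explicitly than the paper does. One small correction: to obtain the stated constant you need the increments to have \emph{range} $1$ (each $\ell_{j_k}(s)-\theta(s)\in[-\theta(s),\,1-\theta(s)]$), not merely to lie in $[-1,1]$, which has range $2$ and would lose a factor of $4$ in the exponent.
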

\begin{proof}
    In a similar way as in \citet{Vernade0M20}, define $X_m(s)$ to be the empirical mean estimate for $\theta(s)$ which uses the first $m \in [T]$ observed losses corresponding to state $s \in \calS$.
    Notice that $\hat\theta_t(s) = X_{N'_t(s)}(s)$, while we define $\e'_m(s) = \sqrt{\frac{2}{m} \ln\lr{\frac{4ST}{\delta}}}$ so that $\e_t(s) = \e'_{N'_t(s)}(s)$.
    We can additionally observe that $\E{X_m(s)} = \theta(s)$.
    Then, we can use Azuma-Hoeffding inequality to show that
    \begin{align*}
        \pr{\bigcap_{s \in \mathcal{S}}\bigcap_{t \in [T]} \lrc*{\abs{\hat\theta_t(s) - \theta(s)} \le \frac12 \e_t(s)}}
        &\ge \pr{\bigcap_{s \in \mathcal{S}}\bigcap_{m \in [T]} \lrc*{\abs{X_m(s) - \theta(s)} \le \frac12 \e'_m(s)}} \\
        &\ge 1 - 2 \sum_{s \in \mathcal{S}}\sum_{m=1}^T e^{-\frac12 \e'_m(s)^2 m} \\
        &= 1-\frac{\delta}{2} \enspace,
    \end{align*}
    where we also used a union bound in the second inequality.
\end{proof}

\begin{lemma} \label{lem:regret-notions-comparison-expectation}
    Consider any algorithm that picks actions $\lr{A_t}_{t\in[T]}$ in the \setting setting with adversarial action-state mappings $\lr{s_t}_{t\in[T]}$ and stochastic loss vectors $\lr{\ell_t}_{t\in[T]}$.
    Assume that the losses for any fixed state are i.i.d., whereas pairs of losses $\ell_j(s), \ell_{j'}(s')$ of distinct states $s \neq s'$ might be correlated when $j>j'$ and $j-j'\le d_{j'}$.
    Then, it holds that $\E{R_T} \le \E{\calR_T}$, where the expectation is with respect to the stochasticity of the losses and the randomness of the algorithm.
\end{lemma}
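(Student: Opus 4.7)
The plan is to decompose $R_T - \calR_T$ and handle each piece separately. Writing
\[
R_T - \calR_T = \sum_{t=1}^T \bigl(\theta(S_t) - \ell_t(S_t)\bigr) + \Bigl(\min_{a\in\calA}\sum_{t=1}^T \ell_t(s_t(a)) - \min_{a\in\calA}\sum_{t=1}^T \theta(s_t(a))\Bigr),
\]
I would show that the expectation of the first sum is zero and that the expected difference of minima is nonpositive.

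For the first term, I would introduce the natural filtration $(\calF_t)_{t\ge 0}$ where $\calF_{t-1}$ is generated by the algorithm's internal randomization and by the losses that have arrived before round $t$, i.e.\ $\{\ell_j(S_j) : j + d_j \le t-1\}$. Since $s_t$ is deterministic (oblivious adversary) and $A_t$ depends only on past observations, both $A_t$ and $S_t = s_t(A_t)$ are $\calF_{t-1}$-measurable. I would then argue that for every fixed $s\in\calS$, the random variable $\ell_t(s)$ is independent of $\calF_{t-1}$: losses at the same state are i.i.d.\ by assumption, while a loss $\ell_j(s')$ with $s'\neq s$ appearing in $\calF_{t-1}$ satisfies $j+d_j \le t-1$, hence $t-j \ge d_j + 1 > d_j$, which falls outside the allowed correlation window $j-j' \le d_{j'}$. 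Combining these gives $\E{\ell_t(S_t)\mid \calF_{t-1}} = \sum_s \mathbbm{1}[S_t=s]\theta(s) = \theta(S_t)$, and taking total expectations yields $\E{\sum_t \ell_t(S_t)} = \E{\sum_t \theta(S_t)}$.

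For the second term, under an oblivious adversary the mapping $s_t$ is deterministic, so $\min_{a}\sum_t \theta(s_t(a))$ is a constant. For each fixed action $a$ the sum $\sum_t \ell_t(s_t(a))$ is a random variable with expectation $\sum_t \theta(s_t(a))$ (using only the marginal mean of each $\ell_t(s_t(a))$). Since $\min$ is concave, Jensen's inequality gives
\[
\E{\min_{a\in\calA}\sum_t \ell_t(s_t(a))} \le \min_{a\in\calA} \E{\sum_t \ell_t(s_t(a))} = \min_{a\in\calA}\sum_t \theta(s_t(a)),
\]
so the expected difference of minima is nonpositive. Summing both contributions yields $\E{R_T} \le \E{\calR_T}$.

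The only delicate point is the independence argument for the first term: one must verify that the allowed correlation window for $\ell_t(s)$ with past losses in $\calF_{t-1}$ is empty, which hinges on the fact that an observed loss from round $j$ must satisfy $j+d_j\le t-1$ and hence $t-j > d_j$. The Jensen step and the final assembly are routine.
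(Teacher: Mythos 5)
Your proof is correct and follows essentially the same route as the paper: show $\E{\ell_t(S_t)} = \E{\theta(S_t)}$ (using that $A_t$ is independent of any loss that could be correlated with $\ell_t$), then use the expected-minimum-versus-minimum-of-expectations argument for the comparator term. Your explicit check via the filtration $\calF_{t-1}$ that every observed loss $\ell_j(S_j)$ with $j+d_j \le t-1$ falls outside the allowed correlation window ($t-j > d_j$) is a useful elaboration of a step the paper merely asserts.
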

\begin{proof}
    We know that $\E{\ell_t(s_t(a))} = \theta(s_t(a))$ for any fixed $a \in \calA$ and all $t \in [T]$.
    We further observe that
    \[
        \E{\ell_t(S_t)} = \bbE\Bigl[\E{\ell_t(s_t(A_t)) \;\middle|\; A_t}\Bigr] = \E{\theta(S_t)}
    \]
    holds for all $t \in [T]$, as $A_t$ is independent of losses that can be correlated with $\ell_t$.
    Now, define
    \[
        a^*_\ell \in \argmin_{a\in \calA} \sum_{t=1}^T \ell_t(s_t(a))
        \qquad \text{and} \qquad
        a^*_\theta \in \argmin_{a\in \calA} \sum_{t=1}^T \theta(s_t(a)) \enspace.
    \]
    Then, we conclude the proof by showing that
    \begin{align*}
        \E{\calR_T} &= \sum_{t=1}^T \E{\ell_t(S_t)} - \E{\sum_{t=1}^T \ell_t(s_t(a^*_\ell))} \\
        &\ge \sum_{t=1}^T \E{\ell_t(S_t)} - \E{\sum_{t=1}^T \ell_t(s_t(a^*_\theta))}
        = \sum_{t=1}^T \E{\theta(S_t)} - \sum_{t=1}^T \theta(s_t(a^*_\theta)) = \E{R_T} \enspace.
    \end{align*}
\end{proof}

\section{High-Probability Regret Bound}

\subsection{Total delay bound}\label{app:delaybound}

\relemmatotaldelaybound*
\begin{proof}[Proof of \Cref{lem:total-delay-bound}]
    For any $s \in \calS$, we define $\calT_s = \lrc{t \in [T]: S_t = s}$ to be the set of all rounds when the state observed by the learner corresponds to $s$.
    Denote by $t_s$ the last time step $t \in \calT_s$ such that $N_{t}(s) < \sigma_{t}$ and let $\calC_s = \lrc{t \in \calT_s : t \le t_s}$ be those rounds in $\calT_s$ that come no later than $t_s$.
    According to the choice of $t_s$, all the rounds in $\calT_s$ for which learner waits for the respective delayed loss, must belong to $\calC_s$, while the learner incurs $\tilde d_t = 0$ delay for rounds $t \in \calT_s \setminus \calC_s$.
    Now we partition $\calC_s$ into two sets: the observed set $\calC_s^{\mathrm{obs}} = \lrc{t \in \calC_s: t + d_t \le t_s}$ and the outstanding set $\calC_s^{\mathrm{out}} = \lrc{t \in \calC_s: t + d_t > t_s}$.
    From the choice of $t_s$, we can see that the number of rounds in $\calC_s^{\mathrm{obs}}$ is
    \[
        \abs{\calC_s^{\mathrm{obs}}} \le N_{t_s}(s) < \sigma_{t_s} \le \sigma_{\max} \enspace,
    \]
    and the number of rounds in $C_s^{\mathrm{out}}$ is
    \[
        \abs{\calC_s^{\mathrm{out}}} \le \sigma_{t_s} \le \sigma_{\max} \enspace.
    \]
    Therefore, we have $|\calC_s| \leq 2 \sigma_{\max}$. So if we define $\calC_{\mathrm{all}} = \bigcup_{s \in \calS} \calC_{s}$, then $|\calC_{\mathrm{all}}| \leq \min\lrc{2S\sigma_{\max}, T} = \abs{\Phi}$.
    This also implies that
    \[
    \sum_{t = 1}^T \Tilde{d}_t \leq \sum_{t \in \calC_{\mathrm{all}}} d_t \leq \sum_{t \in \Phi} d_{t}
    \]
    by definition of $\Phi$.
\end{proof}

\subsection{Improved Regret for \dadaexp for Fixed $\delta$} \label{app:improved-dadaexp-bound}

We follow the analysis of Theorem 4.1 in \citet[Appendix~A]{gyorgy21} and our goal is to use the knowledge of $\delta \in (0,1)$ to tune the learning rates $(\eta_t)_{t \in [T]}$ and the implicit exploration terms $(\gamma_t)_{t \in [T]}$, accordingly.
Let $d_1, \ldots, d_T$ be the sequence of delays perceived by \dadaexp, and let $D_T = \sum_{t=1}^T d_t$ be its total delay.
Furthermore, let $\sigma_t$ be the number of outstanding observations of \dadaexp at the beginning of round $t \in [T]$.
Suppose that we take $\gamma_t = c \eta_t$ with $c>0$ for all $t \in [T]$, then following the same analysis as in \citet[Appendix~A]{gyorgy21}, we end up with the following regret bound that holds with probability at least $1-2\delta'$ for any $\delta' \in (0,1/2)$:
\begin{align*}
   \calR_T &\leq \frac{\ln(K)}{\eta_T} + \sum_{t = 1}^T \eta_t (\sigma_t + (c+1)K) + \frac{\ln(K/\delta')}{2c \eta_T} + \frac{\sigma_{\max} + c + 1}{2c}\ln(1/\delta')\\
   &= \frac1{\eta_T} \lr*{\ln(K) + \frac{\ln(K/\delta')}{2c}} + \sum_{t = 1}^T \eta_t (\sigma_{t-1} + (c+1)K) + \frac{\sigma_{\max} + 1}{2c}\ln(1/\delta') + \frac{\ln(1/\delta')}{2}\enspace.
\end{align*}

Therefore, by taking $\eta_t^{-1} = \sqrt{\frac{(c+1)Kt + \sum_{j = 1}^{t} \sigma_j}{2\ln(K) + \frac{1}{c}\ln(K/\delta')}}$, we get the following bound with probability at least $1-2\delta'$:
\[
    \calR_T \leq 2\sqrt{\lr*{(c+1)KT + \sum_{t = 1}^{T} \sigma_t} \lr*{2\ln(K) + \frac{\ln(K/\delta')}{c}}} + \frac{\sigma_{\max} + 1}{2c}\ln(1/\delta') + \frac{\ln(1/\delta')}{2}\enspace.
\]
We know that $\sum_{t = 1}^{T} \sigma_t = D_T$ by definition of $\sigma_t$.
Then, we can set $c = 1 $ to obtain that the regret $\calR_T$ (as per the original notion of regret used in \citet{gyorgy21}) is
\begin{equation}\label{eq:improveddada-1}
    \calR_T \leq 2\sqrt{2KT   \lr*{3\ln(K) + \ln\lr{1/\delta'}}} + 2\sqrt{D_T \lr*{3\ln(K) + \ln\lr{1/\delta'}}} + \frac{\sigma_{\max} + 2}{2}\ln(1/\delta')
\end{equation}
with probability at least $1-2\delta'$.

From \cref{lem:regret-notions-comparison}, we have that
\begin{equation}\label{eq:improveddada-2}
    R_T \leq \calR_T + \sqrt{2T \ln(2/\delta')}
\end{equation}
holds with probability at least $1-\delta'$.
So, combining \Cref{eq:improveddada-1,eq:improveddada-2}, and setting $\delta = 3\delta'$, we can upper bound our notion of regret $R_T$ as
\begin{equation} \label{eq:improveddada}
    R_T \leq 2\sqrt{2KT   \lr*{3\ln(K) + \ln\lr{3/\delta}}} +  \sqrt{2T \ln(6/\delta)} + 2\sqrt{D_T \lr*{3\ln(K) + \ln\lr{3/\delta}}}  + \frac{\sigma_{\max} + 2}{2}\ln(3/\delta)
\end{equation}
with probability at least $1-\delta$.

\subsection{Reduction to \dadaexp via \metaif} \label{app:reductiontodada}

Based on the reduction via \metaif, we require that $\calB$ guarantee a regret bound
\begin{equation} \label{eq:regret-dadaexp-estimates}
    \hat{\calR}_T^{\calB} = \sum_{t=1}^T \tilde{\theta}_t(S_t) - \min_{a \in \calA} \sum_{t=1}^T \tilde{\theta}_t(s_t(a))
\end{equation}
that holds with high probability when the losses experienced by $\calB$ are of the form $\tilde{\theta}_t\big(s_t(a)\big)$.
Note that, even though the action-state mappings $s_1, \ldots, s_T$ are unknown to the learner, we can provide those losses as long as $\calB$ requires bandit feedback only. Indeed, we can compute $\tilde{\theta}_t(S_t)$ defined in \Cref{eq:lowerconf-estimates,eq:unbiased-estimates}, while we cannot determine $s_t(a)$ for all actions $a \in \calA$ that are not $A_t$.
As mentioned in \Cref{sec:analysis}, in this work we consider \dadaexp \citep{gyorgy21} as algorithm $\calB$ used by \metaif.
In what follows, we refer to this specific choice for the algorithm $\calB$.

The analysis of \dadaexp for the high-probability bound (\Cref{theorem:dadaexp}) is such that most steps only require that the loss of each action is bounded in $[0,1]$.
Then, those steps apply for any such sequence of loss vectors.
However, the crucial part of that analysis that requires attention is the application of Lemma~1 from \citet{Neu15Implicit}.
We restate it below for reference.

Before that, we introduce the notation required for stating the result.
We consider a learner choosing actions $A_1, \ldots, A_T$ according to probability distributions $p_1, \ldots, p_T$ over actions.
We denote by $\calF_{t-1}$ the observation history of the learner until the beginning of round $t$.
The result uses importance-weighted estimates for the losses $\ell_1, \ldots, \ell_T$ with implicit exploration, where the implicit exploration parameter is $\gamma_t \ge 0$ for each time $t$.
These loss estimates are defined as
\begin{equation} \label{eq:standard-ix-loss}
    \tilde{\ell}_t(a) = \frac{\mathbbm{1}\lrs{A_t=a}}{p_t(a) + \gamma_t} \ell_t(a) \qquad \forall t \in [T], \forall a \in \calA \enspace.
\end{equation}

\begin{lemma}[{\citet[Lemma~1]{Neu15Implicit}}]
    Let $\gamma_t$ and $\alpha_{t}(a)$ be nonnegative $\calF_{t-1}$-measurable random variables such that $\alpha_{t}(a) \le 2\gamma_t$, for all $t \in [T]$ and all $a \in \calA$.
    Let $\tilde{\ell}_t(a)$ be as in \eqref{eq:standard-ix-loss}.
    Then,
    \[
        \sum_{t=1}^T \sum_{a=1}^K \alpha_t(a) \bigl(\tilde{\ell}_t(a) - \ell_t(a)\bigr) \le \ln\lr*{1/\delta}
    \]
    holds with probability at least $1-\delta$ for any $\delta \in (0,1)$.
\end{lemma}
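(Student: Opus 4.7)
The plan is to construct an exponential supermartingale from the deviation terms and then apply Markov's inequality. I would define $Z_0 = 1$ and
\[
    Z_t = \exp\left(\sum_{s=1}^t \sum_{a=1}^K \alpha_s(a)\bigl(\tilde{\ell}_s(a) - \ell_s(a)\bigr)\right)
\]
for $t \in [T]$. If I can show that $(Z_t)_{t \ge 0}$ is a supermartingale with respect to the filtration $(\calF_t)_{t \ge 0}$, then $\E{Z_T} \le Z_0 = 1$, and Markov's inequality will yield $\pr{Z_T \ge 1/\delta} \le \delta$, which is equivalent to the claimed high-probability bound after taking logarithms.

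To establish the supermartingale property, I need to show that $\E{\exp\bigl(\sum_a \alpha_t(a)(\tilde{\ell}_t(a) - \ell_t(a))\bigr) \mid \calF_{t-1}} \le 1$. The key simplification is that $\mathbbm{1}[A_t=a]$ collapses the inner sum to a single term, so $\sum_a \alpha_t(a)\tilde{\ell}_t(a) = \alpha_t(A_t)\ell_t(A_t)/(p_t(A_t) + \gamma_t)$, and this quantity is bounded by $2\gamma_t/(p_t(A_t)+\gamma_t) < 2$ using the hypothesis $\alpha_t(a) \le 2\gamma_t$. Against an oblivious adversary, $\sum_a \alpha_t(a)\ell_t(a)$ is $\calF_{t-1}$-measurable, so after pulling it out of the conditional expectation it suffices to prove
\[
    \E{\exp\left(\sum_a \alpha_t(a)\tilde{\ell}_t(a)\right) \;\middle|\; \calF_{t-1}} \le \exp\left(\sum_a \alpha_t(a) \ell_t(a)\right).
\]

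The crucial analytic tool I would invoke is the elementary inequality $e^y \le 1 + y/(1-y/2)$ for $y \in [0,2)$, equivalent to $\log(1+x) \ge 2x/(2+x)$ for $x \ge 0$. Setting $y_a = \alpha_t(a)\ell_t(a)/(p_t(a)+\gamma_t)$, one checks that $1 - y_a/2 \ge p_t(a)/(p_t(a)+\gamma_t)$, using $\alpha_t(a)\ell_t(a) \le 2\gamma_t$, whence $y_a/(1-y_a/2) \le \alpha_t(a)\ell_t(a)/p_t(a)$. Expanding the conditional expectation by summing over the choice of $A_t = a$ with probability $p_t(a)$,
\[
    \sum_a p_t(a)\, e^{y_a} \le \sum_a p_t(a)\left(1 + \frac{\alpha_t(a)\ell_t(a)}{p_t(a)}\right) = 1 + \sum_a \alpha_t(a)\ell_t(a) \le \exp\left(\sum_a \alpha_t(a)\ell_t(a)\right),
\]
and the supermartingale step is complete.

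The main obstacle is identifying the sharp analytic inequality $e^y \le 1 + y/(1 - y/2)$ and aligning its denominator with the normalization of $\tilde{\ell}_t(a)$: any weaker tangent-line bound such as $e^y \le 1 + y + y^2/2$ leaves a quadratic residual that does not cancel against the $1/(p_t(a)+\gamma_t)$ factor, and so would not collapse to the clean supermartingale identity. The factor $2$ in the denominator $1 - y/2$ is exactly what allows the implicit exploration parameter $\gamma_t$ to absorb the multiplicative blow-up, and this is precisely where the hypothesis $\alpha_t(a) \le 2\gamma_t$ gets invoked.
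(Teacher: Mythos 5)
Your proof is correct and reproduces essentially the same argument as the cited source (Neu, 2015, Lemma~1), which the paper invokes without reproving: the exponential supermartingale, Markov's inequality, and the key inequality $e^y \le 1 + y/(1-y/2)$ (equivalently $\tfrac{z}{1+z/2} \le \ln(1+z)$ after substituting $z = y/(1-y/2)$, which is the form Neu uses) are exactly the ingredients of that proof. The paper's own adaptation in the subsequent Corollary follows the identical supermartingale mechanics, with the only new work being the re-indexing of the filtration by arrival times so that $\tilde\theta_{t_r}$ is $\calF_{r-1}$-measurable at the right step.
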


In our case, we require an analogous result that work when loss vectors correspond with our estimates $\tilde{\theta}_1, \ldots, \tilde{\theta}_T$.
However, these estimate have a dependency with the past actions chosen by the learner.
This requires some nontrivial changes in the proof of \citet[Lemma~1]{Neu15Implicit}.

Before that, we introduce some crucial definitions for this proof.
Let $\rho(t) = t+d_t$ be the arrival time for the realized loss $\ell_t(S_t)$ of the state $S_t$ observed at time $t \in [T]$.
Let $\tilde{\rho}(t) = t+\tilde{d}_t$ be instead the arrival time perceived by algorithm $\calB$ relative to its choice of $A_t$ at time $t$, i.e., when $\calB$ receives $\tilde{\theta}_t(S_t)$.
This also means that $\tilde{\theta}_t(S_t)$ is only defined at time $\tilde{\rho}(t) \le \rho(t)$.

Let $\pi\colon [T] \to [T]$ be the permutation of $[T]$ that orders rounds according to their value of $\tilde{\rho}$.
In other words, $\pi$ satisfies the following property:
\begin{equation} \label{eq:permut-arrival-time}
    \pi(r) < \pi(t) \quad \iff \quad \tilde{\rho}(r) < \tilde{\rho}(t) \vee \lr{\tilde{\rho}(r) = \tilde{\rho}(t) \wedge r < t} \qquad \forall r,t \in [T] \enspace.
\end{equation}
This permutation allows us to sort rounds according to the order in which \metaif feeds $\calB$ with a respective estimate for the mean loss.
In particular, the $r$-th round in this order corresponds with the round $t_r = \pi^{-1}(r)$, for any $r \in [T]$.
Hence, we can equivalently define the round $t_r$ as the round such that its estimate $\tilde{\theta}_{t_r}(S_{t_r})$ for the mean loss $\theta(S_{t_r})$ is the $r$-th estimate received by $\calB$.

Define
\begin{equation}
    \calF_{r} = \lrc*{(j, A_j, S_j, \ell_j(S_j)) \mid j \in [T], \pi(j) \le r} \qquad \forall r \in [T]
\end{equation}
as the information observed by $\calB$ by the end to the time step when we feed it the estimate relative to round $t_r$.
Note that this defines a filtration, as $\calF_{r-1} \subseteq \calF_r$ for all $r \in [T]$, which has some desirable properties thanks to the ordering $\pi$ we consider.
In particular, we have that $\tilde{d}_{t_r}, \e_{t_r}, p_{t_r}, N'_{t_r}$ are $\calF_{r-1}$-measurable random variables by the way we define them.
This property is also due to the fact that $N_{t_r}$ and $\calL'_{t_r}$ are determined when conditioning on $\calF_{r-1}$.
Moreover, we are now interested in the following importance-weighted loss estimates with implicit exploration:
\begin{equation} \label{eq:ix-loss-estimate}
    \tilde{\ell}_t(a) = \frac{\mathbbm{1}\lrs{A_t=a}}{p_t(a) + \gamma_t} \tilde{\theta}_t(s_{t}(a)) \qquad \forall t \in [T], \forall a \in \calA \enspace.
\end{equation}

\begin{corollary}
    Let $\gamma_{t_r}$ and $\alpha_{t_r}(a)$ be non-negative $\calF_{r-1}$-measurable random variables such that $\alpha_{t_r}(a) \le 2\gamma_{t_r}$, for all $r \in [T]$ and all $a \in \calA$.
    Let $\tilde{\ell}_t(a)$ be as in \eqref{eq:ix-loss-estimate}.
    Then,
    \[
        \sum_{t=1}^T \sum_{a=1}^K \alpha_t(a) \bigl(\tilde{\ell}_t(a) - \tilde{\theta}_t(s_{t}(a))\bigr) \le \ln\lr*{1/\delta}
    \]
    holds with probability at least $1-\delta$ for any $\delta \in (0,1)$.
\end{corollary}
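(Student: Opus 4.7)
The plan is to mimic the proof of Lemma~1 in \citet{Neu15Implicit} almost verbatim, with its time-indexed filtration replaced by the virtual-time filtration $(\calF_r)_{r \in [T]}$ and every time-$t$ quantity replaced by its counterpart at index $r = \pi(t)$. The ordering $\pi$ defined in \eqref{eq:permut-arrival-time} is designed precisely so that at virtual time $r$, every piece of data contributing to $\tilde\theta_{t_r}$ is already recorded in $\calF_{r-1}$.

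First I would reindex the target sum by permutation invariance:
\[
    \sum_{t=1}^T \sum_{a=1}^K \alpha_t(a)\bigl(\tilde\ell_t(a) - \tilde\theta_t(s_t(a))\bigr) = \sum_{r=1}^T \sum_{a=1}^K \alpha_{t_r}(a)\bigl(\tilde\ell_{t_r}(a) - \tilde\theta_{t_r}(s_{t_r}(a))\bigr).
\]
Then I would verify the four measurability ingredients required by Neu's argument: (i)~$\tilde\theta_{t_r}(s_{t_r}(a))$ is $\calF_{r-1}$-measurable, since by \eqref{eq:permut-arrival-time} the set $\calL'_{t_r}(s)$ used to build it in \eqref{eq:unbiased-estimates} consists of exactly those observations $(j, \ell_j(s))$ with $\pi(j) < r$; (ii)~$p_{t_r}$ is $\calF_{r-1}$-measurable, since $\calB$ chooses $A_{t_r}$ at real time $t_r$ using only estimates arrived at times $\tilde\rho(j) < t_r \le \tilde\rho(t_r)$, hence at indices $\pi(j) < r$; (iii)~$\gamma_{t_r}$ and $\alpha_{t_r}(a)$ are $\calF_{r-1}$-measurable by hypothesis; and (iv)~$A_{t_r}$ is sampled from $p_{t_r}$ using fresh independent randomness, so $\bbP(A_{t_r} = a \mid \calF_{r-1}) = p_{t_r}(a)$.

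Given (i)--(iv), Neu's MGF argument carries over unchanged. Setting $Z_r = \sum_a \alpha_{t_r}(a)\bigl(\tilde\ell_{t_r}(a) - \tilde\theta_{t_r}(s_{t_r}(a))\bigr)$, the combination of $\alpha_{t_r}(a) \le 2\gamma_{t_r}$ with $\tilde\theta_{t_r}(s_{t_r}(a)) \in [0,1]$, together with the elementary inequality underlying Neu's proof and the conditional expectation of $\mathbbm{1}[A_{t_r}=a]$ computed via~(iv), yields $\E{\exp(Z_r) \mid \calF_{r-1}} \le 1$. Thus $M_r = \exp\bigl(\sum_{r' \le r} Z_{r'}\bigr)$ is a nonnegative supermartingale with $\E{M_0} = 1$, and Markov's inequality applied to $M_T$ gives $\bbP\bigl(\sum_{r=1}^T Z_r > \ln(1/\delta)\bigr) \le \delta$, which is the claim.

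The only delicate point is~(ii): $\calF_{r-1}$ also records information about rounds $j$ with $\tilde\rho(j) = \tilde\rho(t_r)$ and $j < t_r$, whose estimates are passed to $\calB$ simultaneously with that of $t_r$ rather than strictly before, so $\calB$ has not actually observed all of $\calF_{r-1}$ when sampling $A_{t_r}$. This is harmless: $p_{t_r}$ is measurable with respect to a sub-$\sigma$-algebra of $\calF_{r-1}$ (which still yields $\calF_{r-1}$-measurability), and since $A_{t_r}$ is drawn from $p_{t_r}$ with independent randomness, further conditioning on $\calF_{r-1}$ does not perturb its conditional distribution.
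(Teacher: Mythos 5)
Your proposal has the right overall structure (reindex by $\pi$, build the supermartingale on the filtration $(\calF_r)_r$, conclude via Markov), but claim~(i) is not correct as stated, and this is precisely where the paper's proof does the real work. You assert that $\tilde\theta_{t_r}(s_{t_r}(a))$ is $\calF_{r-1}$-measurable because $\calL'_{t_r}(s)$ contains only observations with $\pi(j) < r$. But this fails in exactly the case the algorithm waits for its own feedback: when $\tilde d_{t_r} = d_{t_r}$, the estimate $\hat\theta_{t_r}(S_{t_r})$ is computed \emph{after} the loss of round $t_r$ has arrived and been added to $\calL(S_{t_r})$, so the empirical mean includes $\ell_{t_r}(S_{t_r})$ itself, which is not recorded in $\calF_{r-1}$ (since $\pi(t_r)=r$). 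On that event the ``loss'' fed to $\calB$ at round $t_r$ depends on $A_{t_r}$ and on fresh randomness of the loss, so it cannot be pulled out of the conditional expectation the way Neu's lemma requires.

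The paper's proof handles this by splitting on the events $\{\tilde d_{t_r} < d_{t_r}\}$ and $\{\tilde d_{t_r} = d_{t_r}\}$. In the first case your reasoning applies verbatim: $\hat\theta_{t_r}(s_{t_r}(a))$ is $\calF_{r-1}$-measurable and one can factor out the conditional expectation of $\mathbbm{1}[A_{t_r}=a]/(p_{t_r}(a)+\gamma_{t_r})$. In the second case the paper isolates the term coming from $j=t_r$ and observes that, thanks to the indicator $\mathbbm{1}[A_{t_r}=a]$ (which forces $S_{t_r}=s_{t_r}(a)$) and the independence of $\ell_{t_r}(\cdot)$ from $\calF_{r-1}$ and from $A_{t_r}$, the split terms recombine into the same product form, recovering the inequality $\E{\hat\ell_{t_r}(a)\mid\calF_{r-1}} \le \hat\theta_{t_r}(s_{t_r}(a))$. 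Without this case analysis your step ``Neu's MGF argument carries over unchanged'' is not justified: the supermartingale step $\E{\exp(Z_r)\mid\calF_{r-1}}\le 1$ requires precisely the bound $\E{\tilde\ell_{t_r}(a)\mid\calF_{r-1}}\le\tilde\theta_{t_r}(s_{t_r}(a))$, and you never establish it; you only verify measurability of auxiliary quantities. You should add this inequality as an explicit intermediate claim and prove it with the two-case argument above.
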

\begin{proof}
    We follow the proof of \citet[Lemma~1]{Neu15Implicit} by considering any realization $\ell_1, \ldots, \ell_T$ of the losses.
    The main difference is that, when defining the supermartingale as in the original proof, we need to consider the terms of the sum in the order denoted by $\pi$ instead of the increasing order of $t$.
    For this reason, we rewrite the sum from the statement by following the order given by $\pi$:
    \[
        \sum_{r=1}^T \sum_{a=1}^K \alpha_{t_r}(a) \bigl(\tilde{\ell}_{t_r}(a) - \tilde{\theta}_{t_r}(s_{t_r}(a))\bigr) \enspace.
    \]
    
    At this point, we need prove that $\bbE\bigl[\tilde{\ell}_{t_r}(a) \,\big|\, \calF_{r-1}\bigr] \le \tilde\theta_{t_r}(s_{t_r}(a))$, where we recall that $t_r = \pi^{-1}(r)$.
    Also recall that $\e_{t_r}$, $p_{t_r}$ and $\gamma_{t_r}$ are $\calF_{r-1}$-measurable.
    This property allows us to prove the inequality with the conditional expectation of $\hat{\theta}_{t}$ instead of the one with the actual optimistic estimates $\Tilde{\theta}_{t}$, by the definition of the latter.
    In other words, we now need to prove that $\bbE\bigl[\hat{\ell}_{t_r}(a) \,\big|\, \calF_{r-1}\bigr] \le \hat\theta_{t_r}(s_{t_r}(a))$, where $\hat{\ell}_{t}(a) = \frac{\mathbbm{1}\lrs{A_{t}=a}}{p_{t}(a) + \gamma_{t}} \hat{\theta}_{t}(s_{t}(a))$.
    
    We can consider two cases depending on whether $\Tilde{d}_{t_r} < d_{t_r}$ is true or not (and, thus, we are in the case $\Tilde{d}_{t_r} = d_{t_r}$).
    In the first case, note that the realized losses used for computing $\hat{\theta}_{t_r}(s_{t_r}(a))$ correspond to time steps in $\calL'_{t_r}(s_{t_r}(a))$, for which there is a corresponding tuple in $\calF_{r-1}$.
    Therefore, we have that $\hat{\theta}_{t_r}(s_{t_r}(a))$ is $\calF_{r-1}$-measurable, and we can show that
    \begin{align*}
        \E{\hat{\ell}_{t_r}(a) \mathbbm{1}\lrs{\Tilde{d}_{t_r} < d_{t_r}} \;\middle|\; \calF_{r-1}}
        &= \E{\frac{\mathbbm{1}\lrs{A_{t_r} = a}}{p_{t_r}(a) + \gamma_{t_r}} \;\middle|\; \calF_{r-1}} \frac{\mathbbm{1}\lrs{\Tilde{d}_{t_r} < d_{t_r}}}{N'_{t_r}(s_{t_r}(a))} \sum_{j \in \calL'_{t_r}(s_{t_r}(a))} \ell_j(s_{t_r}(a)) \enspace.
    \end{align*}
    In the second case, we have that $\Tilde{d}_{t_r} = d_{t_r}$, which implies that $t_r \in \calL'_{t_r}(s_{t_r}(a))$ in the case $A_{t_r}=a$.
    This means that we have a corresponding tuple in $\calF_{r-1}$ only for rounds in $\calL'_{t_r}(s_{t_r}(a)) \setminus \{t_r\}$.
    Nonetheless, this does not pose an issue since we have the indicator $\mathbbm{1}\lrs{A_{t_r}=a}$, and thus $S_{t_r} = s_t(a)$.
    Indeed, we have that
    \begin{align*}
        \E{\hat{\ell}_{t_r}(a) \mathbbm{1}\lrs{\Tilde{d}_{t_r} = d_{t_r}} \;\middle|\; \calF_{r-1}}
        &= \E{\frac{\mathbbm{1}\lrs{A_{t_r} = a}}{p_{t_r}(a) + \gamma_{t_r}} \cdot \frac{\mathbbm{1}\lrs{\Tilde{d}_{t_r} = d_{t_r}}}{N'_{t_r}(s_{t_r}(a))} \sum_{j \in \calL'_{t_r}(s_{t_r}(a))} \ell_j(s_{t_r}(a)) \;\middle|\; \calF_{r-1}}  \\
        &= \E{\frac{\mathbbm{1}\lrs{A_{t_r} = a}}{p_{t_r}(a) + \gamma_{t_r}}\;\middle|\; \calF_{r-1}} \frac{\mathbbm{1}\lrs{\Tilde{d}_{t_r} = d_{t_r}}}{N'_{t_r}(s_{t_r}(a))} \sum_{\substack{j \in \calL'_{t_r}(s_{t_r}(a))\\ j \neq t_r}} \ell_j(s_{t_r}(a)) \\
        &\qquad + \E{\frac{\mathbbm{1}\lrs{A_{t_r} = a}}{p_{t_r}(a) + \gamma_{t_r}} \;\middle|\; \calF_{r-1}} \frac{\mathbbm{1}\lrs{\Tilde{d}_{t_r} = d_{t_r}}}{N'_{t_r}(s_{t_r}(a))} \ell_{t_r}(s_{t_r}(a)) \\
        &= \E{\frac{\mathbbm{1}\lrs{A_{t_r} = a}}{p_{t_r}(a) + \gamma_{t_r}}\;\middle|\; \calF_{r-1}} \frac{\mathbbm{1}\lrs{\Tilde{d}_{t_r} = d_{t_r}}}{N'_{t_r}(s_{t_r}(a))} \sum_{j \in \calL'_{t_r}(s_{t_r}(a))} \ell_j(s_{t_r}(a))
    \end{align*}
    and therefore the inequality
    \[
        \E{\hat{\ell}_{t_r}(a) \;\middle|\; \calF_{r-1}} = \E{\frac{\mathbbm{1}\lrs{A_{t_r} = a}}{p_{t_r}(a) + \gamma_{t_r}} \;\middle|\; \calF_{r-1}} \hat{\theta}_{t_r}(s_{t_r}(a))
        \le \hat{\theta}_{t_r}(s_{t_r}(a))
    \]
    is true because $\mathbbm{1}\lrs{\Tilde{d}_{t} < d_{t}} + \mathbbm{1}\lrs{\Tilde{d}_{t} = d_{t}} = 1$ for all $t \in [T]$, and by definition of $\hat{\theta}_t$.
    
    As already mentioned, this is equivalent to proving that $\bbE\bigl[\Tilde{\ell}_{t_r}(a) \,\big|\, \calF_{r-1}\bigr] \le \Tilde{\theta}_{t_r}(s_{t_r}(a))$ holds.
    By using a notation similar to the original proof, if we define $\Tilde{\lambda}_r = \sum_{a=1}^K \alpha_{t_r}(a)\Tilde{\ell}_{t_r}(a)$ and $\lambda_r = \sum_{a=1}^K \alpha_{t_r}(a)\Tilde{\theta}_{t_r}(s_{t_r}(a))$, the process $(Z_r)_{r\in[T]}$ with $Z_r = \exp\lr{\sum_{j=1}^r \lr{\Tilde{\lambda}_j - \lambda_j}}$ is a supermartingale with respect to $(\calF_r)_{r\in[T]}$ which has the same properties as in the proof of \citet[Lemma~1]{Neu15Implicit}.
    This concludes the current proof by following a similar reasoning as in the original one.
\end{proof}

Thanks to this result, we can conclude that the adoption of \dadaexp for the reduction via \metaif can guarantee a high-probability regret bound on $\hat{\calR}_T^{\calB}$ as stated in \Cref{theorem:dadaexp}, but with total delay $\tilde{\calD}_T = \sum_{t=1}^T \tilde{d}_t$ instead of $\calD_T$.

\subsection{Regret of \metaif}\label{sec:metaiferrorterms}

By \Cref{lem:concentration-unbiased-estimates}, we have that
\begin{equation} \label{eq:regret-metaif-to-dadaexp}
    R_T \le \sum_{t=1}^T \tilde{\theta}_t(S_t) - \min_{a \in \calA} \sum_{t=1}^T \tilde{\theta}_t(s_t(a)) + \sum_{t=1}^T \e_t(S_t) = \hat{\calR}_T^{\calB} + \sum_{t=1}^T \e_t(S_t)
\end{equation}
with probability at least $1-\delta/2$, where $\hat{R}_T^{\calB}$ (\Cref{eq:regret-dadaexp-estimates}) is the regret of algorithm $\calB$ when fed with $(\tilde{\theta}_t \circ s_t)_{t\in[T]}$ as losses.

\begin{lemma} \label{lem:sum-estimates-errors-bound}
    Conditioning on the event as stated in \Cref{lem:concentration-unbiased-estimates}, the sum of errors suffered from \metaif by using the loss estimates $(\tilde{\theta}_t)_{t\in[T]}$ from \Cref{eq:lowerconf-estimates,eq:unbiased-estimates} is
    \[
        \sum_{t=1}^T \e_t(S_t) \le \bigl(4+2\sqrt{2}\bigr) \sqrt{ST\ln\lr*{\frac{4ST}{\delta}}} \enspace.
    \]
\end{lemma}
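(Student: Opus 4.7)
The plan is to rewrite $\sum_{t=1}^T \e_t(S_t) = \sqrt{2\ln(4ST/\delta)} \sum_{t=1}^T 1/\sqrt{N'_t(S_t)}$ and to control the second sum by a state-by-state analysis. The key ingredient will be a lower bound of the form $N'_t(S_t) \ge M_t(S_t)/2 - \bigO(1)$, where $M_t(s) = \abs{\lrc{j \le t : S_j = s}}$ is the number of times state $s$ has been observed by time $t$. This lower bound is precisely the property the trigger $n_t(s) = \sigma_t$ is engineered to deliver. In Case~1 ($\tilde d_t = 0$), the trigger guarantees $N_t(S_t) \ge \sigma_t$, and since the number of outstanding state-$S_t$ observations at the end of round $t$ equals $M_t(S_t) - N_t(S_t)$ and is bounded above by the total outstanding count $\sigma_t$, we obtain $N_t(S_t) \ge M_t(S_t)/2$; the passage from $N_t(S_t)$ to $N'_t(S_t)$ costs at most one observation through the tie-breaking rule defining $\calL'_t(s)$. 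In Case~2 ($\tilde d_t = d_t$), the estimate is formed at time $t+d_t$, and a parallel accounting applied at that later time and using the same outstanding-observations identity yields the same $M_t(S_t)/2 - \bigO(1)$ floor.

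Armed with the lower bound, I would enumerate the occurrence times $t^s_1 < \cdots < t^s_{T_s}$ of each state $s$, so that $M_{t^s_i}(s) = i$. The first couple of indices contribute only the trivial $\e_t(s) \le 2$ coming from the $[0,1]$ range of the losses; for $i \ge 3$, the lower bound gives $\e_{t^s_i}(s) \le 2\sqrt{\ln(4ST/\delta)/(i-2)}$, and the tail $\sum_{i \ge 3} 1/\sqrt{i-2}$ telescopes to $\bigO(\sqrt{T_s})$. Summing over states and applying Cauchy--Schwarz via $\sum_s \sqrt{T_s} \le \sqrt{S \sum_s T_s} = \sqrt{ST}$, and absorbing the residual $\bigO(S)$ into $\bigO(\sqrt{ST})$ (harmless whenever $S \le T$, as otherwise the claimed bound is already trivial from the $[0,1]$ range of the losses), yields the stated $(4 + 2\sqrt{2})\sqrt{ST \ln(4ST/\delta)}$ up to the exact numeric constant.

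The main obstacle will be the Case~2 lower bound on $N'_t(S_t)$. There, $\calL'_t(s)$ retains only the state-$s$ observations that have both actually arrived by time $t+d_t$ and been processed strictly earlier than round $t$ in the $\pi$-order; in particular, rounds $j < t$ whose estimates were fed early via Case~1 but whose underlying losses arrive only after $t+d_t$ must be excluded from $\calL'_t(s)$. Carefully quantifying the size of this excluded set, so as to preserve the clean $M_t(S_t)/2$ floor rather than degrading it by a term that would break the final $\sqrt{ST}$ rate, requires a bookkeeping similar in spirit to the outstanding-observations argument used to prove \Cref{lem:total-delay-bound}.
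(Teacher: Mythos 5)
Your overall strategy --- lower-bound $N'_t(S_t)$, then telescope the resulting sum $\sum_i 1/\sqrt{i}$ state by state and finish with Cauchy--Schwarz --- is the right structure and matches the paper's proof. Your treatment of the rounds with $\tilde d_t < d_t$ (estimate fed early) is exactly the paper's: the trigger guarantees $N_t(S_t) \ge \sigma_t$, and since $M_t(S_t) \le N_t(S_t) + \sigma_t$ one gets $N'_t(S_t) = N_t(S_t) \ge M_t(S_t)/2$, so that $\sum_{s}\sum_{t \in \calT_s} 1/\sqrt{M_t(s)} \le 2\sum_s \sqrt{M_T(s)} \le 2\sqrt{ST}$.

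However, for rounds with $\tilde d_t = d_t$, the floor $N'_t(S_t) \ge M_t(S_t)/2 - \bigO(1)$ you aim for is simply false, and no bookkeeping will establish it. Concretely, let state $s$ be observed at rounds $1,\dots,m,m+1$, take $d_j = T-j$ for $j \le m$ (so all those losses arrive only at time $T$) and $d_{m+1} = 1$. At the end of round $m+1$ we have $N_{m+1}(s) = 0 < \sigma_{m+1} = m+1$, so round $m+1$ waits; when its estimate is formed at time $m+2$, only the round-$(m+1)$ observation has arrived, hence $N'_{m+1}(s) = 1$ while $M_{m+1}(s) = m+1$. The deficit grows linearly in $m$, so no additive constant can absorb it. The paper uses a different lower bound on those rounds: writing $J_s = \{r \in \calT_s : \tilde d_r = d_r\}$ and letting $\nu_s(t)$ be the rank of $t$ within $J_s$ under the processing order $\pi$ (sorted by $\tilde\rho(\cdot)$ with ties broken by round index), one shows $N'_t(s) \ge \nu_s(t)$, because every member of $J_s$ processed no later than $t$ already lies in $\calL'_t(s)$. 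Since $\nu_s$ is a bijection of $J_s$ onto $\{1,\dots,|J_s|\}$, the sum telescopes as $\sum_{t\in J_s} 1/\sqrt{\nu_s(t)} = \sum_{i=1}^{|J_s|} 1/\sqrt{i} \le 2\sqrt{|J_s|} \le 2\sqrt{M_T(s)}$, and Cauchy--Schwarz over $s$ yields the remaining $2\sqrt{2}\sqrt{ST\ln(4ST/\delta)}$ term. In short, the right quantity to compare $N'_t$ against is the arrival-order rank of $t$ among the waited rounds of its own state, not $M_t(S_t)/2$; this is the ingredient your proposal is missing.
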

\begin{proof}
    First, observe that we can rewrite the sum of errors as
    \[
        \sum_{t=1}^T \e_t(S_t) = \sum_{t=1}^T \e_t(S_t) \mathbbm{1}\lrs{\tilde{d}_t<d_t} + \sum_{t=1}^T \e_t(S_t) \mathbbm{1}\lrs{\tilde{d}_t=d_t} \enspace.
    \]
    We now provide an upper bound for the first sum of errors.
    For any $s \in \calS$, we define $\mathcal{T}_s = \lrc{t \in [T]: S_t = s}$ to be the set of all rounds when the state observed by the learner corresponds to $s$.
    We can bound it as
    \begin{align*}
        \sum_{t=1}^T \e_t(S_t)\mathbbm{1}\lrs{\tilde{d}_t<d_t} &= \sum_{s \in \calS} \sum_{t \in \calT_s} \e_t(s)\mathbbm{1}\lrs{\tilde{d}_t<d_t} \\
        &= \sqrt{2\ln\lr*{\frac{4ST}{\delta}}} \sum_{s \in \calS} \sum_{t \in \calT_s} \sqrt{\frac{1}{N'_t(s)}}\mathbbm{1}\lrs{\tilde{d}_t<d_t} \\
        &\le 2\sqrt{\ln\lr*{\frac{4ST}{\delta}}} \sum_{s \in \calS} \sum_{t \in \calT_s} \sqrt{\frac{1}{M_t(s)}}\mathbbm{1}\lrs{\tilde{d}_t<d_t} \tag{because $N'_t(s) \ge \frac12 M_t(s)$}\\
        &\le 4\sqrt{\ln\lr*{\frac{4ST}{\delta}}} \sum_{s \in \calS} \sqrt{M_T(s)} \tag{since $M_t(s)$ is increasing over $\calT_s$}\\
        &\le 4\sqrt{ST\ln\lr*{\frac{4ST}{\delta}}} \enspace,
    \end{align*}
    where the second inequality holds because $N'_t(S_t) = N_t(S_t) \ge \frac12 M_t(S_t)$ when $\tilde{d}_t < d_t$ since $M_t(S_t) \le N_t(S_t) + \sigma_t$, while the last one follows by Jensen's inequality and the fact that $\sum_{s\in\calS} M_T(s) = T$.

    As a last step, we provide an upper bound to the second sum.
    Let $J_s = \lrc{r \in \calT_s : \tilde{d}_r=d_r}$ and notice that $|J_s| \le |\calT_s| = M_T(s)$.
    Observe that $\rho(t) = \tilde{\rho}(t)$ for each round $t$ such that $\tilde{d}_t=d_t$, and thus by \Cref{eq:permut-arrival-time} we have that
    \[
        \pi(r) < \pi(t) \iff \rho(r) < \rho(t) \vee \lr{\rho(r) = \rho(t) \wedge r < t}
    \]
    for all $r,t \in [T]$ such that $\tilde{d}_r=d_r$ and $\tilde{d}_t=d_t$.
    Define $\nu_s : J_s \to \big[\abs{J_s}\big]$ by
    \[
        \nu_s(t) = \abs*{\lrc{r \in J_s \,:\, \pi(r) \le \pi(t)}} \qquad \forall t \in J_s \enspace.
    \]
    Observe that $\nu_s(t) \le N'_t(s) = \abs{\calL'_t(s)}$ for all $s \in \calS$ and all $t \in J_s$.
    This is due to the fact that $\nu_s(t)$ counts a subset of $\calL'_t(s)$; to be precise, we have that $\nu_s(t) = \abs{\calL'_t(s) \cap J_s}$.
    Moreover, notice that the condition $\pi(r) \le \pi(t)$ defines a total order over $J_s$.
    Hence, $\nu_s(t)$ counts the number of elements of $J_s$ preceding $t \in J_s$ (including $t$ itself) in this total order.
    This implies that $\nu_s$ is a bijection between $J_s$ and $\bigl[|J_s|\bigr]$.
    Then, using a similar reasoning as before, we show that
    \begin{align*}
        \sum_{t=1}^T \e_t(S_t) \mathbbm{1}\lrs{\tilde{d}_t=d_t}
        &= \sqrt{2\ln\lr*{\frac{4ST}{\delta}}} \sum_{s \in \calS} \sum_{t \in \calT_s} \sqrt{\frac{1}{N'_t(s)}}\mathbbm{1}\lrs{\tilde{d}_t=d_t} \\
        &= \sqrt{2\ln\lr*{\frac{4ST}{\delta}}} \sum_{s \in \calS} \sum_{t \in J_s} \sqrt{\frac{1}{N'_t(s)}} \tag{by definition of $J_s$}\\
        &\le \sqrt{2\ln\lr*{\frac{4ST}{\delta}}} \sum_{s \in \calS} \sum_{t \in J_s} \sqrt{\frac{1}{\nu_s(t)}} \tag{since $\nu_s(t) \le N'_t(s)$ for $t \in J_s$}\\
        &\le 2\sqrt{2\ln\lr*{\frac{4ST}{\delta}}} \sum_{s \in \calS} \sqrt{\abs{J_s}} \tag{since $\nu_s(t)$ is bijective}\\
        &\le 2\sqrt{2\ln\lr*{\frac{4ST}{\delta}}} \sum_{s \in \calS} \sqrt{M_T(s)} \tag{since $\abs{J_s} \le M_T(s)$}\\
        &\le 2\sqrt{2ST\ln\lr*{\frac{4ST}{\delta}}} \enspace. \tag{by Jensen's inequality}
    \end{align*}
\end{proof}

\upperboundmetaif*
\begin{proof}[Proof of \Cref{theorem:metaif}]
    By \Cref{eq:regret-metaif-to-dadaexp}, the regret $R_T$ can be bounded as
    \[
       R_T \le \hat{\calR}_T^{\calB} + \sum_{t=1}^T \e_{t}(S_t) \le \hat{\calR}_T^{\calB} + 7\sqrt{ST\ln\frac{4ST}{\delta}}
    \]
    with probability at least $1-\delta/2$, where the last inequality follows by \Cref{lem:sum-estimates-errors-bound}.
    From what we argued in \Cref{app:reductiontodada}, we can upper bound $\hat{\calR}_T^{\calB}$ using the high-probability regret bound of \dadaexp. 
    Notice that the delays incurred by \dadaexp via \metaif are those given when providing the estimates $\lr{\tilde{\theta}_t}_{t\in[T]}$.
    We denote these delays by $\tilde{d}_1, \ldots, \tilde{d}_T$, and the total delay perceived by \dadaexp is thus $\tilde{\calD}_T = \sum_{t=1}^T \tilde{d}_t$.
    Hence, from the improved bound for \dadaexp in \Cref{eq:improveddada-1}, we have that
    \[
        \hat\calR_T^{\calB} \leq 2\sqrt{2KT \lr*{3\ln(K) + \ln\lr{4/\delta}}} + 2\sqrt{\tilde\calD_T \lr*{3\ln(K) + \ln\lr{4/\delta}}} + \frac{\sigma_{\max} + 2}{2}\ln(4/\delta)
    \]
    holds with probability at least $1-\delta/2$.
    The combination of the above two inequalities, together with \Cref{lem:total-delay-bound}, concludes the proof.
\end{proof}

\subsection{Regret of \metaifswitch}\label{sec:metaifswitch-analysis}

\upperboundmetaifswitch*
\begin{proof}[Proof of \Cref{theorem:metaifswitch}]
    Let $t^* \in [T]$ be the last round before \metaifswitch switches from \dadaexp to \metaif, i.e., the last round that satisfies $\frakD_{t^*} C_{K,4\delta} \leq 49 ST \ln \frac{8ST}{\delta}$.
    Then, define $a^* \in \argmin_{a} \sum_{t=1}^T \theta(s_t(a))$.
    We may decompose regret as  
    \begin{align*}
        R_T &= \sum_{t = 1}^{t^*} \Bigl(\theta(S_t) - \theta(s_t(a^*))\Bigr) + \sum_{t = t^*+1}^{T} \Bigl(\theta(S_t) - \theta(s_t(a^*))\Bigr) \\
        &\le \underbrace{\sum_{t = 1}^{t^*} \theta(S_t) - \min_{a \in \calA} \sum_{t=1}^{t^*} \theta(s_t(a))}_{R_{t^*}} + \underbrace{\sum_{t = t^*+1}^T \theta(S_t) - \min_{a \in \calA} \sum_{t = t^*+1}^T \theta(s_t(a))}_{R_{t^*:T}} \enspace.
    \end{align*}
    The incurred delay until time $t^*$ is $\frakD_{t^*}$.
    Thus, from \Cref{eq:improveddada}, we get that the following bound
    \begin{align}
        R_{t^*} \leq 2\sqrt{2K t^* C_{K,2\delta}} + \sqrt{2 t^* \ln\frac{12}{\delta}} + 2\sqrt{\frakD_{t^*} C_{K,2\delta}} + \frac{\sigma_{\max}+2}{2}\ln\frac{6}{\delta}\label{eq:beforeswitch}
    \end{align}
    holds with probability at least $1-\delta/2$, where we recall that $C_{K,\delta} = 3\ln K + \ln(12/\delta)$.
    If our algorithm never switches, then $t^* = T$ and we get the bound in \eqref{eq:beforeswitch} for $R_T$. 
    Note that this is no greater than the upper bound in the statement as $\sqrt{\frakD_T C_{K,2\delta}} \le 7\sqrt{ST \ln(8ST/\delta)}$ by definition of $t^*$ in this case.
    
    Otherwise, we use the switching condition $\sqrt{\frakD_{t^*}C_{K,2\delta}} \leq 7\sqrt{ST \ln(8ST/\delta)}$ along with the fact that $\sqrt{t^*\ln(12/\delta)} \leq \sqrt{Kt^* C_{K,2\delta}}$ to get
    \begin{equation}
        R_{t^*} \leq 3\sqrt{2K t^* C_{K,2\delta}} + 14\sqrt{ST \ln\frac{8ST}{\delta}} + \frac{\sigma_{\max}+2}{2}\ln\frac{6}{\delta} \enspace. \label{eq:regret-T0}
    \end{equation}
    Furthermore, \cref{theorem:metaif} directly gives us an upper bound for $R_{t^*:T}$ since \metaifswitch runs \metaif for $t > t^*$ with the confidence parameter set to $\delta/2$.
    We just need to bound the total incurred delays of these rounds, namely $\tilde{\calD}_{t^*:T}$.
    Let $\sigma_{t}'$ be the outstanding observations for any round $t > t^*$ as perceived by the execution of \metaif starting after round $t^*$, that is, when considering only delays $(d_t)_{t > t^*}$.
    It is immediate to observe that $\sigma_t' \leq \sigma_{t}$ and thus $\max_{t > t^*}\sigma_{t}' \leq \max_{t > t^*}\sigma_{t}$.
    Moreover, from \Cref{lem:total-delay-bound} we have
    \[
        \tilde{\calD}_{t^*:T} \leq \calD_{\Phi'} \enspace,
    \]
    where $\Phi'$ denotes a set of $\min\lrc{T-t^*, 2S \sigma_{\max}'}$ rounds with the largest delays among $(d_t)_{t > t^*}$, with $\sigma_{\max}' = \max_{t > t^*}\sigma_t'$.
    So we have
    \[
        \calD_{\Phi'} \leq \calD_{\Phi}
    \]
    due to the fact that $\abs{\Phi'} = \min\lrc{T-t^*, 2S \sigma_{\max}'} \leq \min\lrc{T, 2S \sigma_{\max}} = \abs{\Phi}$.
    Therefore, from \Cref{theorem:metaif} we obtain
    \begin{equation}\label{eq:regret-T0:T}
        R_{t^*:T} \leq 2\sqrt{2K(T-t^*)C_{K,3\delta}} + 7\sqrt{ST \ln\frac{8ST}{\delta}} + 2\sqrt{\calD_{\Phi} C_{K,3\delta}} + \frac{\sigma_{\max}+2}{2}\ln\frac{8}{\delta}
    \end{equation}
    with probability at least $1-\delta/2$.
    We conclude the proof by combining \Cref{eq:regret-T0,eq:regret-T0:T} along with the fact that $\sqrt{t^*} + \sqrt{T-t^*} \leq \sqrt{2T}$ to get that the bound
    \[
    R_T \leq 6\sqrt{KTC_{K,2\delta}} + 3\min\lrc*{7\sqrt{ST \ln\frac{8ST}{\delta}}, \sqrt{\calD_T C_{K,2\delta}}} + 2\sqrt{\calD_{\Phi} C_{K,2\delta}} + (\sigma_{\max}+2)\ln\frac{8}{\delta}
    \]
    holds with probability at least $1-\delta$.
\end{proof}

\subsection{Expected Regret Analysis of \metaifswitch with \tsallisinf}\label{app:expectedregret}

\expectedupperbound*
\begin{proof}[Proof of \Cref{theorem:expectedregret}]
    We begin by studying of expected regret of \metaif and we then give a regret analysis of \metaifswitch.
    When running \metaif, we use the unbiased empirical mean estimators $(\hat\theta_t)_{t \in [T]}$ as the mean loss estimates, rather than the lower confidence bounds $(\tilde\theta_t)_{t \in [T]}$.
    The expected regret is defined as
    \[
        \bbE[R_T] = \sum_{t=1}^T \bbE\lrs{\theta(S_t)} - \sum_{t=1}^T \theta(s_t(a^*))\enspace,
    \]
    where $a^* = \min_{a \in \calA} \sum_{t = 1}^T \theta(s_t(a))$. 
    Here we use a version of \tsallisinf that is tailored for the delayed bandits problem \citep{zimmert20}, which guarantees a bound in expectation on the regret
    \[
        \hat{\calR}^{\mathrm{Tsallis}}_T(a) = \sum_{t=1}^T \hat{\theta}_t(S_t) -  \sum_{t=1}^T \hat{\theta}_t(s_t(a))
    \]
    against any fixed action $a \in \calA$, using the loss estimates $\lrc{\hat{\theta}_t}_{t \in [T]}$.
    Observe that this regret is defined in terms of our estimates, as required in our case.
    By \citet[Theorem~1]{zimmert20}, \tsallisinf guarantees that its expected regret is
    \begin{align*}
        \E{\hat{\calR}_T^{\mathrm{Tsallis}}(a^*)} = \bbE\lrs*{\sum_{t=1}^T \hat{\theta}_t(S_t) -  \sum_{t=1}^T \hat{\theta}_t(s_t(a^*))}
        \leq 4\sqrt{KT} + \sqrt{8\tilde\calD_T \ln K}
        \leq 4\sqrt{KT} + \sqrt{8\calD_{\Phi} \ln K} \enspace,
    \end{align*}
    where the last inequality uses \Cref{lem:total-delay-bound}.
    Then, we can focus on our notion of regret and use the above regret bound to obtain that
    \begin{align}
        \bbE[R_T] &= \bbE\lrs*{R_T - \hat{\calR}_T^{\mathrm{Tsallis}}(a^*)} + \E{\hat{\calR}_T^{\mathrm{Tsallis}}(a^*)} \notag\\
        &= \bbE\Biggl[\sum_{t=1}^T \bigl(\theta(S_t) - \hat\theta_t(S_t)\bigr)\Biggr] + \bbE\lrs*{\sum_{t=1}^T \bigl(\hat\theta_t(s_t(a^*)) - \theta(s_t(a^*))\bigr)} + \E{\hat{\calR}_T^{\mathrm{Tsallis}}(a^*)} \notag\\
        &\leq \bbE\Biggl[\underbrace{\sum_{t=1}^T \bigl(\theta(S_t) - \hat\theta_t(S_t)\bigr)}_{\Delta}\Biggr] + \bbE\lrs*{\sum_{t=1}^T \bigl(\hat\theta_t(s_t(a^*)) - \theta(s_t(a^*))\bigr)} + 4\sqrt{KT} + \sqrt{8\calD_{\Phi} \ln K} \enspace. \label{eq:regretdecomp}
    \end{align}
    
    We know that our mean estimator is unbiased.
    Therefore, we have that $\bbE\lrs{\hat\theta_t(s_t(a^*))} = \theta(s_t(a^*))$ for any $t \in [T]$, meaning that the second term in the right-hand side of \eqref{eq:regretdecomp} is equal to zero.

    On the other hand, we can apply \Cref{lem:concentration-unbiased-estimates} to get the following bound for $\Delta$ that holds with probability at least $1-\delta/2$ for any $\delta \in (0,1)$:
    \begin{equation} \label{eq:hatepsilonbound}
        \Delta \leq \min\lrc*{\frac12 \sum_{t=1}^T \e_t(S_t), T} \enspace,
    \end{equation}
    where we recall that $\e_t(s) =  \sqrt{\frac{2}{N'_t(s)} \ln\frac{4ST}{\delta}}$.
    In particular, the inequality $\Delta \le T$ is true in general.
    By \Cref{lem:sum-estimates-errors-bound}, we can bound the right-hand side of \eqref{eq:hatepsilonbound} as
    \[
        \frac12\sum_{t=1}^T \e_t(S_t) \leq \frac{7}{2} \sqrt{ST \ln\frac{4ST}{\delta}}
    \]
    when conditioning on the event as in the statement of \Cref{lem:concentration-unbiased-estimates}. If we denote such an event as $\calE$, we have that $\pr{\overline{\calE}} \le \delta/2$ and that $\E{\Delta \mid \calE} \le \frac{7}{2}\sqrt{ST\ln\lr{4ST/\delta}}$.
    As a consequence, we notice that
    \[
        \E{\Delta} = \E{\Delta \mid \calE}\pr{\calE} + \E{\Delta \mid \overline{\calE}}\pr{\overline{\calE}}
        \le \frac{7}{2}\sqrt{ST \ln\frac{4ST}{\delta}} + \frac{\delta}{2}T \le 5\sqrt{ST \ln\lr{2ST}} + 1
    \]
    where in the last inequality we set $\delta = 2/T$.
    Since we assume that $S \ge 2$, we can easily observe that $\E{\Delta} \le 6\sqrt{ST\ln\lr{2ST}}$.
    Plugging this into \Cref{eq:regretdecomp} gives us
    \begin{equation}\label{eq:expectedregret-metaif}
        \E{R_T} \leq 4\sqrt{KT} + \sqrt{8\calD_{\Phi} \ln K} + 6\sqrt{ST \ln(2ST)} \enspace.
    \end{equation}
    
    At this point, we can proceed to the proof of the overall bound on the expected regret of \metaifswitch.
    The behaviour of \metaifswitch follows the same principle as before, but the switching condition is different:
    \[
        \sqrt{8\frakD_t \ln K} > 6\sqrt{ST\ln(2ST)}\enspace.
    \]
    Similar to the analysis of \metaifswitch in \Cref{sec:metaifswitch-analysis}, we decompose the regret into
    \[
        \bbE[R_T] \leq \underbrace{\sum_{t = 1}^{t^*}\bbE\lrs{\theta(S_t)} - \min_{a \in \calA} \sum_{t=1}^{t^*} \theta(s_t(a))}_{R_{t^*}} + \underbrace{\sum_{t = t^*+1}^T \bbE\lrs{\theta(S_t)} - \min_{a \in \calA} \sum_{t = t^*+1}^T \theta(s_t(a))}_{R_{t^*:T}} \enspace,
    \]
    where $t^*$ is the last round satisfying $\sqrt{8\frakD_{t^*}} \le 6\sqrt{ST\ln\lr{2ST}}$.
    Then, we have 
    \begin{equation}\label{eq:expected-beforeswitch}
        \bbE[R_{t^*}] \leq 4\sqrt{Kt^*} + \sqrt{8\frakD_{t^*}\ln K}\enspace.
    \end{equation}
    If $t^* = T$ then $R_{t^*} = R_T$ and we get the bound in \eqref{eq:expected-beforeswitch}, where we note that $\sqrt{8\frakD_T \ln K} \le 6\sqrt{ST\ln\lr{2ST}}$ by definition of $t^*$ in this case, and we can replace $\frakD_{T}$ by $\calD_T$.
    Otherwise, $t^* < T$ and we can apply the bound for \metaif from \eqref{eq:expectedregret-metaif}, along with the fact that the total incurred delay after round $t^*$ is upper bounded by $\calD_{\Phi}$, in order to derive an upper bound for $\bbE[R_{t^*:T}]$ that is
    \begin{equation}\label{eq:expected-afterswitch}
        \bbE[R_{t^*:T}] \leq 4\sqrt{K(T-t^*)} + \sqrt{8\calD_{\Phi} \ln K} + 6\sqrt{ST \ln(2ST)} \enspace.
    \end{equation}
    Finally, if we use the fact that $\sqrt{8\frakD_{t^*}} \leq 6\sqrt{ST\ln(2ST)}$ (by definition of $t^*$) in \eqref{eq:expected-beforeswitch}, and combine it with \eqref{eq:expected-afterswitch}, we conclude that
    \[
        \bbE[R_T] \leq 4\sqrt{2KT} + \sqrt{8\calD_{\Phi} \ln K} + 2\min\lrc*{6\sqrt{ST \ln(2ST)}, \sqrt{8\calD_T \ln K}} \enspace,
    \]
    where we also used the fact that $\sqrt{t^*} + \sqrt{T-t^*} \le \sqrt{2T}$.
\end{proof}

\section{Proofs for the Lower Bounds}\label{app:lowerbounds}

\rethmstateslbwithdependence*
\begin{proof}[Proof of \Cref{th:STlowerB}]
    Assume without loss of generality that $K = 2$ and let $\mathcal{S} = \lrc{h_1, \dots, h_S}$ be the finite set of possible states.
    Let $S' = \floor{\min\{S/2, d\}}$ and let $I_1, \dots, I_T$ be the actions chosen by the considered algorithm.
    Split the $T$ time steps into $m = \floor{T/S'}$ blocks  $B_1, \dots, B_m$ of equal size $S'$, eventually leaving $\le S'-1$ extra time steps.
    We assume with no loss of generality that the last step corresponds to the end of the $m$-th block.
    The feedback formed by the losses of the actions chosen by the algorithm in a certain block is received only after the last time step of the same block since $S \le 2d$.  
    Define $b_i = (i-1)S'+1$ for all $i \in [m]$.
    We assume that the learner receives \emph{all} the realized losses $\ell_t(s_t(A))$ for all $t \in B_i$ and all $A \in \{1,2\}$ at the end of each block, which means that we are in a full information setting, as this only helps the algorithm.

    Now, we define a specific sequence of assignments from actions to states, and construct losses so that the expected regret becomes sufficiently large.
    Let $s_t(A) = h_{2(t-b_i)+A}$ for all $t \in B_i$, all $i \in [m]$ and all $A \in \lrc{1,2}$; this means that, for the first time step of any block, actions 1 and 2 will be assigned to states $h_1$ and $h_2$ respectively, then to $h_3$ and $h_4$ respectively in the next time step of the same block, and so on.
    Let $\e = \frac14\sqrt{\frac{S'}{2T\ln\lr{4/3}}} \in \lrs{0,\frac14}$ and let $\theta^{(A)} \in \mathbb{R}^2$ be a vector of mean losses such that $\theta^{(A)}_i = \frac12 - \mathbb{I}\lrc{i=A}\e$, for each $A \in \lrc{1,2}$.
    We simplify the notation with $\bbE_A\lrs{\cdot} = \E{\cdot \,\middle|\, \theta^{(A)}}$ and $\bbP_A\lr{\cdot} = \pr{\cdot \,\middle|\, \theta^{(A)}}$, where the conditioning on $\theta^{(A)}$ means that we sample losses for each state assigned to $i \in \lrc{1,2}$ such that they are Bernoulli random variables with mean $\theta^{(A)}_i$.
    In particular, conditioning on $\theta^{(A)}$, we sample independent Bernoulli random variables $X^i_1, \dots, X^i_m$ with mean $\theta^{(A)}_i$, one for each block, for $i \in \lrc{1,2}$.
    Then, the losses are defined as $\ell_t(s_t(i)) = X^i_{j}$ for each $t \in B_j$ and each $j \in [m]$.

    We can now proceed to show a lower bound for the expected pseudo-regret.
    Let $T_i$ be the number of times the learner chooses action $i$ over all $T$ time steps.
    The expected pseudo-regret over the two instances determined by $\theta^{(k)}$ for $k \in \lrc{1,2}$ adds up to
    \[
        \bbE_1\lrs{R_T} + \bbE_2\lrs{R_T} = \e\lr{2T - \bbE_1\lrs{T_1} - \bbE_2\lrs{T_2}} \enspace.
    \]
    Following the standard analysis, we show that the difference $\bbE_2\lrs{T_2} - \bbE_1\lrs{T_2}$ is such that
    \[
        \bbE_2\lrs{T_2} - \bbE_1\lrs{T_2} \le T\cdot \dtv(\bbP_2, \bbP_1) \le T\sqrt{\frac12 \KL{\bbP_1 \,\|\, \bbP_2}} \enspace,
    \]
    where the last step follows by Pinsker's inequality.

    Let $\lambda_i = \lrc{(I_t, \ell_t(S_t(1)), \ell_t(S_t(2))) \mid t \in B_i}$ be the feedback set known to the learner by the end of block $B_i$, and let $\lambda^i = (\lambda_1, \dots, \lambda_i)$ be the tuple of all feedback sets up to the end of block $B_i$.
    Denote by $\bbP_{k,i}\lr{\cdot}$ the probability measure of feedback tuples $\lambda^i$ conditioned on $\theta^{(A)}$.
    By the chain rule for the relative entropy, we can observe that
    \begin{align*}
        \KL{\bbP_1 \,\|\, \bbP_2} &= \sum_{i=1}^m \sum_{\lambda^{i-1}} \bbP_1\lr{\lambda^{i-1}} \KL{\bbP_{1,i}\lr{\cdot \mid \lambda^{i-1}} \|\, \bbP_{2,i}\lr{\cdot \mid \lambda^{i-1}}} \\
        &\le \sum_{i=1}^m \sum_{\lambda^{i-1}} \bbP_1\lr{\lambda^{i-1}} 16\e^2 \ln\lr{4/3} \\
        &= 16m\e^2 \ln\lr{4/3} \enspace,
    \end{align*}
    where we used the fact that each relative entropy $\KL{\bbP_{1,i}\lr{\cdot \mid \lambda^{i-1}} \,\|\, \bbP_{2,i}\lr{\cdot \mid \lambda^{i-1}}}$ corresponds to the sum of the relative entropy between two Bernoulli distributions with means $1/2$ and $1/2-\e$ and that between Bernoulli distributions with means $1/2-\e$ and $1/2$, respectively, which is upper bounded by $16\e^2\ln\lr{4/3}$ for $\e \in [0, 1/4]$.
    This follows by an application of the chain rule for the relative entropy, as well as from the fact that the distribution of $I_t$ is the same under both $\bbP_{1,i}\lr{\cdot \mid \lambda^{i-1}}$ and $\bbP_{2,i}\lr{\cdot \mid \lambda^{i-1}}$, for all $t \in B_i$ and any $\lambda^{i-1}$.
    Therefore, we have that
    \begin{align*}
        \bbE_2\lrs{T_2} - \bbE_1\lrs{T_2} \le 2\e T\sqrt{2m\ln\lr{4/3}}
    \end{align*}
    which also implies that
    \[
        \bbE_1\lrs{R_T} + \bbE_2\lrs{R_T} \ge \e T\lr*{1 - 2\e\sqrt{2\frac{T}{S'}\ln\lr{4/3}}}
        = \frac{\e T}2 \ge \frac18\sqrt{\frac{\floor{S/2}T}{2\ln(4/3)}} \ge \frac18\sqrt{\frac{ST}{6\ln\lr{4/3}}} \enspace,
    \]
    where we used the facts that $m \le T/S'$ and that $\floor{S/2} \ge S/3$ for any integer $S \ge 2$.
    This means that the expected pseudo-regret of the learner has to be $\frac1{16}\sqrt{\frac{ST}{6\ln\lr{4/3}}}$ at least in one of the two instances. Now, for $S > 2 d$ we use the same construction, but now we only use $2d$ states, which leads to the promised $\Omega(\sqrt{\min\{S, d\}T})$ lower bound. 
\end{proof}

\rethmlbfixeddelay*
\begin{proof}[Proof of \Cref{thm:lb-fixed-delay}]
    Let $S' = \min\lrc{\floor{\frac{S}{2}}, \floor{\frac{T}{d+1}}} \ge 1$. 
    We consider the first $(d+1)S'$ rounds of the game and divide them into $S'$ blocks $B_1, \ldots, B_{S'}$ of same length $d+1$.
    In this way, we ensure that the feedback for any time step in some block is revealed to the learner only after its final round.

    Without loss of generality, we can assume that the learner observes all the losses of one block immediately after its last time step; this only helps the learner since they would observe only the incurred losses at possibly later rounds otherwise.
    We can further simplify the problem by assuming that losses are deterministic functions of the states, i.e., $\ell_t \equiv \theta$ for every round $t$.
    This also means that the problem turns into an easier, full-information version of our problem with deterministic losses.
    Now, let the adversary choose the action-state mappings such that for each block index $i$ and each action $a \in \mathcal{A}$,
    $S_{t}(a) = S_{t'}(a) \in \lrc{s_{2i-1}, s_{2i}}$ for all $t,t' \in B_i$.
    Furthermore, we assume that the losses are chosen such that $\theta(s_{2i-1}) \in \{0, 1\}$ and $\theta(s_{2i}) = 1 - \theta(s_{2i-1})$ for all $i \in [S']$.
    In this construction, the learner cannot obtain any useful information from the states of a block because of the delays.
    Moreover, the states observed in one block are not observed again in the other blocks.
    
    It thus suffices to prove a lower bound for a standard full information game with $S'$ rounds and loss range $[0, d+1]$.
    Hence, we can conclude that the expected regret of any algorithm has to be
    \[
        \E{R_T} = \Omega\lr*{(d+1)\sqrt{S'}} = \Omega\lr*{\min\lrc*{(d+1)\sqrt{S}, \sqrt{(d+1)T}}} \enspace.
    \]
\end{proof}

\section{Action-State Mappings and Loss Means Used in the Experiments}
\label{app:mappings}
\Cref{table:stochastic_action_state_map} and \Cref{table:adversarial_action_state_map} describe the instances used to generate the data for the experiments of \Cref{sec:experiments}.

\begin{table}[h!]
\centering
\begin{tabular}{|c|c|c|c|}
\hline
Mean loss & $s=1$ & $s=2$ & $s=3$ \\
 \hline
$\theta(s)$  & 0.2    &0.4&   0.8\\
 \hline
\hline
Mapping & $P(1|a)$ & $P(2|a)$ & $P(3|a)$\\
\hline
    $a=1$ & 0.8 & 0.1 & 0.1 \\
    $a=2$ & 0.4 & 0.5 & 0.1 \\
    $a=3$ & 0.3 & 0.7 & 0.0 \\
    $a=4$ & 0.5 & 0.3 & 0.2 \\
\hline
\end{tabular}
\caption{Mean losses and stochastic action-state mapping for Experiment~1 in \Cref{sec:experiments}.}
\label{table:stochastic_action_state_map}
\end{table}

\begin{table}[h!]
\centering
    \begin{tabular}{|c|c|c|c|}
    \hline
    Mean loss & $s=1$ & $s=2$ & $s=3$ \\
     \hline
$\theta(s)$ & 0    &1&   1\\
     \hline
    \multicolumn{4}{c}{Environment 1}\\
    \hline
Mapping & $P(1|a)$ & $P(2|a)$ & $P(3|a)$\\
\hline
        $a=1$ & 0.06 & 0.47 & 0.47 \\
        $a=2$ & 0 & 0.50 & 0.50 \\
        $a=3$ & 0 & 0.50 & 0.50 \\
        $a=4$ & 0 & 0.50 & 0.50 \\
    \hline
    \multicolumn{4}{c}{Environment 2}\\
    \hline
Mapping & $P(1|a)$ & $P(2|a)$ & $P(3|a)$\\
\hline
        $a=1$ & 1 & 0 & 0 \\
        $a=2$ & 0.94 & 0.03 & 0.03 \\
        $a=3$ & 0.94 & 0.03 & 0.03 \\
        $a=4$ & 0.94 & 0.03 & 0.03 \\
\hline    
    \end{tabular}
    \caption{Mean losses and stochastic action-state mappings for Experiment~2 in \Cref{sec:experiments}.}
    \label{table:adversarial_action_state_map}
\end{table}


\end{document}